\documentclass{article}

\usepackage{microtype}
\usepackage{graphicx}
\usepackage{subcaption}
\usepackage{booktabs} 

\usepackage{hyperref}

\usepackage[preprint]{icml2026}

\usepackage{amsmath}
\usepackage{amssymb}
\usepackage{mathtools}
\usepackage{amsthm}

\usepackage[capitalize,noabbrev]{cleveref}

\theoremstyle{plain}
\newtheorem{theorem}{Theorem}[section]

\newtheorem{lemma}[theorem]{Lemma}

\theoremstyle{definition}
\newtheorem{definition}[theorem]{Definition}

\theoremstyle{remark}

\newcommand{\mlp}{\mathsf{mlp}}
\newcommand{\attn}{\mathsf{attn}}
\newcommand{\tran}{\mathsf{tran}}
\newcommand{\eva}{\mathsf{Eva}}
\newcommand{\PerCom}{\mathsf{PerCom}}
\newcommand{\SeqComp}{\mathsf{FuncComp}}
\newcommand{\poly}{\mathrm{poly}}
\newcommand{\R}{\mathbb{R}}
\newcommand{\wt}{\widetilde}

\usepackage[breakable]{tcolorbox}
\newtcolorbox{construction}[2][]
{
    breakable,
    colframe = gray!50,
    colback  = gray!10,
    coltitle = gray!10!black,
    before skip = 10pt,
    after skip = 10pt,
    title    = \textbf{#2},
    #1,
}
\usepackage[textsize=tiny]{todonotes}

\icmltitlerunning{A Provable Expressiveness Hierarchy in Hybrid Linear-Full Attention}

\begin{document}

\twocolumn[
  \icmltitle{A Provable Expressiveness Hierarchy in Hybrid Linear-Full Attention}



  \icmlsetsymbol{equal}{*}

  \begin{icmlauthorlist}
    \icmlauthor{Xiaowei Ye}{equal,X}
    \icmlauthor{Xiaoyu He}{equal,comp}
    \icmlauthor{Chao Liao}{comp}
    \icmlauthor{Chen Wu}{comp}
    \icmlauthor{Pinyan Lu}{comp,sch}
  \end{icmlauthorlist}

  \icmlaffiliation{comp}{Taylor Lab, Huawei}
  \icmlaffiliation{X}{\'Ecole Polytechnique, Palaiseau, France}
  \icmlaffiliation{sch}{Key Laboratory of Interdisciplinary Research of Computation and Economics, Shanghai University of Finance and
Economics, Shanghai, China}

  \icmlcorrespondingauthor{Xiaoyu He}{hexiaoyu12@huawei.com}
  \icmlcorrespondingauthor{Pinyan Lu}{lu.pinyan@mail.shufe.edu.cn}

  \icmlkeywords{Machine Learning}

  \vskip 0.3in
]



\printAffiliationsAndNotice{\icmlEqualContribution; work completed during internship at Huawei}

\begin{abstract}

Transformers serve as the foundation of most modern large language models. 
To mitigate the quadratic complexity of standard full attention, various efficient attention mechanisms, such as linear and hybrid attention, have been developed. 
A fundamental gap remains: their expressive power relative to full attention lacks a rigorous theoretical characterization. 
In this work, we theoretically characterize the performance differences among these attention mechanisms. 
Our theory applies to all linear attention variants that can be formulated as a recurrence, including Mamba, DeltaNet, etc. 
Specifically, we establish an expressiveness hierarchy: for the sequential function composition-a multi-step reasoning task that must occur within a model’s forward pass, an $(L+1)$-layer full attention network is sufficient, whereas any hybrid network interleaving $L-1$ layers of full attention with a substantially larger number ($2^{3L^2}$) of linear attention layers cannot solve it. 
This result demonstrates a clear separation in expressive power between the two types of attention. 
Our work provides the first provable separation between hybrid attention and standard full attention, offering a theoretical perspective for understanding the fundamental capabilities and limitations of different attention mechanisms. 

\end{abstract}

\section{Introduction}
\label{sec:intro}

The Transformer architecture \cite{vaswani2017attention} serves as the backbone of modern large language models (LLMs), exemplified by numerous recent models \cite{achiam2023gpt,deepseek-v3,minimax-m2,team2025kimi,yang2025qwen3}. 
Its core component is the \emph{self-attention unit}: in the standard full attention mechanism, it models interactions among input elements as inner-products between low-dimensional embeddings and calculates the output via a softmax weighted sum. 

Despite the success of standard full attention, its quadratic computation and linear-memory complexity remain significant bottlenecks. 
Hence, various new attention mechanisms have been proposed, including linear attention \cite{pmlr-v119-katharopoulos20a,kasai-etal-2021-finetuning,schlag21linear,peng2021random,yang2024parallelizing} used in models such as Mamba \cite{gu2024mamba}, Minimax-M1 \cite{minimax-m1}, RWKV \cite{peng2025rwkv} and Gated DeltaNet \cite{yang2025gated}; linear-full hybrid attention used in models such as Hunyuan-TurboS \cite{hunyuan}, Qwen3-Next \cite{qewn}, and Kimi Linear \cite{kimilinear}); as well as log-linear attention \cite{guo2025log} and sparse attention mechanisms \cite{guo-etal-2019-star,qiu-etal-2020-blockwise,zaheer2020bigbird,Beltagy2020Longformer,guo-etal-2022-longt5,yuan-etal-2025-native,lu2025moba}. 
A natural question arises: \emph{Can these attention mechanisms perform better than full attention?} 
This paper aims to answer this question through a theoretical comparison of these attention mechanisms with full attention. 

We first analyze the Transformer with hybrid attention mechanisms on the sequential function composition task introduced by \cite{chen2025theoretical}.
Our results show that even when the number of linear attention layers grows exponentially relative to the number of full attention layers, the performance gain remains marginal. 

Moreover, we examine sparse attention mechanisms. 
We establish a hardness result for the $2$-Sum task under a single-layer sparse attention setup, providing the first provable separation between sparse attention and full attention. 


\subsection{Our result}


Consider a Transformer with $H$ attention heads, head dimension $d$, precision $p$, and prompt length $n$. We establish the following lower bound for a class of hybrid attention mechanisms.

\begin{theorem}
\label{thm:main}
For any $L$, an $(L-1,2^{3L^2},\cdots,2^{3L^2})$-hybrid Transformer cannot solve $L$-sequential function composition whenever 
$Hdp \leq n^{2^{-4L-2}}$. 
\end{theorem}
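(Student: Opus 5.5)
The plan is to adapt the Chen–Peng–Wu style lower bound for $L$-sequential function composition. In that setting, an $L$-layer full-attention Transformer with $Hdp\le n^{c^{-L}}$ cannot solve $L$-step composition, and the proof goes through a multi-party pointer-chasing communication argument, or equivalently an indistinguishability argument over random input distributions. The new ingredient is to absorb the linear-attention layers into the full-attention layers at a controlled cost.

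The key structural observation is that any recurrent linear attention layer (Mamba, DeltaNet, etc.) at position $i$ depends on the prefix only through a state of $O(Hdp)$ bits, or a polynomial in that, depending on the state size. So a block of $m=2^{3L^2}$ linear layers sitting between two full-attention layers can be simulated by a streaming or one-way communication protocol in which each position's output depends on its own token and a short message summarizing the prefix. The protocol has $m$ rounds, each sending $\poly(Hdp)$ bits.

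I would then run a round-by-round restriction argument, one stage per full-attention layer, for $L-1$ stages. At each stage I would fix the randomness of the composition instance on a large subset of the function domain, so that after the stage every position's representation depends only on a small number of "revealed" function values. For full attention, this is the standard Chen et al. step: the softmax-weighted sum at $p$-bit precision is determined by $\poly(Hdp)$ bits of aggregate information per query, so a query learns only $n^{\epsilon}$ useful pointer values.

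For a block of linear layers, I would argue that $m$ sequential streaming passes with state $s=Hdp$ reveal at most $s\cdot m$ bits about the hidden functions. Equivalently, I would view the block as a communication protocol between a prefix and a suffix. This is where exponential-in-$L^2$ depth is affordable: the budget per stage is $n^{2^{-4L-2}}$ against an exponent shrinking geometrically, and $2^{3L^2}$ is only a constant factor in $n$-exponents. That constant is swallowed because the composition task gives polynomial slack, with $n^{2^{-O(L)}}$ versus needing $n^{\Omega(1)}$ at the final step.

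After $L-1$ full-attention stages plus the interleaved linear blocks, the final position will have learned at most the first $L-1$ compositions, plus $o(n)$ bits about the last function's relevant entry. A final counting or entropy argument then shows it cannot output $f_L(\cdots)$ with better than small advantage.

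The main obstacle I expect is the linear-layer step in the middle of the argument. Streaming layers can carry information from the end of the sequence only forward, but combined with earlier full attention they could in principle chase one pointer per layer, via a position that already holds $f_1(x)$ looking up $f_2$ through a recurrence. I need to show that a recurrent state cannot perform random access to the relevant entry. My first attempt would be a one-way communication lower bound for the INDEX problem: each linear block is a one-way protocol from earlier to later tokens, and index/pointer lookup requires $\Omega(n)$ one-way bits. Hence $m\cdot Hdp\ll n$ ensures no extra pointer step is gained, and the per-stage exponent loss stays within the $2^{-4}$ factor.
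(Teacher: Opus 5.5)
Your cost accounting matches the paper's. Each linear layer becomes a chain of one-way messages of $Hd(d+1)p$ bits, running from the block of $z_L$ down to $w$, and the $2^{3L^2}$ factor is paid for out of pigeonhole slack. The gap is that the proposal never gives the mechanism that lets full and linear layers compose, and neither of the two steps you sketch supplies it.

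The full-attention step is not a restriction argument. After a softmax layer, what is small is the number of bits a receiving token obtains (a $2Hdp$-bit numerator/denominator aggregate). The number of function values it depends on is not small: a uniform head averages an entire block. You give no way to make each position depend on few ``revealed'' values while keeping the residual instance hard. Chen et al.\ and the paper instead keep large \emph{sets} of inputs and fix transcripts rather than inputs. For every assignment $\wt z_{<\ell}\in Z_{<\ell}$ to players $[-1:\ell-1]$, the transcripts into those players are fixed by pigeonhole. This gives an indistinguishable decomposition $(R_{\ge\ell},Z_{<\ell})$ with $|R_{\ge\ell}|\ge|A_L|\cdots|A_\ell|/\Delta_\ell$ and $|\mathcal{I}_{\ell-1}(Z_{<\ell})|\ge\Theta_{\ell-1}$ (Lemma~\ref{lem:main}), built by induction on $\ell$ via rectangle extraction (Lemma~\ref{lem:joint-set}).

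Your proposed fix for the ``main obstacle,'' a one-way INDEX bound for each linear block, does not compose either. After a block, later positions hold prefix-dependent bits that become queries of the next softmax layer. A two-party bound for the block in isolation says nothing about what those bits enable across the interleavings. The missing idea is to put the linear messages into the same invariant:
\begin{itemize}
\item The linear message from player $i+1$ to player $i$ depends only on $X_{i+1}$, hence only on $z_{\ge i+1}$.
\item So at stage $\ell$ it suffices to fix the messages from player $\ell$ to player $\ell-1$. This is a \emph{single} string of $Hd(d+1)p(a_1+\cdots+a_\ell)$ bits, not indexed by $Z_{<\ell}$.
\item A downward induction over $r=\ell-1,\dots,-1$ (Lemma~\ref{lem:key-observation}) then shows that the information states of players $[-1:\ell-1]$ do not depend on $z_\ell\in Z_\ell$. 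Hence neither do all downstream linear messages nor the next epoch's soft transcripts to those players.
\end{itemize}
This is why many linear layers are cheap: they cost an additive $2^{3L^2}\poly(Hdp)$ bits. Soft transcripts, by contrast, cost bits multiplied by the number of receiver-input assignments.

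Finally, your condition $m\cdot Hdp\ll n$ (with $m=2^{3L^2}$) is the wrong comparison. The additive loss must fit inside the per-stage budget $\sqrt{K}(x_0\cdots x_{\ell-1})(n_1\cdots n_{L-1})$, which is polynomially smaller than $n$. The paper ensures this by building $8^{2L^2}$ into $K=(HdpL)^8 8^{2L^2}$, and that is what inflates the size bound to $n\le(Hdp)^{4\cdot 16^L}$.
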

Our analysis covers a broad class of linear attention mechanisms that admit a recurrent formulation—including Mamba \cite{gu2024mamba}, Minimax-M1 \cite{minimax-m1}, RWKV \cite{peng2025rwkv} and Gated DeltaNet \cite{yang2025gated}—establishing a general framework for comparison.

The formal definitions of an $(L,a_1,\cdots,a_L)$-hybrid transformer and $L$-sequential function composition are given in Definitions  \ref{def:hybrid} and \ref{def:composition}. 
Together with the findings in \cite{chen2025theoretical}, our result implies that incorporating linear attention layers does not yield substantial performance gains. A detailed comparison is presented in Table \ref{tab:hybrid-comparison}.

\begin{table}[ht]
\centering
\caption{Complexity of $L$-$\SeqComp$ for different architecture.}
\begin{tabular}{|c|c|}
\hline
\textbf{Model/Task} & \textbf{$L$-SeqCom}  \\ 
\hline
Full - $L$ layers & $\Omega(\text{poly } n)$ \cite{chen2025theoretical}\\ 
\hline
Full - $L+1$ layers & $O(\text{poly} \log n)$ \cite{chen2025theoretical} \\ 
\hline
\parbox{2.5cm}{Hybrid -\\ $(L-1,2^{3L^2})$ \\ layers} & $\Omega(\text{poly }n)$ (Theorem \ref{thm:main}) \\ 
\hline
\end{tabular}
\label{tab:hybrid-comparison}
\end{table}


We give some remarks on the $L$-sequential function composition task, which we discuss in this paper. This task formally captures the essence of multi-step reasoning (e.g., multi-hop retrieval) that must occur within a model’s forward pass, where the solution requires composing functions sequentially, with the output of one step defining the input context for the next. Our theoretical instantiation employs carefully constrained “retrieval scopes” at each step to enable rigorous analysis. This task is of a theoretical nature, and the implications of our results in the real world demand verification of practice. 

We also analyze sparse attention mechanisms. 

\begin{theorem}\label{thm:main-sparse}
    Any single-layer $(B,k)$-sparse attention solving the $2$-Sum must satisfy $Hdp=\Omega(B\log n)$, while a single-layer full attention can solve it with $H=1$, $d = 3$, and $p = \log n$. 
\end{theorem}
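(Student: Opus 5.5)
We need to guess the definitions. For the 2-Sum task, the input is $x_1,\dots,x_n$ and the question is whether there exist $i\neq j$ with $x_i+x_j=0$ (or equal to a target). For $(B,k)$-sparse attention, each query attends only to $k$ selected blocks of size $B$, or similar. The theorem has two directions, an upper bound by construction and a lower bound by communication complexity; we handle them in that order.

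\textbf{Upper bound.} For the construction we follow the standard trick for full attention. We embed token $x_i$ as a key $(2x_i, -x_i^2, 1)$-style vector so that, for the final query carrying the target value $y=-x_n$ (or an appropriate query per position), the score satisfies $\langle q,k_j\rangle = -(x_j-y)^2$ up to scaling. With $p=\log n$ bits of precision and a large temperature, softmax then concentrates on the matching index. The value vector returns an indicator of whether $x_j=y$, and this gives $H=1$ and $d=3$. If the task asks whether any pair sums to zero, we let each position $i$ query for $-x_i$ and read the answer off the last position. We use a causal mask, so the last token must detect a match among earlier tokens, and this is what the standard setup of the 2-Sum task with a distinguished target token supports.

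\textbf{Lower bound.} We reduce from a communication problem. Split the sequence so that Alice holds the block(s) the final query is forced to consider, or more generally the tokens outside the attended region. The key point about $(B,k)$-sparse attention is that a query aggregates information from its selected blocks only through block-level summaries or attention outputs of size $Hdp$ bits per block. We partition the input into $n/B$ blocks and give Bob the query token together with one block, and Alice the rest. Simulating the layer, Alice sends the attention-relevant summary: the softmax numerator and denominator restricted to her tokens, which takes $O(Hdp)$ bits per head. Alternatively, we use the fact that block selection depends only on compressed block representations, so that Alice's message must encode enough to decide membership. The goal is to embed a set-disjointness or indexing instance of size $\Omega(B\log n)$ bits. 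Concretely, Bob holds a block of $B$ values, each of $\log n$ bits, and Alice holds a single value; a 2-Sum solution decides whether $-a$ lies in Bob's block, which is membership/INDEX over a universe encoded by $B\log n$ bits. Sparse attention forces the summary passed from within a block to the query to have size $Hdp$. The one-way communication lower bound of $\Omega(B\log n)$ for this problem, obtained via a counting argument over sets of $B$ values in $[n^c]$, then gives $Hdp=\Omega(B\log n)$.

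\textbf{Main obstacle.} The hard step is formalizing that sparse attention compresses each block into $Hdp$ bits that are independent of the query. We expect this to come from the definition, where the query attends to block summaries (e.g., mean-pooled keys) or selects $k$ blocks by scoring summaries. We would first try showing that selection plus attention within a $B$-sized block can be simulated with Bob's message being the block's $Hdp$-bit representation, and then apply a counting argument that there are $\binom{n^c}{B}$ possible blocks but only $2^{O(Hdp)}$ messages.
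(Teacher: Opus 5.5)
\textbf{Gap in the lower bound.} Your reduction assumes a block reaches the query only through its $Hdp$-bit compressed token, and you plan to show that ``selection plus attention within a $B$-sized block can be simulated with Bob's message being the block's $Hdp$-bit representation.'' That is false for $(B,k)$-sparse attention. The query reads its $k$ selected blocks in full. So if Bob holds the one informative block, the selection function simply picks it, and nothing forces $B\log n$ bits through an $Hdp$-bit bottleneck. A telltale sign is that your argument never uses $k<n/B$. Yet the bound cannot hold when $k\ge n/B$: then every block is selected and, with $\lambda=0$, the layer is ordinary full attention, which solves 2-Sum with $O(\log n)$ bits.

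\textbf{The missing idea.} You need an adversary that neutralizes selection, which is how the paper argues. Suppose the compression map $f$ collides on two $B$-tuples $a,b$ whose underlying sets differ, and fill each of the $n/B$ blocks with either $a$ or $b$. All compressed tokens coincide, so the selection scores and $y_i^{\mathrm{compress}}$ are the same for every such filling. Hence the $k$ selected block indices are also the same for every filling. Put $b$ in those selected blocks and in all other blocks except one unselected block, which exists because $n/B>k$. Toggle that block between $a$ and $b$, and set $x_{n+1}\equiv -v \pmod M$ for some $v\in\{a_1,\ldots,a_B\}\setminus\{b_1,\ldots,b_B\}$. The layer's output is unchanged while the correct answer flips. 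So $f$ must separate all $B$-subsets of the alphabet, giving $2^{Hdp}\ge\binom{M}{B}$ and hence $Hdp=\Omega(B\log n)$ for $M=\Theta(n)$. Note the alphabet is $[M]$ with $M=O(n)$, not $[n^c]$.

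\textbf{Upper bound.} Your construction is a reasonable self-contained alternative to the paper, which just cites Sanford et al., Theorem 6. One step needs repair. A value vector cannot be ``an indicator of whether $x_j=y$'', since $y$ depends on the query position. If you instead average $x_j$ and compare with $y$, you get a false positive when the non-matching values $y-1$ and $y+1$ receive equal weight. Take the value equal to the key $(2x_j,-x_j^2,1)$ and have the MLP read out $\langle q,o\rangle=-\sum_j\alpha_j(x_j-y)^2$. This is close to $0$ iff a match exists and at most $-1$ otherwise, and it stays within $d=3$.
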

This result demonstrates a strict efficiency gap for large $B$. 
The $(B,k)$-sparse attention mechanism and the $2$-Sum problem are formally defined in Definitions \ref{def:sparse} and \ref{def:2-sum}, respectively. 
Our proofs adopt a methodology based on communication complexity.  
We provide an overview of our results in Table \ref{tab:model_complexity}.


\begin{table}[ht]
\centering
\caption{Complexity of full and sparse attention for 2-Sum.}
\begin{tabular}{|c|c|}
\hline
\textbf{Model} & \textbf{Complexity} \\ 
\hline
Full &  $O(\log n)$ \cite{sanford2023representational} \\ 
\hline
Sparse (Block) & $\Omega(B \log n)$ \\ 
\hline
\end{tabular}
\label{tab:model_complexity}
\end{table} 

\subsection{Organization and overview}

Section \ref{sec:pre} provides the mathematical background, including formal definitions of attention mechanisms and the studied tasks: sequential function composition and $2$-Sum.
Section \ref{sec:model-hybrid} introduces communication protocol for hybrid attention mecanisms to solve sequential function composition.  
In Section \ref{sec:hybrid-lower-short}, we adapt methods from \cite{chen2025theoretical} to derive a lower bound for a hybrid architecture, thus proving Theorem \ref{thm:main}.
Finally, Section \ref{sec:sparse} establishes Theorem \ref{thm:main-sparse}, offering the theoretical limitation on sparse attention. 

\subsection{Related work}
\textbf{Transformer-RNN comparison.} 
There have been several results on the theoretical comparison between Transformers and RNNs \cite{elman1990finding}. 
\cite{jelassi2024repeat} demonstrated a representation gap between RNNs and Transformers in repeating a long sequence. 
\cite{sanford2023representational} proved a linear-constant separation of RNNs and Transformers on the sparse averaging task, 
and later a separation on $k$-hop induction head tasks \cite{sanford2024transformers}. 

\cite{bhattamishra2024separations} establishes computational separations between Transformers and RNNs on tasks such as Index Lookup, String Equality, Nearest Neighbor, and Associative Recall.
Additionally, they prove a linear lower bound for one-layer Transformers on bounded Dyck languages \cite{hahn-2020-theoretical}, forming a separation as constant-size RNNs solve this task \cite{bhattamishra-etal-2020-practical, hewitt-etal-2020-rnns}. 
In practice, however, one-layer Transformers are rarely used. Multi-layer Transformers are shown to succeed on bounded Dyck languages \cite{wen2023transformers,yao-etal-2021-self}, indicating that Transformers match RNNs on language recognition while excelling at retrieval.


Further work by \cite{wen2025rnns} extends the comparison to one-layer RNNs augmented with chain-of-thought (CoT). Although CoT improves RNN performance, it fails to close the representational gap with Transformers: under mild complexity assumptions, an RNN with CoT is strictly more powerful than a plain RNN but still exponentially weaker than a CoT-enhanced Transformer on algorithmic problems. \cite{wen2025rnns} also suggest that RAG-based improvements can achieve Turing completeness, offering ways to narrow the gap. One proposed approach uses a hybrid RNN-Transformer mechanism, which has been implemented in models like Hunyuan-TurboS \cite{hunyuan}, Qwen3-Next \cite{qewn}, and Kimi Linear \cite{kimilinear}.


We make some contributions to the comparative expressiveness of recurrent and transformer-based models. In particular, we establish a new separation result for hybrid architectures on the sequential function composition task. 

\textbf{Theoretical study of Transformer.} 
Theoretical research on Transformers has progressed along two main fronts: expressive power and inherent limitations. 
The line on expressive power began with \cite{perez2018on}'s investigation of their ability to emulate Turing machines, followed by studies on adversarial robustness \cite{hsieh-etal-2019-robustness}, universal approximation \cite{Yun2020Are,wei2022statistically}, and Turing completeness \cite{perez2021attention,wei2022statistically}. 


Concerning limitations, \cite{hahn-2020-theoretical} pioneered this direction by proving that hard-attention Transformers cannot recognize parity or Dyck languages. 
Subsequent work established further limitations—either via communication complexity for one-layer model \cite{peng2024on,sanford2023representational,sanford2024transformers,stanford2024one} or under circuit complexity conjectures \cite{sanford2024transformers,peng2024on,merrill-sabharwal-2023-parallelism}. 
A significant advance was made by \cite{chen2025theoretical}, who proved the first unconditional lower bound for multi-layer Transformers (on sequential function composition) by introducing the concept of indistinguishable decomposition. 
In this paper, we adapt their techniques to derive a lower bound for hybrid architectures. 

\textbf{Chain of Thought (CoT).} 
Chain-of-Thought (CoT) \cite{wei2022chain} enhances the reasoning by inducing step-by-step reasoning traces, thereby providing Transformers with an augmented computational workspace.
This augmentation confers significant expressive benefits: constant-size Transformers with CoT are known to simulate any polynomial-time algorithm \cite{perez2021attention,merrill2024the,feng2023towards,li2024chain,li2025constant}.
Given that Transformers (without CoT) are believed to be limited to the complexity class $\mathsf{TC}^0$ \cite{merrill-sabharwal-2023-parallelism}, these results suggest (assuming standard conjectures like $\mathsf{P} \not\subset \mathsf{TC}^{0}$) that CoT strictly improves the computational power of Transformers. 
Notably, \cite{chen2025theoretical} recently proved the first unconditional separation for CoT, introducing a key proof technique for such results. 
They prove that a single-layer small Transformer with CoT can solve the $L$-sequential function composition task, while an $L$-layer small Transformer cannot. 

Complementary to these expressiveness results, another line of work establishes lower bounds on the number of CoT steps necessary for specific tasks. 
Lower bounds have been shown for single-layer Transformers on iterated function composition \cite{peng2024on}. 
Other works connect the required step count to structural complexity measures like the Ehrenfeucht-Haussler rank \cite{barcelo2025ehrenfeuchthaussler} 
or establish bounds for various problems (e.g., parity, multiplication, median, graph reachability) under restricted attention patterns \cite{amiri2025lower}.


\textbf{Sparse attention mechanisms.} 
The quadratic complexity of full attention creates a significant computation bottleneck for long-context processing. Sparse attention mechanisms, leveraging inherent sparsity in attention matrices \cite{ge2024model,jiang-etal-2023-llmlingua}, have emerged as a key approach to improve efficiency. 
Such mechanisms have been applied in many models such as MoBA \cite{lu2025moba},  NSA \cite{yuan-etal-2025-native}, DSA \cite{deepseekai2024deepseekv32}, etc. 

Despite their widespread adoption for computational efficiency, we establish a fundamental limitation of (single-layer) sparse attention: on tasks requiring uniform attention across all input tokens, any sparse mechanism is provably less powerful than full attention.
We provide a formal lower bound for compression- and selection-based sparse strategies, which constitute two of the three canonical categories outlined by \cite{yuan-etal-2025-native} (the third, sliding window, is functionally analogous to an RNN layer). 


\section{Preliminaries}
\label{sec:pre}


We begin by formalizing the key components of our study. This section first introduces the attention mechanisms we analyze: full attention, linear attention, log-linear attention, and hybrid architectures. We then formally define the primary tasks: function evaluation, permutation composition, and sequential function composition.

\textbf{Notation.}
For integers $n\ge 0$ and $n_2\ge n_1$, we use the notation $[n] = \{1, 2, \ldots, n\}$ and $[n_1:n_2] = \{n_1, n_1+1, \ldots, n_2\}$, with the convention $[0] = \varnothing$. 

\subsection{Transformer with full attention}\label{sec:pre:transformer}

We consider the decoder-only Transformer architecture.
Following the notations of \cite{chen2025theoretical}, let $L$ be the number of attention layers, $H$ be the number of attention heads per layer, $p$ be the precision, $d$ be the model dimension, and $n$ be the (input) prompt length. 
It is typically assumed that
$Hdp \geq O(\log(n))$. 

An $L$-layer decoder-only Transformer consists of alternating attention layers and MLP layers:
\[
f_{\tran} = f^{(L)}_{\mlp} \circ f^{(L)}_{\attn} \circ \cdots \circ f^{(1)}_{\mlp} \circ f^{(1)}_{\attn} 
\]
Given an input sequence $x^{(0)} = (x_1^{(0)}, \ldots, x_n^{(0)}) \in (\R^{dH})^n$, the Transformer inductively computes the output of the $\ell$-th attention layer $y^{(\ell)} = (y_1^{(\ell)}, \ldots, y_n^{(\ell)})$ and the output of the $\ell$-th MLP layer $x^{(\ell)} = (x_1^{(\ell)}, \ldots, x_n^{(\ell)})$. 
For layer $\ell \in[L]$.  

\textbf{$\bullet$ Attention layer $f_{\attn}^{(\ell)}$}: For each attention head $h \in [H]$ and position $i \in [n]$, the output is computed as 
\begin{align}
y^{(\ell, h)}_{i} = \sum_{j \leq i}\alpha_{i, j}^{(\ell, h)}V^{(\ell, h)}x_{j}^{(\ell-1)} \in \R^d \label{eq:attention1}
\end{align}
where $\{\alpha_{i, j}^{(\ell, h)}\}_{j \leq i}$ is the attention score from the $h$-th head, given by the softmax operation
\begin{align*}
\alpha_{i, j}^{(\ell, h)} = \frac{\exp((x_{i}^{(\ell-1)})^{\top}(Q^{(\ell, h)})^{\top} K^{(\ell, h)} x_j^{(\ell-1)})}{\sum\limits_{j\leq i}\exp((x_{i}^{(\ell-1)})^{\top}(Q^{(\ell, h)})^{\top} K^{(\ell, h)} x_j^{(\ell-1)})}.  \label{eq:attention2}
\end{align*}
Here, $Q^{(\ell, h)}, K^{(\ell, h)}, V^{(\ell, h)} \in \R^{d\times dH}$ denote the query, key, and value matrices of the $h$-th head, with each entry represented using $p$-bit precision.

Finally, the output of the $\ell$-th attention layer is the concatenation of each head, 
\begin{align*}
y_{i}^{(\ell)} = (y_{i}^{(\ell, 1)}, \ldots, y_{i}^{(\ell, H)}) \in \R^{dH} \quad \forall i\in [n] 
\end{align*}

\textbf{$\bullet$ MLP layer $f_{\mlp}^{(\ell)}$}: The output of the $\ell$-th layer (and also the input to the $(\ell+1)$-th layer) is an arbitrary function $g^{(\ell)}: \R^{2dH} \rightarrow \R^{dH}$ applied position-wise:
\[
x_{i}^{(\ell)} = g^{(\ell)}(x_i^{(\ell-1)},y_{i}^{(\ell)}) \in \R^{dH}.
\]
Note that here we modify the definition of the MLP layer to adapt to various types of transformer architectures and residual connections \cite{zhuhyper,xie2025mhc}. 


\subsection{Linear attention, RNN, and hybrid attention}
For linear attention, the attention probabilities become
\begin{align*}
\alpha_{i, j}^{(\ell, h)} = \frac{ \varphi(Q^{(\ell, h)}x_{i}^{(\ell-1)})^{\top} \varphi(K^{(\ell, h)} x_j^{(\ell-1)})}{\sum_{j\leq i}\varphi(Q^{(\ell, h)}x_{i}^{(\ell-1)})^{\top} \varphi(K^{(\ell, h)} x_j^{(\ell-1)})}   
\end{align*}
where $\varphi:\R^d\to\R^d$ is an arbitrary function. 
This formulation of linear attention leads to a linear runtime complexity. 
The key is to suppose that we maintain cumulative states   
\begin{align*}
    S^{(\ell,h)}_{i} = S^{(\ell,h)}_{i-1} + V^{(\ell,h)}x^{(\ell-1)}_{i} \otimes \varphi(K^{(\ell,h)}x^{(\ell-1)}_{i}), \\ Z^{(\ell,h)}_i = Z^{(\ell,h)}_{i-1} + \varphi(K^{(\ell,h)}x^{(\ell-1)}_i)
\end{align*}
with $S^{(\ell,h)}_{0} = 0$ and $ Z^{(\ell,h)}_0 = 0$, we have 
$$
	y^{(\ell,h)}_i = \frac{\varphi(Q^{(\ell, h)}x_{i}^{(\ell-1)})^{\top} S^{(\ell,h)}_{i}}{\varphi(Q^{(\ell, h)}x_{i}^{(\ell-1)})^{\top} Z^{(\ell,h)}_i}. 
$$

Indeed, linear attention can be viewed as an RNN. 
\begin{definition}[Recurrent neural network (RNN)]\label{def-rnn}
An RNN layer takes as input the sequence $x = (x_1, \cdots, x_n)\in (\R^d)^n$ and produces an output sequence $y = (y_1, \ldots, y_n)\in (\R^d)^n$ computed as follows.
One chooses $\mathrm{h}_0 \in \R^{m}$, then computes inductively $\mathrm{h}_i = g_{(i)}(x_i, \mathrm{h}_{i-1})$ and $y_i =f_{(i)}(x_i,\mathrm{h}_i)$ for $i=1,\ldots,n$, and $g_{(i)}:\R^{d+m}\to \R^d$ and $f_{(i)}:\R^{d+m}\to\R^d$ are arbitrary functions. 
\end{definition}

These $\mathrm{h}_i$ are called hidden states of the RNN layer. 
Two key factors characterize the capacity of an RNN: the hidden dimension $m$, as defined in the Definition \ref{def-rnn}, and the precision $p$, meaning that 
$h_i$ 
are represented by $p$-bit numbers. 

\begin{lemma}[Linear attention as RNN]\label{lem-linear-rnn}
    A one-layer and one-head linear attention of dimension $d$ and precision $p$ can be viewed as an RNN layer of hidden dimension $d^2+d$ and precision $p$. 
    More generally, linear attention of head number $H$, layer number $L$, dimension $d$, and precision $p$ can be viewed as a multi-head and multi-layer RNN of 
    layer number $L$, hidden dimension $H(d^2+d)$, and precision $p$. 
\end{lemma}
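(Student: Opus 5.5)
The plan is to build the RNN hidden state directly from the cumulative states of the linear-attention recursion displayed above. For one layer and one head, set the hidden state at position $i$ to
\[
\mathrm{h}_i = \bigl(\mathrm{vec}(S_i), Z_i\bigr) \in \R^{d^2+d},
\]
where $S_i = S^{(\ell,h)}_i \in \R^{d\times d}$ is the outer-product accumulator and $Z_i = Z^{(\ell,h)}_i \in \R^d$ is the normalizer accumulator. Take $\mathrm{h}_0 = 0$, which matches $S_0=0$ and $Z_0=0$. The transition is
\[
g(x_i,\mathrm{h}_{i-1}) = \Bigl(\mathrm{vec}\bigl(S_{i-1} + Vx_i\otimes\varphi(Kx_i)\bigr),\; Z_{i-1}+\varphi(Kx_i)\Bigr),
\]
which depends only on the current token $x_i$ and the previous state. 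The output map is
\[
f(x_i,\mathrm{h}_i) = \frac{\varphi(Qx_i)^\top S_i}{\varphi(Qx_i)^\top Z_i},
\]
which depends only on $x_i$ and $\mathrm{h}_i$. Definition~\ref{def-rnn} allows arbitrary functions $g_{(i)}, f_{(i)}$, so both maps are admissible. A direct induction on $i$, using the displayed identity for $y^{(\ell,h)}_i$, then shows that the RNN outputs coincide with the linear-attention outputs.

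Precision is immediate: every coordinate of $\mathrm{h}_i$ is an entry of $S_i$ or $Z_i$, and in the $p$-bit model these are stored as $p$-bit numbers. The state dimension is $d^2+d$, which gives the single-head claim. The formal mismatch in Definition~\ref{def-rnn}, where $g_{(i)}$ maps into $\R^d$ rather than $\R^m$, is a typo; the natural reading is $\R^m$.

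For $H$ heads, concatenate the per-head states into a state in $\R^{H(d^2+d)}$. Each head's update reads the same input $x_i$ and touches only its own block. The output map computes all $H$ head outputs and concatenates them into $y_i\in\R^{dH}$. For $L$ layers, stack these RNN layers. The position-wise MLP $g^{(\ell)}(x_i^{(\ell-1)},y_i^{(\ell)})$ is absorbed into the output function $f_{(i)}$ of layer $\ell$. This works because $f_{(i)}$ has access to both $x_i$ and $\mathrm{h}_i$, and is therefore arbitrary enough to output $x^{(\ell)}_i$ directly. The result is an $L$-layer RNN with hidden dimension $H(d^2+d)$ and precision $p$.

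The only step needing care is the bookkeeping that the output dimension of each layer matches the next layer's input dimension $dH$, together with the precision convention. Neither involves a genuine obstacle: the construction is a direct transcription of the recurrence.
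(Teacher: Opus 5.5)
Your construction matches the paper's proof: the hidden state is the pair $(S_i, Z_i)$ of cumulative states, updated by the time-independent map $g(x,(S,Z)) = (S + Vx\otimes\varphi(Kx),\, Z+\varphi(Kx))$ and read out by $f(x,(S,Z)) = \varphi(Qx)^\top S/\varphi(Qx)^\top Z$. The paper writes out only the single-head, single-layer case, and misstates the state space as $\mathrm{h}_0\in\R^{2d}$ instead of $\R^{d^2+d}$; your concatenation over heads and your absorption of the position-wise MLP into the output map $f_{(i)}$ correctly supply the general multi-head, multi-layer claim.
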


The hybrid attention architecture is defined as follows. 
\begin{definition}[Hybrid Transformer architecture]\label{def:hybrid}
    An $(L,a_1,\cdots,a_L)$-hybrid Transformer is an $(L+a_1+\cdots+a_L)$-layer Transformer consisting of $L$ full attention layers, each followed by $a_1,\cdots,a_L$ layers of linear attention, respectively. i.e., 
    \begin{align*}
& (T_{linear}^{(L,a_L)}\circ\cdots\circ T_{linear}^{(L,1)}\circ T_{softmax}^{(L)}) \\
\circ  \cdots & \circ(T_{linear}^{(1,a_1)}\circ\cdots\circ T_{linear}^{(1,1)}\circ T_{softmax}^{(1)}). 
    \end{align*}
\end{definition}

\subsection{Sparse attention}
We consider the sparse attention mechanism with block compression and selection strategies. 
\begin{definition}\label{def:sparse}
    A single-layer one-head $(B,k)$-sparse attention is defined as follows: 
    the input tokens are divided into blocks of $B$ tokens, and then applies a compression map $f:(\R^d)^B \to \R^d$ to get compressed tokens $x_{[1:B]},\cdots, x_{[tB+1,(t+1)B]}\in \R^d$, where $t = \lfloor \frac{i}{B}\rfloor$.  

    A block selection function $g:\R^d\times \R^d\to \R$ is then applied to assign a score to each block relative to the current token. 
    The $k$ blocks with the highest scores are selected. 
    Let $s_1,\cdots,s_k$ be the indices of these selected blocks, corresponding to token ranges $[s_1B:(s_1+1)B],\cdots, [s_kB: (s_k+1)B]$. 
    We then calculate the compressed output and the selected output 
\begin{align*}
y_{i}^{compress} = \sum_{(j+1)B \leq i}\alpha_{i, j}Vx_{[jB+1:(j+1)B]} \\
y_{i}^{select} = \sum_{k^\prime\in[k]}\sum_{j\in[s_{k'}B+1:(s_{k'}+1)B]}\alpha_{i, j}Vx_{[jB+1:(j+1)B]}
\end{align*}
and the final output is computed as a weighted combination 
\[y_i = \lambda y_i^{compress} + (1-\lambda)y_i^{select}.\]
\end{definition}

\subsection{Tasks}
We formally define the tasks analyzed in this paper: sequential function composition \cite{chen2025theoretical} and $2$-Sum. 

\begin{definition}[$L$-sequential function composition]
\label{def:composition}
Given an integer $L\ge 2$, an $L$-sequential function composition task, denoted 
$L\text{-}\SeqComp(w,z_0,z_1,\dotsc,z_{L})$ 
is defined by 
positive integers $m,n_1,\cdots,n_{L-1}$, a sequence of functions $z_{0}, z_1 \ldots, z_{L}$ and a query $w = (w_{1}, \ldots, w_{L-1}) \in [n_{1}] \times \cdots \times [n_{L-1}]$. 
Here, $z_0 \in [m]$ is the initial input, 
and $z_{\ell}\in A_\ell:=\{[N_{\ell-1}] \rightarrow [N_{\ell-1}]\}\simeq [N_{\ell-1}^{N_{\ell-1}}]$ for $\ell \in [L]$ 
with 
\begin{align}
N_{\ell} = m \cdot  \prod_{\ell'\in [\ell]} n_{\ell} \quad \forall \ell \in [0: L-1].
\end{align}
Compute $i_{0} = z_0 \in [m], i_1 = z_1(i_0) \in [N_0]$ and inductively for $\ell = 1,2,\ldots, L-1$: 
$i_{\ell+1} = z_{\ell+1}(w_{\ell}, i_{\ell}) \in [N_{\ell}]$. 
The final output is $L\text{-}\SeqComp(w,z_0,z_1,\dotsc,z_{L}) = i_{L}$. 
\end{definition}

To solve the $L$-sequential function composition task, we assume the Transformer receives the input prompt in the following format: first, the $L$ functions $z_L, z_{L-1}, \ldots, z_0$ are listed, followed by the query $w$. For simplicity, we assume each entry of a function $z_\ell$ (for $\ell \in [0:L-1]$) is encoded by a single token (thus requiring $N_{\ell-1}$ tokens for $z_\ell$), and the query $w$ is also encoded in a single token. 

\begin{definition}[The 2-Sum task]\label{def:2-sum}
    Given input sequence $(x_i)_{i\in[n+1]}\in [M]^{n+1}$, with $M=O(n)$, the goal is to output the sequence $(y_i)_{i\in [n]}$ with 
    \[y_i = \begin{cases}
        1, & \exists j < i, x_i+x_j\equiv 0 \mod M  \\
        0, & \text{otherwise}
    \end{cases}.\]
\end{definition}

\section{Hybrid communication model}\label{sec:model-hybrid}
To prove Theorem \ref{thm:main}, we introduce an $(L,a_1,\cdots,a_L)$-hybrid communication model for $L$-sequential function composition, which extends the framework of \cite{chen2025theoretical}. 
The model comprises $L+2$ players, each holding one of the following: the $L$ functions, the initial input, or the query. 
Communication is organized into $L$ epochs. Within each epoch, a single round simulates a full attention layer, followed by several rounds that implement the subsequent linear attention layers.

\begin{construction}{The $(L,a_1,\cdots,a_L)$-Hybrid Model}

\textbf{Settings.} 
The model operates over $L$ epochs with $L+2$ players, indexed as $[-1:L]$. 

\textbf{Input.} Player $i \in [L]$ receives $z_i$ (from Definition \ref{def:composition}), encoded in $m_{(i)}=N_{i-1}$ tokens. 
Player $0$ and player $-1$ receive $z_0$ and $w$, respectively, each encoded in a single token (i.e., $m_{(-1)} = m_{(0)} = 1$). 

\textbf{Communication.} For $\ell \in [0:L]$, let $X_{i}^{(\ell)}$ denote the message collected by player $i\in [-1 : L]$ after $\ell$ epochs. 
For $\ell \in [L]$, the $\ell$-th epoch of communication proceeds as follows. For player $i \in [-1 : L]$, 
\begin{itemize}
\item The player $i$ sends its information $X_{i}^{(\ell-1)}$ to all players $[i+1: L]$. 
\item Each player $j \in [i+1: L]$, based on its own information $X_{j}^{(\ell-1)}$ and the information $X_{i}^{(\ell-1)}$ from player $i$, sends a message $\Pi_{j, i}^{(\ell)}$ (termed a ``soft transcript", corresponding to the full attention layer) to player $i$. 
The length of the message satisfies 
\[
|\Pi_{j, i}^{(\ell)}| = 2 Hdp \cdot m_{(i)}.
\]
\item The player $i$ intermediately updates its collection of information as 
\[
X_{i}^{(\ell),0}:= X_{i}^{(\ell-1)} \cup \bigcup_{j > i} \Pi_{j, i}^{(\ell)}.
\]
\item The players then engage in $a_\ell$ rounds of communication (corresponding to the linear attention layers). 
For each round $m\in[0:a_\ell-1]$, player $L$ sends a message $\Sigma_{L}^{(\ell,m)}$ of $Hd(d+1)p$ bits. 
Sequentially, each player $i = L-1, \cdots, 1, 0$ receives the message $\Sigma_{i+1}^{(\ell,m)}$ (we call them ``linear transcripts", corresponding to the linear attention layer) from  player $i+1$, and updates its information as 
\[
X_{i}^{(\ell),m+1}:= X_{i}^{(\ell),m} \cup \Sigma_{i+1}^{(\ell),m}, 
\]
then sends a message $\Sigma_{i}^{(\ell,m)}$ of $Hd(d+1)p$ bits to player $i-1$. 
Finally, player $-1$ updates its information as 
\[
X_{-1}^{(\ell),m+1}:= X_{-1}^{(\ell),m} \cup \Sigma_{0}^{(\ell),m}. 
\]
After $a_\ell$ linear rounds, the epoch concludes by setting $X_i^{(\ell)} = X_i^{(\ell),a_\ell}$ for all $i\in[-1,L]$. 
\end{itemize}

\textbf{Output.} After the $L$-th round, the player $-1$ produces the final output based on $X_{-1}^{(L)}$. 
\end{construction}

We remark that the players are
forgetful, the player $j$ does not remember anything sent from previous players. 

\section{Hybrid Communication Lower Bound}
\label{sec:hybrid-lower-short}
We demonstrate a limitation of hybrid Transformer architectures combining full attention with efficient linear attention layers on solving deep sequential composition tasks. 
Our main theorem shows that even abundant linear attention cannot substitute for the expressive power of full attention on certain hierarchical tasks.

We prove Theorem~\ref{thm:main} via a communication complexity argument. 
The core of our proof is an inductive construction of indistinguishable input sets that become impossible for the model to distinguish, despite requiring different outputs. 

We analyze the hybrid Transformer's computation through the lens of information flow between layers. 
Each full attention layer enables parallel aggregation of information from all previous positions, while linear attention layers only permit sequential, recurrent propagation. 
The latter cannot create the rich interactions needed for deep composition. 

The $L$-sequential function composition task is specifically designed to control the contribution of the linear attention layers to the transcript space and to ensure technical requirements. 
Concrete choices of parameters of the task are given in Appendix \ref{apd:hybrid}. 


The proof technique provides a general framework for analyzing hierarchical computation in structured transformers. 
It demonstrates that linear attention—despite its efficiency—lacks the expressive power required for deep compositional reasoning, even when augmented with limited full attention.

The rest of this section provides a sketch of our proof; the detailed proof is given in Appendix \ref{apd:hybrid}.

\subsection{Parameters and strateggy}
To prove Theorem \ref{thm:main}, we construct parameters of the task such that the input length satisfies $n\le (Hdp)^{4\cdot 16^{L}}$, yet the task cannot be solved by an $(L-1,2^{3L^2},\cdots,2^{3L^2})$-hybrid Transformer. 
We assume that $Hdp\ge 2$.  

Indeed, we prove a stronger result: even if we allow pretreatment by a single-layer Transformer, the $L$-sequential function composition task cannot be solved by a small $(L-1,2^{3L^2},\cdots,2^{3L^2})$-hybrid Transformer. Equivalently, we prove that a small $(L,0,2^{3L^2},\cdots,2^{3L^2})$-hybrid Transformer cannot solve $L$-sequential function composition. 

\textbf{Notation.} For notational convenience, we use $z_{-1}$ and $w$ interchangeably to denote player $-1$'s input. In the following, we elaborate on several key definitions that will be crucial to our proof.

\begin{itemize}
    \item (Soft transcript $\Pi^{(\ell)}_{j,i}$) For any $i \in [-1: L-1]$, $j\in [i+1: L]$, $\ell\in [L]$, recall that $\Pi_{j, i}^{(\ell)}$ denotes the soft transcript sent from player $j$ to player $i$ at the $\ell$-th epoch of communication. 
    Its value is determined by the inputs of players $[i: L]$ (i.e., $z_{L}, \ldots, z_{i}$) and is independent of the the inputs of players $[-1:i-1]$ (i.e., $z_{i-1}, \ldots, z_{-1}$). 
    For any fixed inputs $\wt{z}_{L} \in [N_{L-1}]^{N_{L-1}}, \ldots, \wt{z}_{i}\in [N_{i-1}]^{N_{i-1}}$, let $\Pi_{j, i}^{(\ell)}(\wt{z}_{L}, \ldots, \wt{z}_{i})$ denote the soft transcript when player $t$ receives input $z_t = \wt{z}_{t}$ ($t\in [i:L]$). 

    \item (Linear transcript $\Sigma_{i+1}^{(\ell),m}$) For any $i\in[-1,L-1]$, $l\in[L]$, and $m\in [0,a_\ell-1]$, recall $\Sigma_{i+1}^{(\ell),m}$ is the linear transcript sent from player $i+1$ to player $i$ in the $(m+1)$-th linear round of the $\ell$-th epoch of communication. 
    Its value is determined by the inputs of players $[i+1: L]$ (i.e., $z_{L}, \ldots, z_{i+1}$) and is independent of the the inputs of players $[-1:i]$ (i.e., $z_{i}, \ldots, z_{-1}$). 
    For any fixed inputs $\wt{z}_{L} \in [N_{L-1}]^{N_{L-1}}, \ldots, \wt{z}_{i}\in [N_{i-1}]^{N_{i-1}}$, let $\Sigma_{i+1}^{(\ell),m}(\wt{z}_{L}, \ldots, \wt{z}_{i})$ denote the transcript when player $t$ receives input $z_t = \wt{z}_{t}$ ($t\in [i:L]$). 

    \item (The partial composition value)
For any $\ell \in [0: L]$, the value of $i_{\ell}$ is determined by $w, z_0, \ldots, z_{\ell}$. We write $i_{\ell}(\wt{w}, \wt{z}_0, \ldots, \wt{z}_{\ell})$ to denote its value when $w = \wt{w}, z_0 = \wt{z}_0, \ldots, z_{\ell} = \wt{z}_{\ell}$.
\end{itemize}

\textbf{Indistinguishable decomposition.} Our key concept for the proof is \emph{indistinguishable decomposition} introduced in \cite{chen2025theoretical}. 
A indistinguishable decomposition is formed by two sets $R_{\ge \ell}$ and $Z_{<\ell}$, where $R_{\ge \ell}$ is a set of input assignments to players $[\ell:L]$ and $Z_{<\ell}$ is a set of input assignments to players $[-1:\ell-1]$). 
The key property is that for any fixed input $z_{<\ell} \in Z_{<\ell}$ for the first $\ell$ players, all assignments to $R_{\ge \ell}$ are indistinguishable to players $[-1:\ell-1]$ on inputs $z_{< \ell}$ after $\ell$ epochs, because they produce identical communication transcripts. 
Formally, we adapt the definition in \cite{chen2025theoretical} as follows. 

\begin{definition}[Indistinguishable decomposition]
    Let $\ell \in [2:L]$, an indistinguishable decomposition is a pair of sets 
    $R_{\ge \ell} \subseteq A_L \times A_{L-1} \times \cdots \times A_{\ell}$ 
    and 
    $Z_{< \ell} = Z_{-1} \times \cdots \times Z_{\ell-1}$ with $Z_{-1}= A_{-1}, Z_0 \subseteq A_0, \cdots , Z_{\ell-1}\subseteq A_{\ell-1}$, 
    such that for every $\wt{z}_{<\ell} \in Z_{< \ell}$, and for every $\wt{\alpha}_{\ge \ell}, \wt{\beta}_{\ge \ell} \in R_{\ge \ell}$, it satisfies:
    \[
        \Pi_{j,i}^{(\ell')}(\wt{z}_{<\ell},\wt{\alpha}_{\ge \ell}) = 
        \Pi_{j,i}^{(\ell')}(\wt{z}_{<\ell},\wt{\beta}_{\ge \ell})
    \]
    for every $j \in [\ell:L]$, $i \in [-1:\ell-1]$, and $\ell' \in [\ell]$, and 
    \[
        \Sigma_{i+1}^{(\ell'),m}(\wt{z}_{<\ell},\wt{\alpha}_{\ge \ell}) = 
        \Sigma_{i+1}^{(\ell'),m}(\wt{z}_{<\ell},\wt{\beta}_{\ge \ell})
    \]
    for every $i \in [-1:\ell-1]$, $\ell' \in [\ell]$, and $m\in[0,a_{\ell'}-1]$. 
\end{definition}

The utility of an indistinguishable decomposition becomes clear when $\ell = L$. 
In this case, for every input assignment from $Z_{<L}$ to players $[-1:L-1]$, player $-1$ 
(the final output player) observes identical communication transcripts after $L$ epochs (i.e., at the end of the protocol) regardless of which input $\vec{z}_L \in R_{\geq L}$ is assigned to player $L$.  
Consequently, for every $\wt{z}_{<L} \in Z_{< L}$, the output of the protocol $L$-$\SeqComp(\wt{z}_{<L}, \wt{z}_L)$ must be the same for every $\wt{z}_L \in R_{\ge L}$. 
We will carefully define the set $R_{\ge \ell}$ and $Z_{< \ell}$ so that satisfying this requirement leads to a contradiction, thereby establishing the desired lower bound. 

For a subset $Z_{<\ell}$, we define $\mathcal{I}_{\ell-1}(Z_{<\ell})$ to be the set of all possible values for the intermediate composition after the $(\ell-1)$-th epoch when the inputs to players $[-1:\ell-1]$ are restricted to $Z_{<\ell}$:  
\begin{align}
\{ i_{\ell-1}(\wt{z}_{-1}, \wt{z}_0, \ldots, \wt{z}_{\ell-1}):\wt{z}_{-1}, \wt{z}_0, \dotsc, \wt{z}_{\ell-1}) \in Z_{<\ell} \}. \notag
\end{align}

The following lemma, as in \cite{chen2025theoretical}, shows that the desired lower bound follows from a good enough indistinguishable configuration for $\ell = L$. 
\begin{lemma}[Lemma \ref{lemma:id-suffices}]
    An $L$-epoch hybrid communication protocol does not solve $L$-$\SeqComp$ if there is an indistinguishable decomposition $R_{\ge L}$ and $Z_{< L}$, such that both $|R_{\ge L}| $ and $|\mathcal{I}_{L-1}(Z_{<L})|$ are large. 
\end{lemma}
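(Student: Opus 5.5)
The plan is a counting argument on the last function $z_L$, carried out after fixing a suitable prefix. Fix an indistinguishable decomposition $R_{\ge L}\subseteq A_L$ and $Z_{<L}$ as in the statement, with ``large'' meaning
\[
|R_{\ge L}| > \bigl(N_{L-1}\bigr)^{N_{L-1}-|\mathcal{I}_{L-1}(Z_{<L})|},
\]
which is the form we will aim to achieve later in the construction. The proof is by contradiction: suppose the protocol outputs $L$-$\SeqComp$ correctly on every input.

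\textbf{Step 1 (the output is a function of the prefix only).} By the definition of an indistinguishable decomposition with $\ell=L$, for every $\wt z_{<L}\in Z_{<L}$ all soft transcripts $\Pi^{(\ell')}_{L,i}$ and all linear transcripts $\Sigma^{(\ell'),m}_{i+1}$ received by players $[-1:L-1]$ are identical across all choices $\wt z_L\in R_{\ge L}$. The players are deterministic and forgetful. By induction over epochs and rounds, their collected information $X_i^{(\ell'),m}$ for $i\le L-1$ is therefore identical for all $\wt z_L\in R_{\ge L}$; this uses only their own inputs and the messages they received. In particular $X_{-1}^{(L)}$, and hence the protocol's output, depends only on $\wt z_{<L}$.

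\textbf{Step 2 (correctness forces the relevant values of $z_L$).} Correctness gives that $i_L=z_L(w_{L-1},i_{L-1})$ equals the protocol's output. Take any $v\in\mathcal{I}_{L-1}(Z_{<L})$, realized by some prefix $\wt z_{<L}$ with $(w_{L-1},i_{L-1})=v$, viewed as a point of $[N_{L-1}]$. Then $\wt z_L(v)$ equals the fixed output $O(\wt z_{<L})$ for every $\wt z_L\in R_{\ge L}$. Hence all members of $R_{\ge L}$ agree on every point of $\mathcal{I}_{L-1}(Z_{<L})$, where the index is understood to include the $w_{L-1}$ coordinate; the set should be defined this way, or $w$ folded in, which is the bookkeeping to check against Definition~\ref{def:composition}. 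Consequently
\[
|R_{\ge L}|\le N_{L-1}^{\,N_{L-1}-|\mathcal{I}_{L-1}(Z_{<L})|},
\]
which contradicts the largeness assumption.

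\textbf{Step 3 (translating to Transformers).} The communication model simulates the $(L,0,a,\dots)$-hybrid Transformer. The soft transcript of size $2Hdp\cdot m_{(i)}$ encodes the numerator and denominator of the softmax sums restricted to each block. The linear transcripts carry RNN states of size $Hd(d+1)p$, by Lemma~\ref{lem-linear-rnn}. So the impossibility transfers. The main obstacle is Step 1: we must verify that indistinguishability of the \emph{listed} transcripts suffices. That is, player $-1$'s view at the end of epoch $L$ must be a function only of those transcripts and $\wt z_{<L}$. This requires that every message entering players $<L$ from player $L$, directly or relayed through linear rounds, is among the equated ones. I would handle this by induction on the order of messages in the protocol.
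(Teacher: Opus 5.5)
Your proposal is correct and follows the paper's route, which it defers to Chen et al.: indistinguishability at $\ell=L$ makes player $-1$'s final view, and hence the output, depend only on $\wt z_{<L}$ as $\wt z_L$ ranges over $R_{\ge L}$, so every $\wt z_L\in R_{\ge L}$ takes the same value at each pinned input, and counting maps $[N_{L-1}]\to[N_{L-1}]$ with those values fixed bounds $|R_{\ge L}|$. To settle the bookkeeping you flagged, note that $\mathcal{I}_{L-1}(Z_{<L})\subseteq[N_{L-2}]$; since $Z_{-1}$ is the full query set and $i_{L-1}$ does not depend on $w_{L-1}$, each $v\in\mathcal{I}_{L-1}(Z_{<L})$ pins all $n_{L-1}$ points $(w,v)$, so $|R_{\ge L}|\le N_{L-1}^{N_{L-1}-n_{L-1}|\mathcal{I}_{L-1}(Z_{<L})|}$. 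This factor $n_{L-1}$ is essential: with the appendix thresholds $|A_L|/\Delta_L$ and $\Theta_{L-1}$, your weaker exponent $N_{L-1}-|\mathcal{I}_{L-1}(Z_{<L})|$ yields a contradiction only if $\log_2 N_{L-1}>4\cdot 8^{L(L-1)}K^{7/2}$, which fails for the paper's parameters.
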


Since all $z_L \in R_{\ge L}$ are indistinguishable to player $-1$ (the output player), the protocol must output the same answer for every $z_L \in R_{\ge L}$. 
However, because $\mathcal{I}_{L-1}(Z_{<L})$ is large, distinct $z_L \in R_{\ge L}$ would require different outputs for some input in $Z_{<L}$. This contradicts the correctness of the protocol, thereby completing the proof. 

The proof of \Cref{thm:main} is then completed by constructing the required decomposition via an inductive argument. 

\begin{lemma}[Lemma \ref{lem:main}]
For any $\ell \in [2: L]$, we can construct by induction 
an indistinguishable decomposition $(R_{\geq \ell}, Z_{<\ell})$, where $R_{\geq \ell} \subseteq A_{L}\times A_{L-1}\times \cdots \times A_{\ell}$ and $Z_{< \ell} = Z_{-1}\times Z_{0}\times \cdots \times Z_{\ell-1}$, with $Z_{-1} = [n_1\cdots n_{L-1}]$, $Z_0 \subseteq A_0$ and $Z_i\subseteq A_i$, 
together with  
the soft transcript from players $[\ell: L]$ to $[-1:\ell-1]$ for the first $\ell$ epochs, when the players $[-1:\ell-1]$ take input from $Z_{<\ell}$ 
and 
the linear transcript from player $\ell$ to $\ell - 1$ for the first $\ell$ epochs, when the players $[-1:\ell-1]$ take input from $Z_{<\ell}$, 
such that 
the soft transcripts and linear transcripts are 
consistent, and that
$|R_{\geq \ell}|$ 
and 
$|\mathcal{I}_{\ell-1}(Z_{<\ell})|$ are large. 
\end{lemma}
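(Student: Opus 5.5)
We will follow the inductive scheme of \cite{chen2025theoretical}, adding a step that absorbs the linear transcripts. The induction runs over $\ell$. For the base case $\ell=2$, player $-1$ keeps the full set $Z_{-1}=[n_1\cdots n_{L-1}]$. We fix $Z_0$ to be a single value, or a small set, and then pass to $R_{\ge 2}$ and $Z_1$ in two steps. First, pigeonhole over the at most $2^{O(Hdp)}$ possible soft transcripts from players $[2:L]$ to players $-1,0,1$ during epochs $1,2$. Second, pigeonhole over the linear transcripts $\Sigma_2^{(\ell'),m}$ for $\ell'\le 2$ and $m<a_{\ell'}$. The linear transcripts number at most $2^{Hd(d+1)p\cdot 2\cdot 2^{3L^2}}$. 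Because $(Hdp)^2\cdot 2^{3L^2}$ is tiny compared with $\log|A_\ell|=N_{\ell-1}\log N_{\ell-1}$ under the parameter choice $n\le (Hdp)^{4\cdot 16^L}$, the surviving set is still a $2^{-o(N)}$ fraction of its ambient space.

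For the inductive step from $\ell$ to $\ell+1$, we start from $(R_{\ge\ell},Z_{<\ell})$ and build a new decomposition $R_{\ge \ell+1}$, $Z_\ell$. The idea is to view each $\alpha\in R_{\ge\ell}$ as a pair $(\alpha_{\ge \ell+1},\alpha_\ell)$. We select a large set of suffixes $\alpha_{\ge\ell+1}$ such that, for many prefixes $z_\ell$, the suffix fiber is large. We then fix all transcripts that players $[\ell+1:L]$ send to players $[-1:\ell]$ in the first $\ell+1$ epochs. Since the inputs to players $[-1:\ell-1]$ range over $Z_{<\ell}$, fixing them requires a union or pigeonhole over $Z_{<\ell}$. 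The soft transcripts are handled with the same product and averaging lemma used in \cite{chen2025theoretical}. Note that $|\Pi_{j,i}|=2Hdp\cdot m_{(i)}$ scales with the receiving player's token count, which is why the sizes $N_i$ must grow quickly.

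The linear transcripts are the new ingredient. They pass only from player $\ell+1$ to player $\ell$, and everything reaching players $<\ell$ is relayed through player $\ell$, which already lies in the prefix. So at level $\ell+1$ we need only fix $\Sigma_{\ell+1}^{(\ell'),m}$, which costs at most $(\ell+1)\cdot 2^{3L^2}\cdot Hd(d+1)p$ bits per prefix configuration. We then absorb this cost by one more pigeonhole, keeping $|R_{\ge\ell+1}|$ large.

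Beyond fixing transcripts, $Z_\ell$ must keep $|\mathcal I_\ell(Z_{<\ell+1})|$ large. We choose the functions $z_\ell$ so that $i_\ell=z_\ell(w_{\ell-1},i_{\ell-1})$ spreads the large set $\mathcal I_{\ell-1}$ together with the free query coordinate $w_{\ell-1}$ over many values. The restriction must also stay compatible with the transcript fixings. Concretely, we restrict $z_\ell$ only on the "relevant" inputs determined by $\mathcal I_{\ell-1}$ and leave the remaining entries as a large product set. This gives a lower bound of the form $|\mathcal I_\ell|\ge n_\ell^{1-\epsilon}\,|\mathcal I_{\ell-1}|$ and a density bound for $R_{\ge\ell+1}$ of $2^{-\text{poly}(Hdp)\,2^{3L^2}N_{\ell}}$, or better.

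The main obstacle is the bookkeeping. The losses must be shown to stay small relative to $N_\ell\log N_\ell$ and $n_\ell$ across all $L$ levels. In particular, the $2^{3L^2}$ linear rounds multiply the transcript length, and the losses compound through the pigeonholing over $Z_{<\ell}$. We would first prove a single-step lemma of the form "density $\delta$ implies density $\delta\cdot 2^{-c\,(Hdp)^2 2^{3L^2}|Z_{<\ell}|}$". We would then check that $|Z_{<\ell}|$ stays below $(Hdp)^{O(16^{\ell})}$, well under $N_\ell$, which is exactly what the exponent $2^{-4L-2}$ in Theorem~\ref{thm:main} provides.
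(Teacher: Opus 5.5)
\paragraph{Overall assessment.} Your overall architecture matches the paper: the induction of \cite{chen2025theoretical}, plus one extra pigeonhole for the linear messages $\Sigma_{\ell+1}$. However, the step that makes the induction affordable is missing, and your quantitative targets do not close.

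\paragraph{The missing independence step.} You assert that fixing the epoch-$(\ell+1)$ soft transcripts to players $[-1:\ell-1]$ needs a pigeonhole over $Z_{<\ell}$ only. A priori this is false. These transcripts are determined by $X_j^{(\ell)}$ and $X_i^{(\ell)}$, and for $i<\ell$ the state $X_i^{(\ell)}$ contains $\Pi^{(\ell')}_{\ell,i}$ and the linear messages relayed down through player $\ell$, all of which may depend on $z_\ell$. The paper's Lemma~\ref{lem:key-observation} removes this dependence, and this is where the hybrid model genuinely differs from \cite{chen2025theoretical}. It needs $Z_\ell$ to be the second factor of a genuine rectangle $S_{\ge\ell+1}\times Z_\ell\subseteq R_{\ge\ell}$ (Lemma~\ref{lem:joint-set}). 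A set of $z_\ell$ with large fibers, as in your sketch, is not enough. Since $R_{\ge\ell}$ already forces $\Pi^{(\ell'')}_{t,r}=\Lambda^{(\ell,\ell'')}_{t,r}$ for $t\ge\ell$ and $\Sigma_\ell^{(\ell''),m}=\Xi_\ell^{(\ell''),m}$, a downward induction on $r$ shows that no state $X_r^{(\ell'),m}$ with $r\le\ell-1$, $\ell'\le\ell$ varies with $z_\ell\in Z_\ell$.

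Your remark that player $\ell$ ``already lies in the prefix'' correctly explains why only $\Sigma_{\ell+1}$ must be fixed at level $\ell+1$. For this step it points the wrong way, though. What saves you is that $\Sigma_\ell$ was made constant on all of $R_{\ge\ell}$ at the previous level. Without this observation the pigeonhole ranges over $Z_{\le\ell}$ and exceeds the budget $\log\Delta_{\ell+1}$ by a factor of order $x_\ell/\sqrt K\ge\sqrt K$.

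The base case needs the analogous trick, which you also omit. You must fix player $1$'s epoch-$1$ message to player $-1$ (the set $S$) \emph{before} choosing $Z_1\subseteq S$, so that epoch-$2$ transcripts to player $-1$ are indexed by $Z_0\times Z_{-1}$ alone. Your count $2^{O(Hdp)}$ there is also wrong. Each pattern must be specified for every prefix configuration and has length $2Hdp\,m_{(i)}$; for player $1$ this is $2Hdp\cdot m x_1\cdot 2L$ bits.

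\paragraph{The quantitative targets.} The argument behind Lemma~\ref{lemma:id-suffices} needs the final loss $\log\Delta_L=4\sqrt K\,x_0\cdots x_{L-2}\,n_1\cdots n_{L-1}$ to be smaller than roughly $n_{L-1}|\mathcal I_{L-1}|\log N_{L-1}$. That is the number of bits of $z_L$ pinned down on the reachable inputs $[n_{L-1}]\times\mathcal I_{L-1}$.

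Comparing losses with $N_\ell$ is the wrong yardstick. In your own regime $n\le(Hdp)^{4\cdot16^L}$ one has $\log N_{L-1}\le 2^{4L+2}\log(Hdp)$. So the density $2^{-\mathrm{poly}(Hdp)2^{3L^2}N_{L-1}}$ you propose for $R_{\ge L}$ is at most $1/|A_L|=2^{-N_{L-1}\log N_{L-1}}$ and says nothing.

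Because the loss carries the factor $x_0\cdots x_{L-2}$, $|\mathcal I_\ell|$ must grow by about $8^{-L}x_\ell n_{\ell-1}$ per level. In other words, the $x_\ell$ functions in $Z_\ell$ must send $[n_{\ell-1}]\times\mathcal I_{\ell-1}$ to nearly disjoint images; this is what $\Theta_\ell$ records in Lemmas~\ref{lem:initial} and~\ref{lem:joint-set}. A factor $n^{1-\epsilon}$ coming from the query coordinate alone falls far short.

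Your device of restricting $z_\ell$ on the relevant inputs and leaving the other entries as a product set is also unavailable. After pigeonholing, $R_{\ge\ell}$ is an arbitrary dense set, so you must extract the rectangle and pay $\Delta_\ell^{2x_\ell}$.

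\paragraph{Minor point.} $\Sigma_{\ell+1}^{(\ell'),m}$ depends only on $z_{\ge\ell+1}$. It is therefore a single value costing $Hd(d+1)p(a_1+\cdots+a_{\ell+1})$ bits in total, not an amount per prefix configuration.
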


\subsection{The Initial step} 
We first consider the base case $\ell = 2$. 

\textbf{Step 1: Choosing \texorpdfstring{$Z_{0}, Z_{1}$}{Z0, Z1}. }
We set $Z_0 = [x_0]$. The next step is to select the set $Z_1 \subseteq A_1$.
Consider all possible first epoch messages from the player $1$ to the player $-1$, 
The total number of of distinct such message patterns is $2^{2Hdp \cdot |Z_{-1}|} = 2^{2Hdp (n_1\cdots n_{L-1})}$. 
We select by the pigeonhole principle a message pattern  $\wt{\Psi}_{1,-1}^{(1)} \in \{0,1\}^{2Hdp \cdot (n_1\cdots n_{L-1})}$ such that the consistency set $S$ of input $\wt{z_1}\in A_1$ satisfies
$|S| \geq |A_1| \cdot 2^{-2Hdp \cdot (n_1\cdots n_{L-1})}$. 
We then retract a suitable subset $Z_1 \subseteq S$ by the following lemma. 

\begin{lemma}[Lemma \ref{lem:initial}]
There exists a subset $Z_1 \subseteq S$ such that $\mathcal{I}_1(Z_{<2})$ is large. 
\end{lemma}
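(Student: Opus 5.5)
Recall that $S\subseteq A_1$ is the set of functions $\wt z_1:[N_0]=[m]\to[m]$ (taking $N_0=m$) whose first-epoch message to player $-1$ equals $\wt\Psi^{(1)}_{1,-1}$. Since $Z_0=[x_0]$, we have
\[
\mathcal{I}_1(Z_{<2})=\{\wt z_1(i_0): i_0\in[x_0],\ \wt z_1\in Z_1\}.
\]
It is therefore enough to find $Z_1\subseteq S$ whose members, restricted to $[x_0]$, hit many values. I would take $Z_1=S$ itself, or a product-structured subfamily of it if the later induction needs one. The remaining task is to lower bound the image size $|\{\wt z_1(i): i\in[x_0],\ \wt z_1\in S\}|$. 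By the pigeonhole step, $|S|\ge |A_1|\cdot 2^{-2Hdp\,(n_1\cdots n_{L-1})}$.

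The key step is a counting argument. Suppose the union of the images $\wt z_1([x_0])$ over $\wt z_1\in S$ lies in a set $T\subseteq[m]$ with $|T|=t$. Then every $\wt z_1\in S$ maps $[x_0]$ into $T$, so
\[
|S|\le t^{x_0}\, m^{m-x_0}=|A_1|\,(t/m)^{x_0}.
\]
Comparing with the pigeonhole bound gives $(m/t)^{x_0}\le 2^{2Hdp\, n_1\cdots n_{L-1}}$, that is,
\[
t\ \ge\ m\cdot 2^{-2Hdp\,n_1\cdots n_{L-1}/x_0}.
\]
The parameter choice (as in Appendix \ref{apd:hybrid}) must then make $x_0\ge 2Hdp\cdot n_1\cdots n_{L-1}\cdot c$ for a suitable constant $c$, and $m\gg x_0$. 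With that choice, $t\ge m/2^{1/c}$, so $|\mathcal{I}_1(Z_{<2})|$ is a constant fraction of $m$. This is the sense of ``large'' that the later induction and Lemma \ref{lemma:id-suffices} need.

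If the induction requires more than a large image, I would refine $Z_1$ by a second pigeonhole or greedy step. One possible requirement is that each value in $\mathcal{I}_1$ is realized by many $\wt z_1$. Another is that $Z_1$ has the form of a product over coordinates $i\in[x_0]$ with large fibers. In either case, I would fix the values of $\wt z_1$ outside $[x_0]$ to a popular assignment, losing at most a factor $m^{m-x_0}$. I would then apply the same counting on the coordinates in $[x_0]$, using an averaging argument: many coordinates must each take many distinct values, since otherwise the product bound above is again violated.

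The main obstacle is this parameter bookkeeping. The exponent loss $2Hdp\,n_1\cdots n_{L-1}$ must be dominated by $x_0$, while $x_0\le m$ and the total length $n\le (Hdp)^{4\cdot16^L}$ must still be respected. I would first verify the inequality with the paper's concrete choices of $m$, $x_0$, and $n_\ell$ before writing out the counting.
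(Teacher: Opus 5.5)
\textbf{There is a genuine gap: you prove a different quantitative statement from the one the construction needs.}

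The precise version is the appendix form, imported from Chen et al.\ (Lemma 4.6). It asks for $Z_1\subseteq S$ with $|Z_1|=x_1$ and $|\{\wt z_1(i_0):\wt z_1\in Z_1,\ i_0\in Z_0\}|\ge 8^{-L}x_0x_1=\Theta_1$. The size cap is essential. Steps 2.2--2.3 pigeonhole over transcripts indexed by $\wt z_1\in Z_1$: up to $2^{2Hdp\cdot x_0|Z_1|\cdot 2L}$ patterns to player $0$ and $2^{2Hdp\cdot m|Z_1|\cdot 2L}$ to player $1$, and these must fit inside $\Delta_2$. Taking $Z_1=S$, with $|S|\ge m^m2^{-2Hdp\,n_1\cdots n_{L-1}}$, makes these counts vastly exceed $|A_L|\cdots|A_2|$, so no $R_{\ge 2}$ survives.

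Conversely, once $|Z_1|=x_1$, your target ``a constant fraction of $m$'' is unattainable. There are only $x_0x_1$ pairs $(i_0,\wt z_1)$, and $m>x_0x_1$ for these parameters. Your bound on $t$ concerns the union of images over all of $S$, and it does not pass to a small subfamily. Nothing in it prevents that union from being carried by functions whose images of $[x_0]$ are tiny or heavily overlapping. In that case any $x_1$ of them hit far fewer than $8^{-L}x_0x_1$ values. The refinements you float (popular outside assignment, large fibers, product structure) target properties the induction never uses.

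\textbf{The missing idea is a greedy choice of $x_1$ functions with near-injective, near-disjoint images on $[x_0]$.} Suppose $f_1,\dots,f_t\in S$ ($t<x_1$) are chosen, and let $U=\bigcup_{s\le t}f_s([x_0])$, so $|U|\le x_0x_1$. Put $\delta=8^{-L}$. Any $f$ with $|f([x_0])\setminus U|<\delta x_0$ maps $[x_0]$ into $U\cup W$ for some $W$ with $|W|\le\delta x_0$. Hence such $f$ form at most a
\[
m^{\delta x_0}\Bigl(\frac{x_0x_1+\delta x_0}{m}\Bigr)^{x_0}\le\Bigl(\frac{2x_0x_1}{m^{1-\delta}}\Bigr)^{x_0}\le 4^{-x_0}
\]
fraction of $A_1$. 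The last step uses $m^{1-\delta}\ge 8x_0x_1$: the exponent of $K$ in $m^{1-\delta}/(x_0x_1)$ is $8^L/448+5/7-6/(7\cdot 8^L)>0.8$, and $K\ge 2^{24}$.

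Since $|S|/|A_1|\ge 2^{-2Hdp\,n_1\cdots n_{L-1}}$ and $x_0>Hdp\,n_1\cdots n_{L-1}$, some $f_{t+1}\in S$ adds at least $\delta x_0$ fresh values. Such an $f_{t+1}$ is automatically distinct from $f_1,\dots,f_t$, whose images lie in $U$. After $x_1$ rounds, $|\mathcal{I}_1(Z_{<2})|\ge 8^{-L}x_0x_1$.

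Your observation that $x_0$ dominates $Hdp\,n_1\cdots n_{L-1}$ is the right ingredient. It must be applied to this per-step count of fresh values, not to the union over all of $S$.
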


The next step is to fix the transcripts from players $j \in [2:L-1]$ to players $i = -1, 0, 1$ at the first two epochs. 

\textbf{Step 2.1: Fixing the transcripts to player \texorpdfstring{$-1$}{-1}. } 
We begin with the first epoch. 
Our goal is to fix soft transcripts sent to player $-1$ in the first epoch for every $\wt{z}_1\in Z_1, \wt{z}_0 \in Z_{0}, \wt{z}_{-1}\in Z_{-1}$. The first epoch message from player $j$ to player $-1$ depends only on $\wt{z}_{-1}$ and player $j$'s input, and is independent of $\wt{z}_1, \wt{z}_0$. 
Therefore, the total number of possible such transcript patterns is at most $2^{2Hdp \cdot |Z_{-1}|} = 2^{2Hdp \cdot (n_1\cdots n_{L-1})}$.
By the pigeonhole principle, we can choose a fixed pattern with the consistency set of maximal size
$|C_1| \geq |A_{L}|\cdots |A_2| \cdot 2^{-2Hdp \cdot (n_1\cdots n_{L-1}) \cdot L}.$ 

For the second epoch, the goal is to fix soft transcripts sent to player $-1$ in the second epoch for every $\wt{z}_1\in Z_1, \wt{z}_0 \in Z_{0}, \wt{z}_{-1}\in Z_{-1}$. 
Crucially, the transcript from player $j$ depends only on the information state $X_{-1}^{(1)}$ and $X_{j}^{(1)}$, which are themselves independent of the choice of $z_1 \in Z_1$. 
This independence holds because the only component of $X_{-1}^{(1)}$ and $X_{j}^{(1)}$ that depends on $z_{1}$ is the first epoch message from player $1$ to $-1$, which is fixed. 
We choose transcripts with the consistency set $C_2$ of maximal size. 

\textbf{Step 2.2: Fixing the transcripts to player \texorpdfstring{$0$}{0} and player \texorpdfstring{$1$}{1}. } 
Similarly, 
We fix the value of soft transcripts sent to player $0$ in the first two epochs so that the consistency set $C_3$ 
has maximal size.  

We follow the same strategy to fix the value of soft transcripts and the linear transcripts sent to player $1$ in the first two epochs
so that the consistency set $C_4$ 
attains its maximal size. 
Take $R_{\ge 2}$ to be $C_4$, this concludes the proof for the base case $\ell=2$. 

\subsection{Inductive step} 
Suppose we are done up to $\ell \in [2 : L-1]$. 
We continue our construction for $\ell+1$. 

The key insight is that $Z_{\ell}$ is indistinguishable to players $[-1:\ell-1]$ after $\ell$ epochs, when these players take input from $Z_{< \ell}$. 
Hence, the $(\ell+1)$-th epoch transcripts to players $[-1:\ell-1]$ are independent of the choice of $z_{\ell}\in Z_{\ell}$. 

\textbf{Step 1: Choosing the set \texorpdfstring{$Z_{\ell}$}{ }. }
Recall that the size of $R_{\geq \ell}$ is large. 
We would like to select a rectangular subset from $R_{\geq \ell}$. 
As in \cite{chen2025theoretical}, we have the following lemma. 
\begin{lemma}[Lemma \ref{lem:joint-set}]
There exists a subset $S^{(\ell)} = S^{(\ell)}_1 \times S^{(\ell)}_2$, where $S^{(\ell)}_1 \subseteq A_{L} \times \cdots \times A_{\ell+1}$, $S^{(\ell)}_2 \subseteq A_{\ell}$, such that $S_1^{(\ell)}$ and $\mathcal{I}_{\ell-1}(S_2^{(\ell)})$ are large. 
\end{lemma}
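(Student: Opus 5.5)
We would prove this the way \cite{chen2025theoretical} prove their joint-set lemma: a density and counting argument on the large set $R_{\ge \ell}$. Here $\mathcal{I}_{\ell-1}(S_2^{(\ell)})$ means $\mathcal{I}_{\ell}(Z_{<\ell}\times S_2^{(\ell)})$, the set of values $z_\ell(w_{\ell-1},i_{\ell-1})$ reachable when $z_\ell$ ranges over $S_2^{(\ell)}$.

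Write $R_{\ge \ell}\subseteq A_{\ge \ell+1}\times A_\ell$. For each $\alpha\in A_{\ge\ell+1}:=A_L\times\cdots\times A_{\ell+1}$, let the fiber be $F(\alpha)=\{z_\ell : (\alpha,z_\ell)\in R_{\ge \ell}\}$. By the induction hypothesis, $|R_{\ge \ell}|\ge |A_{\ge \ell+1}|\,|A_\ell|\,2^{-K}$, where $K$ is the total number of transcript bits fixed so far. This is polynomial in $Hdp$ times $n_1\cdots n_{L-1}$, plus the linear-transcript contribution, which is at most $2^{3L^2}\cdot L\cdot Hd(d+1)p$ per epoch. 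We split $A_{\ge\ell+1}$ into a heavy part, where $|F(\alpha)|\ge |A_\ell|2^{-K-1}$, and a light part. Averaging shows the heavy part has size at least $|A_{\ge\ell+1}|2^{-K-1}$.

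A rectangle needs a common fiber, so we bucket the heavy $\alpha$ by a coarse signature of $F(\alpha)$. The key observation is that a large subset of $A_\ell=[N_{\ell-1}]^{N_{\ell-1}}$ must be spread out on the coordinates indexed by $I=\mathcal{I}_{\ell-1}(Z_{<\ell})$ paired with the admissible $w_{\ell-1}$. Concretely, we would show that any $F\subseteq A_\ell$ with $|F|\ge |A_\ell|2^{-K-1}$ contains a subproduct $G=\prod_{c}G_c$, with $G_c\subseteq[N_{\ell-1}]$ over the relevant coordinates $c$, in which most $G_c$ are large. Then $\bigcup_c G_c$ gives many distinct values of $i_\ell$. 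We would obtain this by the standard entropy/Shearer argument (or a greedy coordinate-by-coordinate restriction): if $F$ were concentrated on few values in many coordinates, its size would be less than $|A_\ell|2^{-K-1}$. Then we take $S_2^{(\ell)}$ to be one such product set $G$, chosen by pigeonhole among a family of candidate products of size at most $2^{O(N\log N)}$ on the relevant coordinates, so that many heavy $\alpha$ satisfy $F(\alpha)\supseteq G$. That collection of $\alpha$ is $S_1^{(\ell)}$.

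The main obstacle is making this pigeonhole affordable. Requiring full containment $G\subseteq F(\alpha)$ for a common $G$ can lose a factor that is doubly exponential. The fix we would try first is the trick in \cite{chen2025theoretical}: fix all coordinates of $z_\ell$ outside the $|I|\cdot n_{\ell-1}$ relevant ones by pigeonhole, losing only $|A_\ell|$ on that part. On the relevant coordinates, restrict each $G_c$ to a small fixed-size set, for example a set of size $2$ (or size $t$). Then the number of candidate $G$ is at most $N_{\ell-1}^{2|I|n_{\ell-1}}$. This is small compared with $|A_{\ge\ell+1}|2^{-K}$, because $N_{L-1}^{N_{L-1}}$ dominates. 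Still, $\bigcup_c G_c$ covers many values, since the $G_c$ are chosen to be largely disjoint from values already hit. The parameter choices in Appendix~\ref{apd:hybrid}, with $n\le (Hdp)^{4\cdot16^L}$ and the $2^{3L^2}$ linear layers absorbed into $K$, are what should guarantee that $|S_1^{(\ell)}|\ge |A_{\ge\ell+1}|2^{-2K-O(N\log N)}$ and $|\mathcal{I}_{\ell-1}(S_2^{(\ell)})|\ge |I|\cdot n_{\ell-1}/\poly(Hdp)$, both of which count as large.
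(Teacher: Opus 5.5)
\paragraph{There is a genuine gap, in the shape of the object you build.} The version of this lemma that the induction actually uses (Lemma~\ref{lem:joint-set} in the appendix) needs three things: $|S_2^{(\ell)}| = x_\ell$, $|S_1^{(\ell)}|\ge |A_L|\cdots|A_{\ell+1}|/\Delta_\ell^{2x_\ell}$, and at least $\Theta_\ell$ reachable values. Below, $K$ is the paper's parameter, not your transcript-bit count.

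Your key observation is that every heavy fiber $F(\alpha)$ contains a product $\prod_c G_c$ with $|G_c|\ge 2$ on many relevant coordinates. This is false.
\begin{itemize}
\item A uniformly random subset of $A_\ell$ of density $1/2$ contains no such product with more than $O(\log(N_{\ell-1}\log N_{\ell-1}))$ doubled coordinates. There are at most $N_{\ell-1}^{2N_{\ell-1}}$ candidate products, and one with $k$ doubled coordinates survives with probability $2^{-2^k}$.
\item Entropy or Shearer arguments control how spread out the marginals are. They do not give containment of rectangles.
\item Your fix (freeze the other coordinates, take $|G_c|=2$) only shortens the candidate list. It still needs each heavy fiber to contain a $2^{|I|n_{\ell-1}}$-element product, which is exactly what fails.
\end{itemize}

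Even granting such a $G$, it is the wrong object.
\begin{itemize}
\item Its size is $|S_2| = 2^{|I|n_{\ell-1}}$ rather than $x_\ell$. Every later pigeonhole step, e.g.\ the count of $\Psi$ in Lemma~\ref{lem:size2}, is exponential in $|Z_\ell|$.
\item It reaches at most $2|I|n_{\ell-1}$ values. This is below $\Theta_\ell = 8^{-L}x_\ell\, n_{\ell-1}\Theta_{\ell-1}$, since $|I|\le x_0\cdots x_{\ell-1}n_1\cdots n_{\ell-2}$ and $x_\ell\ge K>2\cdot 8^{L\ell}$.
\item That factor-$x_\ell$ growth per level is what Lemma~\ref{lemma:id-suffices} needs at $\ell=L$. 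So your target $|I|n_{\ell-1}/\poly(Hdp)$ is too weak.
\item A pigeonhole over $C$ candidates divides the density of $S_1$ by $C$. So $C$ must be charged against $\Delta_\ell^{2x_\ell}$, not compared with $|A_{\ge\ell+1}|$.
\end{itemize}

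\paragraph{The missing idea: take $S_2$ to be a random $x_\ell$-tuple.} Combine convexity with a first-moment count. The paper gives no proof of its own and cites \cite{chen2025theoretical}, Lemma 4.7. The argument below yields exactly its bounds; note the exponent $2x_\ell$.

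Set up as follows.
\begin{itemize}
\item Write $A_{\ge\ell+1}=A_L\times\cdots\times A_{\ell+1}$ and $\delta(\alpha)=|F(\alpha)|/|A_\ell|$, so $\mathbb{E}_\alpha\delta(\alpha)\ge\Delta_\ell^{-1}$.
\item For a uniformly random $Y=(y_1,\dots,y_{x_\ell})\in A_\ell^{x_\ell}$, let $S_1(Y)=\{\alpha: y_1,\dots,y_{x_\ell}\in F(\alpha)\}$.
\item Then $S_1(Y)\times\{y_1,\dots,y_{x_\ell}\}\subseteq R_{\ge\ell}$ automatically.
\item By Jensen, $\mathbb{E}_Y|S_1(Y)|/|A_{\ge\ell+1}|=\mathbb{E}_\alpha\delta(\alpha)^{x_\ell}\ge\Delta_\ell^{-x_\ell}$.
\end{itemize}

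Next, most tuples spread out.
\begin{itemize}
\item Fix $I'\subseteq\mathcal{I}_{\ell-1}$ with $|I'|=\lceil\Theta_{\ell-1}\rceil$.
\item The values of the $y_k$ on $[n_{\ell-1}]\times I'$ are $M\ge x_\ell n_{\ell-1}\Theta_{\ell-1}=8^L\Theta_\ell$ independent uniform elements of $[N_{\ell-1}]$.
\item Since $m=K x_0\cdots x_{L-1}$, we have $N_{\ell-1}\ge KM$.
\item Hence the probability of fewer than $\Theta_\ell$ distinct values, or of two coinciding $y_k$, is $2^{-\Omega(M\log K)}$.
\item This is far below $\tfrac12\Delta_\ell^{-x_\ell}$, because $x_{\ell-1}\ge K^{3}\sqrt{K}\,n_\ell\cdots n_{L-1}$.
\end{itemize}

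Finally, pick a good tuple.
\begin{itemize}
\item Combining the two bounds, $\mathbb{E}_Y\bigl[|S_1(Y)|\,\mathbf{1}_{Y\text{ good}}\bigr]\ge\tfrac12\Delta_\ell^{-x_\ell}|A_{\ge\ell+1}|\ge\Delta_\ell^{-2x_\ell}|A_{\ge\ell+1}|$.
\item So some good $Y$ has $|S_1(Y)|\ge|A_{\ge\ell+1}|/\Delta_\ell^{2x_\ell}$.
\item Take $S_2^{(\ell)}=\{y_1,\dots,y_{x_\ell}\}$ and $S_1^{(\ell)}=S_1(Y)$.
\end{itemize}
No rectangle inside a single fiber is ever needed.
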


We take $Z_{\ell} = S^{(\ell)}_2$ and $S_{\geq \ell+1} = S_1^{(\ell)}$. 
Next, we are going to fix the transcripts. 
Recall that we need to fix all transcripts from players $[\ell+1: L]$ to players $[-1:\ell]$ in the first $\ell+1$ epochs, when players $[-1:\ell]$ receive input from $Z_{\leq \ell} = Z_{-1} \times Z_0 \times \cdots \times Z_{\ell}$.
We proceed in a few steps.

\textbf{Step 2.1: Fixing the transcripts to players \texorpdfstring{$[-1:\ell-1]$}{ } in the first \texorpdfstring{$\ell$}{l} epochs. }
We simply the transcripts given by the induction hypothesis. 
Lemma \ref{lem:step1} guaranties the consistency.  

\textbf{Step 2.2: Fixing the transcripts to players \texorpdfstring{$[-1:\ell-1]$}{ } at the \texorpdfstring{$(\ell+1)$}{ }-th epoch. }  
Our key insight is that $Z_{\ell}$ is indistinguishable to players $[-1:\ell-1]$ when they take input from $Z_{\leq \ell-1}$. Hence, their transcripts are independent of the choice of $z_{\ell}\in Z_{\ell}$.
We consider the soft transcripts from players $[\ell+1,L]$ to players $[-1:\ell-1]$ at the $(\ell+1)$-th epoch
\begin{align*}
\Phi^{(\ell+1)} =  \left(\Phi_{j, i}^{(\ell+1)}\right)_{j \in [\ell+1:L], i\in [-1:\ell-1]}
\end{align*}
where 
\begin{align*}
\Phi_{j, i}^{(\ell+1)} = \left(\Phi_{j, i}^{(\ell+1)}(\wt{z}_{\ell-1}, \ldots, \wt{z}_i)\right)_{\wt{z}_{\ell-1} \in Z_{\ell} \ldots, \wt{z}_i \in Z_{i}} \\
\quad \text{and} \quad \Phi_{j, i}^{(\ell+1)}(\wt{z}_{\ell-1}, \ldots, \wt{z}_i) \in \mathsf{domain}(\Pi_{j, i}^{(\ell+1)})
\end{align*}
Comparing $\Lambda_{j,i}^{(\ell+1, \ell+1)}$ with $\Phi_{j, i}^{(\ell+1)}$, we note that $\Phi_{j, i}^{(\ell+1)}$ is independent of $\wt{z}_{\ell} \in Z_{\ell}$. 
For any $\Phi^{(\ell+1)}$, define $S(\Phi^{(\ell+1)})$ to be the set of all $(\wt{z}_{L}, \ldots, \wt{z}_{\ell+1}) \in S_{\geq \ell+1}$ that are consistent with the transcripts $\Phi^{(\ell + 1)}$. 
\begin{lemma}[Lemma \ref{lem:key-observation}]
We have
\[
\bigcup_{\Phi^{(\ell+1)}} S(\Phi^{(\ell+1)}) = S_{\geq \ell+1}. 
\]
\end{lemma}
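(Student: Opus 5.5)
The identity $\bigcup_{\Phi^{(\ell+1)}} S(\Phi^{(\ell+1)}) = S_{\geq \ell+1}$ should follow from the definition of $S(\Phi^{(\ell+1)})$ as a consistency set plus one fact: for fixed $\wt{z}_{\geq \ell+1}$, the $(\ell+1)$-th epoch transcripts to players $[-1:\ell-1]$ do not depend on $\wt{z}_\ell \in Z_\ell$. I would argue the two inclusions separately.

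\textbf{Easy inclusion.} By construction, each $S(\Phi^{(\ell+1)})$ is a subset of $S_{\geq \ell+1}$. Hence the union is contained in $S_{\geq \ell+1}$.

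\textbf{Reverse inclusion.} Fix any $(\wt{z}_L,\ldots,\wt{z}_{\ell+1}) \in S_{\geq \ell+1}$. I will exhibit a pattern $\Phi^{(\ell+1)}$ with which it is consistent. For each $j \in [\ell+1:L]$, each $i\in[-1:\ell-1]$, and each tuple $(\wt{z}_{\ell-1},\ldots,\wt{z}_i) \in Z_{\ell-1}\times\cdots\times Z_i$, define $\Phi_{j,i}^{(\ell+1)}(\wt{z}_{\ell-1},\ldots,\wt{z}_i)$ as the actual transcript $\Pi^{(\ell+1)}_{j,i}(\wt{z}_L,\ldots,\wt{z}_\ell,\wt{z}_{\ell-1},\ldots,\wt{z}_i)$ for some $\wt{z}_\ell\in Z_\ell$.

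\textbf{Main obstacle: well-definedness.} The real work is showing this value is the same for every $\wt{z}_\ell \in Z_\ell$. Then $\Phi$ is a legitimate pattern that does not depend on $\wt{z}_\ell$, and the chosen tuple is consistent with it for all $\wt{z}_\ell$ simultaneously.

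Independence follows from the epoch structure. The message $\Pi^{(\ell+1)}_{j,i}$ is a function only of $X_j^{(\ell)}$ and $X_i^{(\ell)}$.

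\emph{The receiver's state $X_i^{(\ell)}$.} It consists of three parts:
\begin{itemize}
\item the input $\wt{z}_i$;
\item soft transcripts from players $j'>i$ over the first $\ell$ epochs;
\item linear transcripts passed down the chain during those epochs.
\end{itemize}
The pieces coming from players in $[i+1:\ell-1]$ depend only on inputs in $Z_{<\ell}$ and the higher players' inputs. The pieces originating at or above player $\ell$ are exactly those fixed by the induction hypothesis. Concretely:
\begin{itemize}
\item soft transcripts from players $[\ell:L]$ to $[-1:\ell-1]$ are constant over $R_{\geq \ell}$;
\item linear transcripts $\Sigma^{(\ell'),m}_{\ell}$ entering player $\ell-1$ are constant over $R_{\geq \ell}$.
\end{itemize}
These stay constant on the rectangle $S^{(\ell)}_1\times S^{(\ell)}_2\subseteq R_{\geq\ell}$ obtained from Lemma \ref{lem:joint-set}. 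By induction on $i$ downward from $\ell-1$, using that players are forgetful and that every transcript received below player $\ell$ is a function of these constant messages and of $Z_{<\ell}$ inputs, the state $X_i^{(\ell)}$ is independent of $\wt{z}_\ell$.

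\emph{The sender's state $X_j^{(\ell)}$, for $j\geq \ell+1$.} It is built from $\wt{z}_j$ together with messages sent by lower players $i'<j$. These include $X_{i'}^{(\ell-1)}$ broadcasts; for $i'<\ell$ these are independent of $\wt{z}_\ell$ by the argument above. The delicate point is the messages originating from player $\ell$ itself, for which there are two options:
\begin{itemize}
\item I would check whether the paper's $\Phi$ is indexed only by lower-player inputs because it is meant to be evaluated on the fixed $X_\ell$-component. If so, independence is inherited from the induction hypothesis through the fixed transcripts.
\item Otherwise, I would note that $\Pi_{j,i}^{(\ell+1)}$ with $i\le\ell-1$ uses only $X_j$, $X_i$, and the softmax over positions of player $i$. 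Under the forgetful convention, $j$'s response to $i$ depends on $X_j^{(\ell)}$ only through what the protocol allows, so I would invoke the analogous fixing step of \cite{chen2025theoretical} (their Lemma on key observation) verbatim.
\end{itemize}

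Once independence is established, each element of $S_{\geq\ell+1}$ lies in $S(\Phi^{(\ell+1)})$ for its induced $\Phi$. This gives the reverse inclusion, and the union equals $S_{\geq \ell+1}$.
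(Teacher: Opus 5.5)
Your overall structure matches the paper's proof. You reduce the equality to showing that, for fixed $\wt z_{\ge \ell+1}\in S_{\ge\ell+1}$ and fixed $\wt z_{\ell-1},\dots,\wt z_i$, the transcript $\Pi^{(\ell+1)}_{j,i}$ is the same for every $z_\ell\in Z_\ell$. You split this into the receiver state $X_i^{(\ell)}$ and the sender state $X_j^{(\ell)}$. For the receiver you run a downward induction from $r=\ell-1$, using the transcripts fixed by the induction hypothesis, which remain constant on $S_1^{(\ell)}\times S_2^{(\ell)}\subseteq R_{\ge\ell}$. That receiver-side argument is the paper's.

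The gap is the sender side. You describe $X_j^{(\ell)}$ as built from $\wt z_j$ together with messages from lower players $i'<j$, including their broadcasts $X_{i'}^{(\ell-1)}$. You then leave the resulting ``delicate point'' about player $\ell$ unresolved. That description has the information flow backwards for this model. By the update rules, player $j$ only ever adds to its state the soft transcripts $\Pi^{(\ell')}_{j',j}$ with $j'>j$ and the linear transcripts $\Sigma^{(\ell'),m}_{j+1}$. A lower player's broadcast $X_{i'}^{(\ell'-1)}$ is used only to compute the reply $\Pi^{(\ell')}_{j,i'}$ and is then discarded; that is what ``forgetful'' means. By induction over epochs and rounds, $X_j^{(\ell)}$ is therefore a function of $(z_j,\dots,z_L)$ alone, so it is constant in $z_\ell$ whenever $j\ge \ell+1$. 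This is the paper's one-line justification (``clear since $j>\ell$'').

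Neither of your fallback options supplies this fact. The induction hypothesis fixes only messages flowing \emph{down} into players $[-1:\ell-1]$, namely the $\Lambda^{(\ell)}$ and the $\Sigma_\ell$ into player $\ell-1$. It says nothing about anything player $\ell$ might send upward. Since $X_\ell^{(\ell-1)}$ contains $z_\ell$ verbatim, a sender that retained it could make $\Pi^{(\ell+1)}_{j,i}$ depend on $z_\ell$, and the lemma would then be false. Invoking the corresponding step of \cite{chen2025theoretical} ``verbatim'' is not an argument. Replace your sender paragraph with the observation that information only flows from higher-indexed to lower-indexed players, and the rest of your proof goes through.
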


Now as in \cite{chen2025theoretical}, we take the transcripts with maximal consistency set $T_{\ge \ell+1}= S(\wt{\Phi}^{(\ell+1)}) \subseteq S_{\geq \ell+1}$. 

\textbf{Step 2.3: Fixing the transcript to player \texorpdfstring{$\ell$}{l}. }
This follows a greedy selection strategy. 
First, consider the soft transcripts sent to player $\ell$ in the first $\ell+1$ epochs 
\begin{align*}
\Psi = \left(\Psi_{j, \ell}^{(\ell')}(\wt{z}_{\ell})\right)_{j\in [\ell+1:L], \ell' \in [\ell+1], \wt{z}_{\ell} \in Z_{\ell}}
\quad
\\\text{where}
\quad 
\Psi_{j, \ell}^{(\ell')}(\wt{z}_{\ell}) \in \mathsf{domain}(\Pi_{j, \ell}^{(\ell')})
\end{align*}
Define $T(\Psi)$ to be its consistency set. We can upper-bound the number of distinct $\Psi$ and use the pigeonhole principle to lower-bound the size of the maximal consistency set (Lemma \ref{lem:size2}). 

Next, we fix linear transcripts to player $\ell$ in the first $\ell+1$ epochs to maintain the maximal consistency set. 
We can then take $R_{\geq\ell+1}$ to be this consistency set, and this completes the induction step. 

\section{Lower bound for sparse attention}
\label{sec:sparse}

We now prove Theorem \ref{thm:main-sparse}. For convenience, we assume that $B$ divides $n$. The proof proceeds by relating a $(B,k)$-sparse attention to the following communication model. 

\begin{construction}{Model for $2$-Sum via sparse attention}
\textbf{Settings.} There are $\frac{n}{B}+1$ players. 

\textbf{Input.} Each of the first $\frac{n}{B}$ players receives $B$ tokens. The last player receives a single token.  

\textbf{Communication.} Each of the first $\frac{n}{B}$ players sends a $Hdp$-bit message to the last player. Subsequently, the last player may select $k$ of these players and access their full information. 

\textbf{Output.} The last player produces an answer based on its gathered information. 
\end{construction}

In this model, each block of $B$ tokens corresponds to one of the first $\frac{n}{B}$ players, the last token corresponds to the final player, and the compressed representation (of size $Hdp$ bits) corresponds to the message sent to the final player. 

One can prove that, for the last player to output the correct answer, each of the first players must communicate the set of its $B$ tokens via a compressed message of $Hdp$ bits. This implies the desired result; details are given in Appendix \ref{apd:sparse}. 

\section{Conclusion}
In this work, we establish a unified hierarchy of expressive power for efficient attention mechanisms within a communication complexity framework. 
By analyzing hybrid architectures, we prove an unconditional lower bound: even an abundance of linear attention layers cannot substitute for the compositional power of a single full attention layer in solving deep sequential function composition tasks. 
These tasks formally model the requirement for multi-step sequential computation within a single forward pass, capturing problems like multi-hop retrieval where each step’s output defines the input context for the next. 
Our theoretical framework introduces carefully constrained “retrieval scopes” at each internal computational step to enable rigorous analysis. This result provides a theoretical justification for the empirical observation that simply interleaving linear and full attention yields limited gains on deep reasoning problems. 


Furthermore, we identify a fundamental limitation of single-layer sparse attention mechanisms based on block compression and selection. 
For tasks like 2-Sum that require uniform pair-wise comparisons, any such sparse mechanism is provably weaker than full attention unless its effective capacity scales with the block size. 

Collectively, our results offer a principled theoretical framework for understanding efficient attention variants. 
They suggest that future architectures for long-context processing must either embrace the expressive power of full attention through smarter approximations, or explicitly design around these limitations—for instance, by employing task-sufficient sparsity patterns or incorporating mechanisms to bypass the communication bottlenecks we have identified. 
The communication models and proof techniques introduced here may serve as a foundation for further theoretical analysis of structured Transformers.




\bibliography{example_paper}
\bibliographystyle{icml2026}

\newpage
\appendix
\onecolumn
\section{Lower bounds for linear attention, log-linear attention and sparse attention}
\label{apd:linear-log} 
\label{sec:model}
In this section, we present communication models for different attention mechanisms on various tasks. 
We abstract the process of solving such tasks via linear attention and log-linear attention as certain communication protocols, and prove lower bounds in Theorems \ref{thm:main-func-eva}, \ref{thm:main-perm-com}, \ref{thm:main-func-eva-log}, \ref{thm:main-perm-com-log} and \ref{thm:main-func-sum} via communication complexity arguments. 

\subsection{Definitions}
\subsubsection{Log-linear attention}
In the log-linear attention mechanism proposed in \cite{guo2025log}, the output of the $\ell$-th attention layer becomes 
\begin{align}
\label{eq-log-linear}
    y^{(\ell,h)}_{i} = \sum_{r = 0}^{R-1}  \lambda_i^{(r,\ell,h)} S^{(r,\ell,h)}_{i} Qx^{(\ell-1,h)}_i
\end{align}
where $\lambda_i^{(r,\ell,h)}$ are weights that depend only on $x_i^{(\ell-1,h)}$, $R = \lceil \log_2 i + 1 \rceil + 1$ and $S_i^{(r,\ell,h)}$ are hidden states. In \cite{guo2025log}, these states are calculated via the recursion
\begin{align*}
S^{(r,\ell,h)}_{i} {=} 
\begin{cases} 
Vx_i^{(\ell-1,h)} (Kx_i^{(\ell-1,h)})^{\top} & \text{if } r {=} 0 \\
0 & \text{if } 0 {<} r {\leq} \operatorname{lssb}(i) \\
\sum_{r^\prime = 0}^{r - 1}S^{(r^\prime,\ell,h)}_{i-1} & \text{if } r {=} \operatorname{lssb}(i) {+} 1 \\
S^{(r,\ell,h)}_{i-1} & \text{if } r {>} \operatorname{lssb}(i) {+} 1\end{cases},
\end{align*}
where $\operatorname{lssb}(t) = \max{\{\ell \in \mathbb{N} \mid 2^\ell \text{ divides } t\}}$. 
Under such settings, a single-layer one-head log-linear attention of dimension $d$ can be viewed as an RNN of hidden dimension $3d^2$, with hidden states comprising the $3$ values of 
$S_i^{(r)}$ that are potentially non-zero, that is, $\mathrm{h}_{i} = (S_i^{(0)},S_i^{(\operatorname{lssb}(i))},S_i^{(\operatorname{lssb}(i)+1)})$. 

We consider a more general setting
\begin{align}
\label{eq-hidden-log}
    S^{(r,\ell,h)}_{i} = f^{(r,\ell,h)}\left((S_{i-1}^{(r^\prime,\ell,h)})_{r^\prime\in[0,r-1]},x_i^{(\ell-1,h)}\right) 
\end{align}
where $f^{(r,\ell,h)}$ are certain pre-trained functions. 
This enables computation with logarithmic time and memory, as it only requires maintaining a set of hidden states whose size is logarithmic in the sequence length. 

\subsubsection{Function evaluation and permutation composition} 
\begin{definition}[Function evaluation]
\label{def:eva}
In a function evaluation task 
$\eva(f,x)$, the input consist of a function $f:[n]\to [n]$ represented by $n$ tokens encoding $f(1),\cdots,f(n)$, and an element $x \in [n]$ described by a single token, and the goal is to output $\eva(f,x) = f(x)\in [n]$. 
\end{definition}

The permutation composition task can be viewed as $n$ parallel instances of the function evaluation task.
\begin{definition}[Permutation composition]
In a permutation composition task 
$\PerCom(\sigma,\tau)$, the input consist of two bijections $\sigma,\tau:[n]\to [n]$ (each occupies $n$ tokens to describe $\sigma(1),\cdots,\sigma(n)$ and then $\tau(1),\cdots,\tau(n)$), and the goal is to output their composition in the form of the sequence $\sigma(\tau(1)),\cdots, \sigma(\tau(n))$. 
\end{definition}

\subsection{Lower bound for linear attention on $\eva$}

In this subsection, we establish communication models of linear attention and log-linear attention for different tasks. 

\begin{theorem}
\label{thm:main-func-eva}
An $L$-layer linear attention cannot solve $\eva$ whenever $LHd(d+1)p < n \log n$, while a single-layer full attention solves the task with $Hdp = O(\poly\log n)$. The same holds for linear attention with CoT. 
\end{theorem}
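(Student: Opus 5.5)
The plan is to split the statement into its lower bound and its upper bound, and to handle the CoT extension at the end.

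\textbf{Lower bound.} I reduce to one-way communication complexity of the index problem. By Lemma~\ref{lem-linear-rnn}, an $L$-layer, $H$-head linear attention of dimension $d$ and precision $p$ is a multi-layer RNN. Layer $\ell$ carries a hidden state of $H(d^2+d)$ numbers at $p$ bits each.

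Consider the input order from Definition~\ref{def:eva}: the $n$ tokens encoding $f$ come first, and the query token $x$ comes last. Place Alice on the $f$-tokens and Bob on the $x$-token. The output at the last position depends on the $f$-tokens only through the hidden states of all $L$ layers at position $n$.

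To see this, induct on layers. At layer $\ell$, the input $x^{(\ell-1)}_{n+1}$ at the last position is a function of $x$ and the hidden states of layers $1,\dots,\ell-1$ at position $n$. The output $y^{(\ell)}_{n+1}$ is then determined by $\mathrm{h}^{(\ell)}_n$ and $x^{(\ell-1)}_{n+1}$. The MLPs act position-wise, so they add nothing.

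It follows that Alice can send Bob the concatenation $(\mathrm{h}^{(1)}_n,\dots,\mathrm{h}^{(L)}_n)$, which is $LHd(d+1)p$ bits. Bob then computes the model's output.

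For the correctness argument, fix any $f$. Bob must output $f(x)$ for every $x\in[n]$, so the message determines $f$ completely. There are $n^n$ possible functions $f$, so the message must take at least $n^n$ distinct values. This forces $LHd(d+1)p\ge n\log n$, a contradiction when $LHd(d+1)p< n\log n$.

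\textbf{Chain of thought.} With CoT, the model appends generated tokens after $x$. Every generated token lies after position $n$. Bob can simulate all of them from the message alone, since the recursion from position $n$ onward only needs the hidden states at position $n$ and the tokens at positions after $n$. The same counting bound therefore holds.

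The main subtlety is formalizing that the RNN state at position $n$ is a complete summary of the prefix for all later computation, including the CoT continuation. I expect this step to be the main obstacle, though it is straightforward once the layer-by-layer induction is written out.

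\textbf{Upper bound.} I use one full-attention layer with one head, following the standard retrieval construction of \cite{sanford2023representational}.

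Encode the $j$-th $f$-token with its position $j$ and value $f(j)$, and the query token with $x$. Choose $p=O(\log n)$ and low-dimensional key/query embeddings, for instance $K$ mapping to $(\cos(2\pi j/n),\sin(2\pi j/n))$ scaled by $c\log n$, and $Q$ mapping $x$ to the same type of vector. The inner product is then maximized uniquely at $j=x$, with a gap of order $\Omega(1/n^2)\cdot c\log n$.

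The scale $c$ must be polynomially large, about $n^2\log n$, so that softmax puts weight $1-1/\mathrm{poly}(n)$ on $j=x$. This requires $p=O(\log n)$ bits.

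The value vector carries $f(j)$, and the MLP rounds the attention output to the nearest integer. This gives $f(x)$ with $Hdp=O(\log n)\subseteq O(\poly\log n)$.
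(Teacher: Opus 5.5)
Your argument is correct, but the upper bound takes a different route from the paper's.

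\textbf{Lower bound and CoT.} Here you follow the paper. It also views the model as an RNN via Lemma~\ref{lem-linear-rnn} and lets Alice send the per-layer hidden states at the end of the $f$-block. That is $LHd(d+1)p$ bits in total, with nothing flowing back. From $2^{LHd(d+1)p}<n^n$ it concludes that two distinct functions are confused at some query $x$. Your layer-by-layer induction makes explicit the sufficiency of the position-$n$ states, which the paper only asserts. Letting Bob simulate the CoT continuation directly is a cleaner version of the paper's third player Charles, whose knowledge is derived from Bob.

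\textbf{Upper bound.} The paper (Section~\ref{sec:upper-bound}) uses the retrieval head of \cite{chen2025theoretical}. Keys and queries are $\log^2(n)\cdot(a_i,\vec{1}-a_i)$, built from $D=\log n$-bit binary codes of positions. So logits have polylogarithmic size, and the matching position beats every other position by at least $\log^2 n$. The price is $d=\Theta(\log n)$, i.e.\ $Hdp=O(\log^2 n)$.

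Your two-dimensional circular encoding trades dimension for magnitude. The raw gap is only $1-\cos(2\pi/n)=\Theta(n^{-2})$. You therefore need a $\mathrm{poly}(n)$ logit scale, and the encodings must be stored to accuracy well below $n^{-2}$. Both cost only $O(\log n)$ bits, so you get $d=O(1)$ and $Hdp=O(\log n)$, which is sharper than the paper's bound.

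Both constructions lie within the theorem's $O(\poly\log n)$. The paper's is more forgiving of rounding, since its logits need only coarse relative accuracy. Yours is more economical in $Hdp$, but relies more heavily on exact $O(\log n)$-bit arithmetic.
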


Theorem \ref{thm:main-func-eva} can be implied by the following result since linear attention can be viewed as an RNN (Lemma \ref{lem-linear-rnn}). 

\begin{theorem}
\label{thm:main-func-eva-rnn}
An $L$-layer RNN with $H$ heads, hidden dimension $m$ and precision $p$ cannot solve $\eva$ whenever $LHmp < n \log n$. The same holds for RNNs with CoT. 
\end{theorem}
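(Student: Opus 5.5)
We reduce to one-way communication complexity of the indexing problem. Consider the function-evaluation input: $n$ tokens encoding $f(1),\dots,f(n)$ followed by the single query token $x$. Split the input between two parties. Alice holds $f$, i.e.\ the first $n$ tokens. Bob holds $x$, the last token. Any protocol that lets Bob output $f(x)$ correctly for every pair $(f,x)$ must transmit at least $\log_2(n^n)=n\log n$ bits from Alice to Bob. The reason is that Bob's output, viewed as a function of the message $M$ and $x$, determines the whole table $x\mapsto f(x)$. Hence $M$ must take at least $n^n$ distinct values as $f$ ranges over all functions $[n]\to[n]$. This is the standard counting (indexing) argument, and it needs no randomness.

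Next we show that an $L$-layer RNN with $H$ heads, hidden dimension $m$ and precision $p$ yields such a protocol with $LHmp$ bits. Process the layers in order $\ell=1,\dots,L$. In layer $\ell$, the RNN hidden state of head $h$ at position $n$ (the last token of $f$), namely $\mathrm{h}^{(\ell,h)}_n$, depends only on the first $n$ tokens. This uses causality and the inductive fact that layer-$(\ell-1)$ outputs at positions $\le n$ depend only on $f$. Alice computes all $H$ such states for every layer and sends them: $mp$ bits per head per layer, hence $LHmp$ bits in total.

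Bob then simulates position $n+1$ layer by layer. At layer $\ell$ he knows his own input $x^{(\ell-1)}_{n+1}$, since it was computed at the previous layer from his token and the earlier messages. He applies $\mathrm{h}_{n+1}=g_{(n+1)}(x^{(\ell-1)}_{n+1},\mathrm{h}_n)$ and $y_{n+1}=f_{(n+1)}(x_{n+1},\mathrm{h}_{n+1})$ per head, followed by the position-wise MLP. After layer $L$ he reads off the output. Since no message from Bob to Alice is ever needed, the protocol is one-way. So if the RNN solves $\eva$, then $LHmp\ge n\log n$, which gives the contrapositive claimed in the statement.

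For the CoT extension, the generated tokens occur at positions $>n+1$, and all of them are computed by Bob. Every generated token depends only on Bob's state and the already-transmitted hidden states at position $n$. Bob can therefore keep running the recurrence over the CoT tokens himself, so the same $LHmp$-bit message still suffices.

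The main subtlety is the bookkeeping that the recurrence is the only channel from the $f$-part to later positions across all $L$ layers. This holds because in an RNN, position $n+1$ at layer $\ell$ sees the prefix only through $\mathrm{h}^{(\ell,h)}_n$. The linear-attention corollary (Theorem~\ref{thm:main-func-eva}) then follows by substituting $m=d^2+d$ per head, via Lemma~\ref{lem-linear-rnn}.
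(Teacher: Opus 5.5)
Your proposal is correct and follows essentially the same route as the paper. Alice sends the $L\cdot H$ hidden states at position $n$ one-way to Bob, $LHmp$ bits in total. If $2^{LHmp}<n^n$, two functions $f_1\neq f_2$ produce identical messages, so Bob errs on some $x$ with $f_1(x)\neq f_2(x)$. For CoT, letting Bob continue the recurrence himself is the same observation as the paper's third player Charles, whose information is derived entirely from Bob.
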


To prove this theorem, we first present the communication model for an RNN without CoT to solve $\eva$. 

\begin{construction}{Communication Model for $\eva$ via RNN}
\textbf{Settings.} There are $2$ players: Alice and Bob. They communicate in $L$ rounds. A third player, Charles, is included if CoT is enabled. 

\textbf{Input.} Alice receives $n$ tokens as input (corresponding to $f(1),\cdots, f(n)$), Bob receives one token as input (corresponding to $x$), and Charles processes the tokens generated during the CoT steps. 

\textbf{Communication.} For $\ell\in[L]$, during the $\ell$-th round of communication, Alice sends a message $M_\ell$ of $Hmp$ bits to Bob. Bob then sends a message $\tilde{M}_\ell$ of $H(m+d)p$ bits to Charles if CoT is enabled. 

\textbf{Output.} At the end of the $L$-th round, Charles outputs a message based on its information if CoT is enabled; otherwise, Bob outputs a message based on its information. 
\end{construction}

The $L$ communication rounds correspond to the $L$ layers of the RNN. 
In each round $\ell$, the message $M_\ell$ that Alice sends to Bob corresponds to the hidden state $\mathrm{h}_n^{(\ell,h)}$ at position $n$ of the $\ell$-th layer and $h$-th head. 
Bob can then use this information to update the hidden state $\mathrm{h}_{n+1}^{(\ell,h)}$ and then compute $y_{n+1}^{(\ell,h)}$, the output of the $\ell$-layer at position $n+1$. 

In the variant of this model describing the computation of function evaluation via RNN with CoT, we introduce a third participant, Charles, to model the computation involving the additional tokens generated during Chain-of-Thought (CoT) steps. 
In this extended model, Bob must send both the hidden state and the output $y_{n+1}^{(\ell,h)}$ to Charles. 
This output $y_{n+1}^{(\ell,h)}$ serves as the initial value for the first token that Charles processes. 
Consequently, the size of the message (in bits) becomes $Hmp+Hdp$. 

The key insight is that Charles's knowledge is entirely derived from the information sent by Bob. Therefore, if Charles can compute the final output, then Bob, who possesses at least the same information, must also be able to compute it. 

The key to the proof is that the total number of communicated bits is $LHmp$, which should be at least $\log(n^n) = n\log n$ for Bob to distinguish all the functions. 

We also establish communication models for $\PerCom$ and $2$-Sum, detailed proofs are given in Appendix \ref{apd:linear-log}. 

\begin{proof}[Proof of Theorem \ref{thm:main-func-eva-rnn}]
We first consider the case without CoT. 
Over $L$ rounds, Alice sends a total of $LHmp$ bits to Bob, while Bob sends no messages to Alice. 
These messages allow Bob to distinguish at most $2^{LHmp}$ different functions. 

If $2^{LHmp} < n^n$, or equivalently, if $LHmp < n \log n$, then there exist two distinct functions $f_1, f_2: [n] \to [n]$ ($f_1 \neq f_2$) that are indistinguishable to Bob; that is, the sequence of messages from Alice is identical for $f_1$ and $f_2$. 
Since $f_1\neq f_2$, there exists some element $x\in [n]$ such that $f_1(x)\neq f_2(x)$. 
If Bob's input is $x$, it becomes impossible to output the correct answer, as Bob cannot determine whether to output $f_1(x)$ or $f_2(x)$. 

In the CoT case, all of Charles' information is derived from Bob. 
Therefore, if Charles can determine the answer $f(x)$, then Bob must also be able to compute it. 
This contradicts the lower bound established for the case without CoT. 
\end{proof}

\subsection{Lower bound for log-linear attention on $\eva$}
In this subsection, we establish the communication model for log-linear attention to solve $\eva$, yielding a lower bound. 

\begin{theorem}
\label{thm:main-func-eva-log}
An $L$-layer log-linear attention cannot solve $\eva$ whenever $LHd^2p < \Theta(n)$, 
while a single-layer full attention solves the task with $Hdp = O(\poly\log n)$. The same holds for log-linear attention with CoT. 
\end{theorem}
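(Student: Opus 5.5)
The plan has two parts: an upper bound for full attention and a communication lower bound for log-linear attention. The upper bound is immediate. A single full-attention layer solves $\eva$ by letting the query token (holding $x$) attend to the token at position $x$ among the $n$ tokens encoding $f(1),\dots,f(n)$. This is the standard positional-lookup construction with $d=O(1)$ and $p=O(\log n)$ (e.g., embeddings $(\cos,\sin)$ of angle $2\pi j/n$, or a quadratic score $-(j-x)^2$ realized by a constant-dimensional inner product). So only the lower bound needs work.

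For the lower bound I will mirror the proof of Theorem \ref{thm:main-func-eva-rnn}, using the RNN-type view of log-linear attention. Under the general recursion \eqref{eq-hidden-log}, the state of one head after reading position $i$ is the collection $(S_i^{(r,\ell,h)})_{r\in[0,R-1]}$ with $R=O(\log n)$. The next output \eqref{eq-log-linear} at position $n+1$ (and all later positions) depends on the prefix only through $(S_n^{(r,\ell,h)})_r$ together with the layer input at position $n+1$.

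I then set up the same Alice/Bob (and Charles for CoT) communication model. Alice holds $f$, Bob holds $x$, and in each of the $L$ rounds Alice sends Bob the states at position $n$ for all $H$ heads. The key counting step is the size of this message. The naive bound is $H\cdot R\cdot d^2 p=O(Hd^2p\log n)$ bits per layer. To match the stated threshold $\Theta(n)$ rather than $\Theta(n)/\log n$, I will use the specific structure of Guo et al.: at any fixed position only $O(1)$ of the $S^{(r)}$ are non-zero (the observation that one head is an RNN of hidden dimension $3d^2$). This gives $O(Hd^2p)$ bits per layer and $O(LHd^2p)$ in total. Bob can then distinguish at most $2^{O(LHd^2p)}$ functions.

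To obtain the contradiction, I restrict to a family of $f$ large enough. Taking all $n^n$ functions would give the stronger threshold $n\log n$, and since the claim only asserts $\Theta(n)$, any subfamily of size $2^{\Theta(n)}$, e.g.\ $f:[n]\to\{1,2\}$, suffices. If $C\,LHd^2p<n$, two such functions $f_1\ne f_2$ produce identical transcripts. Bob then fails on any $x$ with $f_1(x)\ne f_2(x)$. In the CoT case, Charles's information is a function of Bob's, exactly as in Theorem \ref{thm:main-func-eva-rnn}.

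The main obstacle is the state-size accounting for the general recursion \eqref{eq-hidden-log}. For arbitrary $f^{(r,\ell,h)}$ the $\log n$ levels need not vanish, which would cost a $\log n$ factor. I would first try to argue, via the lssb-based update rule, that only $O(1)$ levels carry information across position $n$. If that fails, I would choose $n$ a power of two so that $\operatorname{lssb}(n)$ forces most levels to zero, or else absorb the $\log n$ into the $\Theta(\cdot)$ by using the full $n\log n$ counting bound from all $n^n$ functions. That last route gives $LHd^2p\log n<n\log n$, i.e.\ the threshold $LHd^2p<\Theta(n)$ directly, and so it is the cleanest fallback.
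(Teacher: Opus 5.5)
Your fallback route (the last option in your final paragraph) is exactly the paper's proof, and it is correct. Alice sends Bob the $O(\log n)$ states $(S_n^{(r,\ell,h)})_r$ of every head, i.e.\ $O(Hd^2p\log n)$ bits per layer. Comparing $2^{O(LHd^2p\log n)}$ with the $n^n$ possible functions then gives $LHd^2p=\Omega(n)$. Your CoT reduction also matches the paper, and your positional-lookup upper bound is a harmless constant-dimensional variant of the paper's binary-key retrieval head. You should lead with the fallback, because the route you put first does not work.

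\textbf{Why the primary route fails.} Its premise, that only $O(1)$ levels are non-zero at a given position, is false even for Guo et al.'s specific recursion.
\begin{itemize}
\item Levels $r>\operatorname{lssb}(i)+1$ are copied forward unchanged, so the number of live buckets grows with the number of 1-bits of $i$. Write $v_jk_j^{\top}$ for $Vx_j(Kx_j)^{\top}$ and start from $S_0=0$. At $i=7$ there are four generically non-zero states: $S_7^{(0)}=v_7k_7^{\top}$, $S_7^{(1)}=v_6k_6^{\top}$, $S_7^{(2)}=v_5k_5^{\top}+v_4k_4^{\top}$ and $S_7^{(3)}=\sum_{j\le 3}v_jk_j^{\top}$. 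At $i=2^k-1$ with $k\ge 2$ there are $k+1$ of them.
\item So the ``hidden dimension $3d^2$'' remark you lean on is not reliable. The paper's own proof does not use it; it pays the full $O(\log n)$ states.
\item For the general recursion \eqref{eq-hidden-log}, with arbitrary $f^{(r,\ell,h)}$, nothing is forced to vanish at all.
\item Taking $n=2^k$ does leave only $S_n^{(0)}$ and $S_n^{(k+1)}$ under the lssb rule. But this relies on both the special recursion and the special $n$, so it does not prove the theorem as stated.
\end{itemize}

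\textbf{Why the full function family is needed.} Once the message is $O(Hd^2p\log n)$ bits, your $\{1,2\}$-valued family of size $2^n$ would only give $LHd^2p=\Omega(n/\log n)$. What cancels the $\log n$ is the $\log n$ bits per coordinate in the full family of $n^n$ functions. Your fallback uses exactly this, as the paper does.
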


Recall from \eqref{eq-log-linear} and \eqref{eq-hidden-log} that to compute the output $y_{n+1}^{\ell,h}$, Bob requires the hidden states $S_{n+1}^{(r,\ell,h)}$ for all $r\in[0, \lceil \log (n+1) + 1 \rceil]$. 
These states, in turn, depend on the previous hidden states $S_{n}^{(r,\ell,h)}$, for $r \in [0, \lceil \log (n) + 1 \rceil]$. 
Therefore, Alice only needs to send Bob these $O(\log n)$ hidden states. 
This requires $O(d^2p\log n)$ bits per head, resulting in a total message size of $O(Hd^2p\log n)$ bits per layer. 

\begin{construction}{The Communication Model for $\eva$ via log-linear attention}
\textbf{Settings.} Two players, Alice and Bob, communicate in $L$ rounds. A third player, Charles, is included if CoT is enabled.  

\textbf{Input.} Alice receives $n$ tokens as input, and Bob receives one token as input. Charles receives the tokens corresponding to the CoT steps. 

\textbf{Communication.} For $\ell\in[L]$, during the $\ell$-th round, Alice sends a message $M_\ell$ of $O(Hd^2p \log n)$ bits to Bob. 
If CoT is enabled, Bob then sends a message $\tilde{M}_\ell$ of $O(Hd^2p\log(n+1)+Hdp)$ bits to Charles. 

\textbf{Output.} At the end of the $L$-th round, Charles produces an output based on all received information if CoT is enabled; otherwise, Bob outputs a message based on its information. 
\end{construction}

In the model for log-linear attention with CoT, we introduce a new player, Charles, to deal with the tokens of CoT steps. 
In this model, Bob sends Charles the output $y^{(\ell,h)}_{n+1}$ along with the relevant hidden states, forming a message of $Hd^2p\log(n+1)+Hdp$ bits. 
However, as in the proof of Theorem \ref{thm:main-func-eva}, Charles's entire information state is derived from Bob. 
Therefore, introducing Charles does not alter the lower bound established for the case without CoT. 

Now we prove the lower bound for log-linear attention on $\eva$. 
\begin{proof}[Proof of Theorem \ref{thm:main-func-eva-log}]
For the case without CoT, during the $L$-round communication process, Alice sends Bob a total of $O(LHd^2p\log n)$ bits, while Bob does not send any message to Alice. 
This communication can distinguish at most $N = 2^{O(LHd^2p \log n)}$ different objects. 

If $N < n^n$, or equivalently, if $LHd^2p < \Theta(n) $, there should be two distinct functions $f_1,f_2:[n]\to[n]$ that are indistinguishable for Bob (that is, the messages that Alice sends to Bob when the input is $f_1$ and such messages when the input is $f_2$ coincide). 
Since $f_1\neq f_2$, there exists an element $x\in [n]$ such that $f_1(x)\neq f_2(x)$. 
If Bob's input is $x$, then it is impossible to produce the correct output, as Bob cannot determine whether the answer should be $f_1(x)$ or $f_2(x)$. 

In the CoT case, Charles's entire knowledge is derived from Bob. Therefore, if Charles can compute $f(x)$, then Bob must also be able to compute it. This contradicts the lower bound established for the non-CoT case.  
\end{proof}

\subsection{Lower bound for log attention and log-linear attention on $\PerCom$}

Analogous to the previous results, we establish communication models for RNNs and log-linear attention to solve $\PerCom$. 

\begin{theorem}
\label{thm:main-perm-com}
An $L$-layer linear attention cannot solve $\PerCom$ whenever $LHd(d+1)p<\log(n!)=\Theta(n\log n)$, while a single-layer full attention solves it with $Hdp = O(\poly\log n)$. The same holds for linear attention with CoT. 
\end{theorem}

For RNN, we prove the following theorem, which, combining with Lemma \ref{lem-linear-rnn} and the construction in Section \ref{sec:upper-bound}, implies Theorem \ref{thm:main-perm-com}. 

\begin{theorem}
\label{thm:main-perm-com-rnn}
An $L$-layer RNN with $H$ heads, hidden dimension $m$ and precision $p$ cannot solve $\PerCom$ whenever $LHmp < \log(n!)$. The same result holds for linear attention with CoT. 
\end{theorem}

\begin{construction}{The Communication Model for $\PerCom$ via RNN}

\textbf{Settings.} Two players, Alice and Bob, communicate over $L$ rounds. 
A third player, Charles, is included if CoT is enabled. 

\textbf{Input.} Alice receives $n$ tokens as input (corresponding to $\sigma(1),\cdots, \sigma(n)$) and Bob receives $n$ tokens as input (corresponding to $\tau(1),\cdots, \tau(n)$). 
If CoT is allowed, Charles receives the tokens corresponding to the CoT steps. 

\textbf{Communication.} For $\ell\in[L]$, during the $\ell$-th round, Alice sends a message of $Hmp$ bits to Bob. 
If CoT is allowed, Bob then sends a message of $H(m+d)p$ bits to Charles. 

\textbf{Output.} At the end of the $L$-th round, the output is produced by Charles if CoT is enabled; otherwise, it is produced by Bob. 
\end{construction}

\begin{proof}[Proof of Theorem \ref{thm:main-perm-com-rnn}]
Similarly, during the $L$-round communication process, Alice sends a total of $LHmp$ bits, while Bob does not send any message to Alice. 
For Bob to distinguish all possible permutations $\sigma$, one should have $2^{LHmp} \ge n!$. 
Applying Stirling's formula \cite{stirling1753methodus}
\[
n! \sim \sqrt{2\pi n}\left(\frac{n}{e}\right)^n \quad \text{as} \quad n \to \infty
\]
we conclude that $LHmp = \Omega(n \log n)$ is necessary for the RNN to solve $\PerCom$. 
For the CoT case, the same argument as in the proof of Theorem \ref{thm:main-func-eva-rnn} proves the desired result. 
\end{proof}

\begin{theorem}
\label{thm:main-perm-com-log}
An $L$-layer log-linear attention cannot solve $\PerCom$ whenever $LHd^2p < \frac{\log(n!)}{\log n} = \Theta(n)$, 
while a single-layer full attention solves the task with $Hdp = O(\poly\log n)$. The same holds for log-linear attention with CoT. 
\end{theorem}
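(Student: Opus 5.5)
The plan is to reduce the statement to a counting argument in the communication model for log-linear attention. Alice holds $\sigma$ (the first $n$ tokens), Bob holds $\tau$ (the next $n$ tokens), and Charles is added when CoT is allowed.

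First I will fix the model precisely. By \eqref{eq-log-linear} and \eqref{eq-hidden-log}, every state $S_i^{(r,\ell,h)}$ at a position $i>n$ is a function of Bob's tokens and the states $S_n^{(r',\ell,h)}$ for $r'\in[0,\lceil\log_2 n+1\rceil]$. These are $O(\log n)$ matrices of $d^2$ entries each, at precision $p$. Hence in each layer $\ell$, Alice sends Bob a message of $O(Hd^2p\log n)$ bits, namely the collection of these states over all heads. Bob then computes every state and output at positions $n+1,\dots,2n$ on his own. Over $L$ layers Alice sends at most $C\cdot LHd^2p\log n$ bits in total, and Bob never sends anything back.

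Next I will run the indistinguishability argument, as in the proof of Theorem \ref{thm:main-perm-com-rnn}. Bob's output is a function of $\tau$ and the transcript, so for fixed $\tau$ at most $2^{C LHd^2p\log n}$ distinct outputs are possible. Fix $\tau=\mathrm{id}$. The correct output is then $\sigma(1),\dots,\sigma(n)$, which takes $n!$ distinct values. Correctness therefore requires $2^{CLHd^2p\log n}\ge n!$, that is, $LHd^2p\ge \log(n!)/(C\log n)=\Theta(n)$ by Stirling's formula, as used earlier. The constant in the $\Theta(n)$ of the statement absorbs $C$.

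For CoT, Charles's entire information comes from Bob, so any output Charles produces could be produced by Bob, and the same bound applies, exactly as in Theorem \ref{thm:main-func-eva-log}. The upper bound for single-layer full attention is taken from the construction referenced in Theorem \ref{thm:main-perm-com} (Section \ref{sec:upper-bound}): each query position attends to the token holding $\sigma(\tau(j))$, with $\poly\log n$ size.

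The main obstacle is the first step. I need to verify that the generalized recursion \eqref{eq-hidden-log} really lets Bob's side depend only on the $O(\log n)$ boundary states at position $n$, and not on earlier states. Since $S_i^{(r)}$ depends only on $S_{i-1}^{(r')}$ and $x_i$, a one-step Markov property holds over $O(\log n)$ levels, and I will check this carefully, including that $R$ grows with $i$ only up to $O(\log n)$ levels.
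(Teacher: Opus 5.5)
Your proposal is correct and follows essentially the same route as the paper: a one-way Alice-to-Bob model with $O(Hd^2p\log n)$ bits per layer, a counting argument against $n!$ via Stirling's formula, the observation that Charles's information is derived entirely from Bob's, and the retrieval-head construction for the upper bound. Your bookkeeping is slightly more careful than the paper's write-up in two places: you keep the factor $L$ in the total transcript length, whereas the paper writes $2^{O(Hd^2p\log n)}$, and you make ``Bob must identify $\sigma$'' precise by fixing $\tau=\mathrm{id}$.
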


\begin{proof}[Proof of Theorem \ref{thm:main-perm-com-log}] 
We consider the following communication model: 
\begin{construction}{The Communication Model for $\PerCom$ via log-linear attention}
\textbf{Settings.} Two players, Alice and Bob, communicate over $L$ rounds. 
A third player, Charles, is included if CoT is enabled. 

\textbf{Input.} Alice receives $n$ tokens as input, and Bob receives $n$ token. 
If CoT is allowed, Charles receives the tokens corresponding to the CoT steps. 

\textbf{Communication.} For $\ell\in[L]$, during the $\ell$-th round, Alice sends a message of $O(Hd^2p \log n)$ bits to Bob. 
If CoT is allowed, Bob then sends a message of $O(Hd^2p\log(2n)+Hdp)$ bits to Charles. 

\textbf{Output.} At the end of the $L$-th round, Charles outputs a message based on its information if CoT is allowed, otherwise Bob outputs a message based on its information. 
\end{construction}
For a correct output, Bob must identify the permutation $\sigma$. 
    To distinguish all possibilities, one must have $2^{O(Hd^2p \log n)}\ge n!$. This implies $Hd^2p\ge\Omega\left( \frac{\log(n!)}{\log n}\right) = \Theta(n)$ as in the proof of Theorem \ref{thm:main-perm-com-rnn}. 
\end{proof}

\subsection{Lower bound for log attention and log-linear attention on $2$-Sum}

\begin{theorem}
\label{thm:main-func-sum}
An $L$-layer linear attention cannot solve $2$-Sum whenever $LHd(d+1)p < \Theta(n \log n)$, 
and an $L$-layer log-linear attention cannot solve $2$-Sum whenever $LHd^2p < \Theta(n)$, 
while a single-layer full attention solves the task with $Hdp = O(\poly\log n)$. The same holds for linear attention and log-linear attention with CoT. 
\end{theorem}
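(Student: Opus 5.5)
Theorem \ref{thm:main-func-sum} has three parts: a lower bound for linear attention, a lower bound for log-linear attention, and an upper bound for full attention. For both lower bounds I would follow the template of Theorems \ref{thm:main-func-eva-rnn} and \ref{thm:main-perm-com-rnn}. Linear attention is viewed as an RNN with hidden dimension $H(d^2+d)$ (Lemma \ref{lem-linear-rnn}), and log-linear attention as a recurrence whose state per head is $O(d^2\log n)$ numbers. I then cut the input at the midpoint: Alice holds $x_1,\dots,x_{n/2}$ and Bob holds $x_{n/2+1},\dots,x_{n+1}$. Bob's outputs at every later position are determined by his own tokens together with the hidden states at position $n/2$ in each of the $L$ layers. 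So Alice sends a one-way message of $LHd(d+1)p$ bits in the linear case, and $O(LHd^2p\log n)$ bits in the log-linear case. With CoT, a third player Charles again receives only information from Bob, so the same argument as before applies. This reduces both lower bounds to a one-way communication lower bound for the following problem: Bob must decide, for each of his $x_i$, whether $-x_i \bmod M$ lies in Alice's set $A=\{x_1,\dots,x_{n/2}\}$.

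Next I show that Bob can recover essentially all of $A$ from the message. Take $M\ge n$ and restrict Alice's inputs to sequences of distinct values, so $A$ ranges over the $n/2$-element subsets of $[M]$. Bob's outputs $y_i$ for $i>n/2$ can be made to depend only on $A$ and not on his own earlier tokens. To arrange this, I choose Bob's tokens so that no two of them sum to $0 \bmod M$; for example, take them all from a ``half'' of $\mathbb{Z}_M$ that contains no pair $\{a,-a\}$. Then each $y_i$ is exactly the membership bit $[-x_i\in A]$. If two subsets $A\neq A'$ yield the same message, I pick $a\in A\triangle A'$ and let Bob's first token be $-a$. His first output is then computed incorrectly on one of the two inputs, since it depends only on the message and his own token. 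Hence the message must be injective on the family of admissible sets, which gives the bound $\log\binom{M}{n/2}$. Taking $M=\Theta(n)$, say $M=2n$, makes this $\Theta(n)$.

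This is the main obstacle. A count of $\log\binom{2n}{n/2}=\Theta(n)$ matches the stated log-linear bound $LHd^2p<\Theta(n)$ after dividing by $\log n$ only up to a log factor. It also falls short of the claimed $\Theta(n\log n)$ for linear attention when $M=O(n)$. To get $n\log n$ I would try to exploit ordering: $y_i$ is determined by the order of the tokens, so one could encode a permutation-like object, reducing from $\PerCom$ or $\eva$. For instance, Alice's tokens encode a function $f$ via the pairs $(j,f(j))$, packed as residues modulo $M=\Theta(n)$ with block structure, so that Bob's queries extract $f(j)$ bit by bit. If this fails, I would accept the $\Theta(n)$ set-counting bound, adjusted by the $\log n$ from the log-linear state size, and interpret the $\Theta(\cdot)$ in the statement loosely, since the paper's own constants are informal.

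Finally, for the full-attention upper bound I would use the construction of \cite{sanford2023representational}, also cited in Table \ref{tab:model_complexity}. Embed $x_j\mapsto(\cos(2\pi x_j/M),\sin(2\pi x_j/M),1)$, and let the query at position $i$ be a large multiple of $(\cos(2\pi x_i/M),-\sin(2\pi x_i/M))$. The attention scores are then sharply peaked exactly on the $j$ with $x_i+x_j\equiv 0 \pmod M$. A one-dimensional value channel records whether such a $j$ exists, and the MLP thresholds it. This uses $H=1$, $d=3$ and $p=O(\log n)$, so in particular $Hdp=O(\operatorname{poly}\log n)$.
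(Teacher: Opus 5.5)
There is a genuine gap. Your reduction is the same as the paper's: a one-way message of the $L$ recurrent states at the cut, injectivity on Alice's value sets, and Charles inheriting only Bob's information. But, as you flag yourself, with $M=\Theta(n)$ it yields only $LHd(d+1)p=\Omega(n)$ for linear attention and $LHd^2p=\Omega(n/\log n)$ for log-linear attention. Both fall short of the stated bounds.

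Your proposed repair via ordering cannot work. Beyond the cut, every output $y_i$ depends on Alice's tokens only through the set of residues she holds, not their order or pairing. Moreover, when $M=O(n)$, 2-Sum has an exact RNN solution with $O(n)$ bits of state. The state records, for each residue, whether it has appeared, with multiplicity capped at $2$ to handle $2x_i\equiv 0$. So no argument that uses only the bit size of the recurrent state (which is all Lemma \ref{lem-linear-rnn} provides) can give $n\log n$ in that regime. Reading $\Theta(\cdot)$ ``loosely'' does not prove the statement.

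The missing idea is to take the modulus polynomially larger than $n$. The paper sets $M=n^2$, departing from the $M=O(n)$ in Definition \ref{def:2-sum}. It gives Alice $x_1,\dots,x_{n-1}$ and Bob only $x_n$, and counts $\binom{n^2}{n-1}$ possible value sets, whose logarithm is $\Theta(n\log n)$. With this choice your own argument goes through unchanged, even with your midpoint cut, since $\log\binom{n^2}{n/2}\ge \frac{n}{2}\log(2n)$.
\begin{itemize}
\item For linear attention, a message of $LHd(d+1)p$ bits gives the $\Theta(n\log n)$ bound.
\item For log-linear attention, a message of $O(LHd^2p\log n)$ bits, after dividing by $\log n$, gives $LHd^2p=\Omega(n)$.
\end{itemize}
Your full-attention upper bound via \cite{sanford2023representational} matches the paper's citation. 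It still needs only $p=O(\log n)$ for $M=\poly(n)$, so the larger modulus does not break it.
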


\begin{proof}[Proof of Theorem \ref{thm:main-func-sum}]
We first establish the lower bound for linear attention. Consider the 2-Sum task with an input sequence of length $n$. We focus on the last token $x_n$ and its corresponding output $y_n$. The value of $y_n$ depends on whether there exists an index $j < n$ such that $x_j + x_n \equiv 0 \mod n$. 

We take $M = n^2$ in the task. We adopt a communication model as follows. 

\begin{construction}{The Communication Model for $2$-Sum via linear attention}
\textbf{Settings.} Two players, Alice and Bob, communicate over $L$ rounds.
A third player, Charles, is included if CoT is enabled.

\textbf{Input.} Alice receives the first $n-1$ tokens $(x_1, \ldots, x_{n-1})$ and Bob receives the last token $x_n$.
If CoT is allowed, Charles receives the tokens corresponding to the CoT steps.

\textbf{Communication.} For $\ell \in [L]$, during the $\ell$-th round, Alice sends a message of $H \cdot (d^2+d) \cdot p$ bits to Bob (corresponding to the hidden state of the $\ell$-th layer at position $n-1$).
If CoT is allowed, Bob then sends a message of $H \cdot (d^2+d) \cdot p + H \cdot d \cdot p$ bits to Charles (the hidden state and the output of the $\ell$-th layer for position $n$).

\textbf{Output.} At the end of the $L$-th round, Charles outputs $y_n$ based on its information if CoT is allowed; otherwise Bob outputs $y_n$ based on its information. 
\end{construction}

If $LHd(d+1)p < \Theta(n\log n)$ such that $2^{LHd(d+1)p} < \binom{n^2}{n-1}$, there exist two sequences of the first $n-1$ tokens, say $S_1$ and $S_2$, that yield identical hidden states but form different sets $\bar{S_1}$ and $\bar{S_2}$ of values. 
Therefore, there exists a value $v$ that appears in $S_1$ but not in $S_2$ (or vice versa). 
Without loss of generality, assume $v$ appears in $S_1$ but not in $S_2$. 
Set Bob's token to $x_n = -v \mod M$. 

For $S_1$, we have $y_n=1$ because there exists $j$ with $x_j = v$ such that $x_j + x_n \equiv 0 \mod n$. 
For $S_2$, since $v$ does not occur, no such $j$ exists, so $y_n=0$. 
However, because the hidden state is the same for both sequences, the model must produce the same output, leading to an error in (at least) one case. 
Hence, linear attention cannot solve 2-Sum whenever $LHd(d+1)p < \Theta(n \log n)$. 

For log-linear attention, we employ a similar communication model. 
As in the proof of Theorem \ref{thm:main-func-eva-log}, the hidden state passed from Alice to Bob has size $O(LHd^2p \log n)$ bits. 
If $LHd^2p \log n < \Theta(n \log n)$, i.e., $LHd^2p < \Theta(n)$, then by an analogous argument, there exist two sequences $S_1$ and $S_2$ of the first $n-1$ tokens that induce the same hidden state form different sets of values. 
Choosing $x_n = -v \mod n$ for a value $v$ that appears in $S_1$ but not in $S_2$ forces a contradiction, as the model must output the same $y_n$ for both sequences while the correct outputs differ. 
Therefore, log-linear attention cannot solve 2-Sum whenever $LHd^2p < \Theta(n)$. 

\begin{construction}{The Communication Model for $2$-Sum via log-linear attention}
\textbf{Settings.} Two players, Alice and Bob, communicate over $L$ rounds. 
A third player, Charles, is included if CoT is enabled. 

\textbf{Input.} Alice receives the first $n-1$ tokens $(x_1, \ldots, x_{n-1})$ and Bob receives the last token $x_n$.
If CoT is allowed, Charles receives the tokens corresponding to the CoT steps. 

\textbf{Communication.} For $\ell \in [L]$, during the $\ell$-th round, Alice sends a message of $O(H d^2 p \log n)$ bits to Bob. 
If CoT is allowed, Bob then sends a message of $O(H d^2 p \log n) + H d p$ bits to Charles.  

\textbf{Output.} At the end of the $L$-th round, Charles outputs $y_n$ based on its information if CoT is allowed, otherwise Bob outputs $y_n$ based on its information. 
\end{construction}

The upper bound---that a single-layer Transformer decoder with full attention solves 2-Sum with $Hdp = O(\log n)$---follows from \cite{sanford2023representational}, Theorem 6. 

For the CoT variants, the same lower bounds apply, as all information Charles receives originates from Bob (If Charles could compute the output, Bob could as well, contradicting the lower bounds established without CoT). 
This completes the proof of Theorem \ref{thm:main-func-sum}. 
\end{proof} 

\subsection{Lower bound for sparse attention}\label{apd:sparse}
\begin{proof}[Proof of Theorem \ref{thm:main-sparse}]
    Without loss of generality, we assume that $B$ divides $n$, and we prove that a sparse attention mechanism with small parameters cannot correctly output the expected value at position $n+1$. 
    We define the following communication model for a $(B,k)$-sparse attention to calculate this output.

\begin{construction}{The Communication Model for $2$-Sum via sparse attention}
\textbf{Settings.} There are $\frac{n}{B}+1$ players. 

\textbf{Input.} Each of the first $\frac{n}{B}$ players receives $B$ tokens, and the last player receives one token.  

\textbf{Communication.} Each of the first $\frac{n}{B}$ players sends a message of $Hdp$ bits to the last player; the last player then chooses $k$ players and gets all of their information. 

\textbf{Output.} The last player outputs a message based on its information. 
\end{construction}

In this model, each block corresponds to a player, and the compressed tokens form the messages sent to the last token. 

For the last player to output the correct answer, each of the first players must communicate the set of its $B$ tokens via a compressed message of $Hdp$ bits. 
In fact, if this is not the case, that is, there exists two tuples $(a_i)_{i\in[B]}$ and $(b_i)_{i\in[B]}$ such that $f(a_1,\cdots,a_B) = f(b_1,\cdots, b_B)$, but $\{a_1,\cdots,a_B\} \neq \{b_1,\cdots,b_B\}$, we can consider the case that the input of each block is either $(a_i)_{i\in[B]}$ or $(b_i)_{i\in[B]}$. 

In this case, all the compressed tokens would be identical, and so would the selection scores for each block. 
Let $n - x_{n+1}$ be an element of $\{a_1,\cdots,a_B\} \setminus \{b_1,\cdots,b_B\}$, and consider the case that all the $k$ selected blocks have input $(b_i)_{i\in[B]}$. 
In such case, the last player cannot distinguish between $(a_i)_{i\in[B]}$ and $(b_i)_{i\in[B]}$ for the remaining blocks. 
Hence, it cannot correctly output the answer. 

Therefore, the compressed message of $Hdp$ bits should communicate the set of $B$ tokens, which is a subset of $[n]$ of size at most $B$, so we have
\[2^{Hdp}\ge \sum_{j\le B}\binom{n}{j}\ge \binom{n}{B}\sim\frac{n^B}{B!}, \]
this implies $Hdp = \Omega(B\log n)$ and completes the proof. 
\end{proof}

\section{Hybrid Communication Lower Bound}
\label{sec:hybrid-lower}
\label{apd:hybrid}

In this section, we prove Theorem \ref{thm:main}. We select parameters following the framework of \cite{chen2025theoretical} and adapt their techniques of constructing an indistinguishable decomposition to derive the result. 

\subsection{Parameters and strateggy}
To prove Theorem \ref{thm:main}, we construct parameters of the task such that the input length satisfies $n\le (Hdp)^{4\cdot 16^{L}}$, yet the task cannot be solved by an $(L,2^{3L^2},\cdots,2^{3L^2})$-hybrid Transformer. 
We use the following parameters throughout this section: 
\begin{align}
K = (HdpL)^8 \cdot { 8^{2L^2}} & , \quad m = K^{\sum_{\ell \in [0:L-1]}8^{\ell} + 1}, \\
n_\ell = K^{4\cdot 8^{L - \ell-1}} &, \forall \ell \in [L-1]. \label{eq:parameter1}
\end{align}
We assume that $Hdp\ge 2$. Recall from Definition \ref{def:composition} that 
\begin{align}
N_{\ell} = m \cdot  \prod_{\ell'\in [\ell]} n_{\ell'} \quad \forall \ell \in [0: L-1]\label{eq:parameter2}. 
\end{align}
We also define the following auxiliary parameters: 
\begin{align}
x_\ell = K^{8^{L- \ell-1}}, \forall \ell \in [0:L-1] \label{eq:parameter3}
\\ 
A_{\ell} = \left[N_{\ell-1}^{N_{\ell-1}}\right], \forall \ell \in [L] 
\\
\Delta_{\ell} = 2^{{4\sqrt{K}} (x_0\ldots x_{\ell-2})\cdot (n_1\ldots n_{L-1})}, \forall \ell \in [2:L])
\label{eq:def-delta}
\\
\Theta_{\ell} = 8^{-L \ell}(x_{0} \ldots x_{\ell})\cdot (n_1\ldots n_{\ell-1}), \forall \ell \in [L-1] \label{eq:def-theta}.
\end{align}

For notational convenience, we also define $A_{-1} = \prod_{i=1}^{L-1}[n_i]$ and $A_0 = [m]$. 
Recall that we denote the query $w$ by $z_{-1}$. 
Thus, player $i$ receives an input from the set $A_{i}$ for every $i \in [-1 : L]$. 
Note that we view an element of $A_\ell=\left[N_{\ell-1}^{N_{\ell-1}}\right]$ can be viewed as a function $[N_{\ell-1}] \to [N_{\ell-1}]$. 

We first prove that the task has the desired input size. 
\begin{lemma}\label{lem:size-bound}
    For the $L$-sequential function composition task with the parameters defined in \eqref{eq:parameter1}, the input prompt length $n = 2+N_0+N_1+\cdots+N_{L-1}$ satisfies $n \le (Hdp)^{4\cdot 16^{L}}$. 
\end{lemma}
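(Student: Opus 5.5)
The plan is a direct exponent count: bound every $N_\ell$ by $N_{L-1}$, write $N_{L-1}$ as a power of $K$, bound $K$ by a power of $Hdp$, and check that the resulting exponent fits under $4\cdot 16^L$.

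\textbf{Reduce to $N_{L-1}$.} Since $n_\ell\ge 1$, the sequence $N_\ell$ is nondecreasing in $\ell$. Hence
\[
n = 2+\sum_{\ell=0}^{L-1}N_\ell \le 2+L\,N_{L-1}\le (L+2)\,N_{L-1}.
\]

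\textbf{Compute the exponent of $K$.} By \eqref{eq:parameter1} and \eqref{eq:parameter2}, $N_{L-1}=m\prod_{\ell\in[L-1]}n_\ell=K^{E}$ with
\[
E=\sum_{\ell=0}^{L-1}8^\ell+1+\sum_{\ell=1}^{L-1}4\cdot 8^{L-\ell-1}=\frac{8^L-1}{7}+1+\frac{4(8^{L-1}-1)}{7}.
\]
Evaluating gives $E=\tfrac{3}{14}8^L+\tfrac{2}{7}$. For $L\ge 2$ we have $\tfrac{2}{7}\le \tfrac{1}{28}8^L$, so $E\le 8^L/4$.

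\textbf{Bound $K$ by a power of $Hdp$.} We use the standing assumption $Hdp\ge 2$ and $L\ge 2$. First, $L\le 2^L\le (Hdp)^L$, so $(HdpL)^8\le (Hdp)^{8+8L}$. Second, $8^{2L^2}=2^{6L^2}\le (Hdp)^{6L^2}$. Together,
\[
K\le (Hdp)^{6L^2+8L+8}\quad\text{and}\quad N_{L-1}\le (Hdp)^{(6L^2+8L+8)\,8^L/4}.
\]
The prefactor satisfies $L+2\le 2^{L+1}\le (Hdp)^{L+1}$. Therefore
\[
n\le (Hdp)^{\,L+1+(6L^2+8L+8)8^L/4}.
\]

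\textbf{Final inequality.} It remains to show
\[
L+1+\tfrac{1}{4}(6L^2+8L+8)\,8^L\le 4\cdot 16^L=4\cdot 2^L\cdot 8^L.
\]
Dividing by $8^L$, it suffices that $\tfrac{1}{4}(6L^2+8L+8)+\tfrac{L+1}{8^L}\le 4\cdot 2^L$.

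At $L=2$ the left side is $12+\tfrac{3}{64}$ and the right side is $16$. At $L=3$ it is $21.5+o(1)$ against $32$.

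For the inductive step, increasing $L$ by one doubles the right side, adding $4\cdot 2^L\ge 16$. The left side grows only by $\tfrac{1}{4}(12L+14)$ plus a negligible amount. This increment is at most $4\cdot 2^L$ for all $L\ge 2$.

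\textbf{Main obstacle.} The only delicate part is the bookkeeping for the geometric sums in $E$ and keeping the polynomial-in-$L$ factors inside the $2^L$ slack. No earlier results are needed beyond the parameter definitions.
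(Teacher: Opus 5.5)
Your proof is correct and follows essentially the same route as the paper. Both bound $n$ by a small multiple of $N_{L-1}$ (you get $(L+2)N_{L-1}$ from monotonicity, the paper gets $2N_{L-1}$ from doubling), compute the exponent $\frac{12}{7}8^{L-1}+\frac{2}{7}$ of $K$, absorb $L$ and $8^{2L^2}$ into powers of $Hdp$ via $Hdp\ge 2$, and compare exponents. Your base-case-plus-increment check of the final exponent inequality is slightly cleaner than the paper's chain, which invokes $L^2\le 2^L$; that is false at $L=3$, although the bound actually needed, $8L+6L^2\le 10\cdot 2^L$, still holds there since $78\le 80$.
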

\begin{proof}
    From the parameter definitions in \eqref{eq:parameter1} and \eqref{eq:parameter2}, it follows that $2\le N_0$ and $2N_{\ell-1} \le N_\ell$ for all $\ell\in[1,L-1]$. 
    Consequently, the total input length can be bounded as follows: 
    \begin{align*}
    n & =  2+N_0+N_1+\cdots+N_{L-1} \\
(2\le N_0 , 2N_{\ell-1} \le N_\ell) \quad 
      & \le 2N_{L-1} = 2m\cdot  \prod_{\ell\in [L-1]} n_{\ell}  = 2 \cdot K^{\sum_{\ell \in [0:L-1]}8^{\ell} + 1 + \sum_{\ell \in [1:L-1]}4\cdot 8^{L - \ell-1}} \\
      & = 2 \cdot \left((HdpL)^8 \cdot { 8^{2L^2}}\right)^{\frac{12}{7}\cdot 8^{L-1} +\frac{2}{7}} \\
(Hdp\ge 2) \quad  
      & \le (Hdp)^{1+\left(\frac{12}{7}\cdot 8^{L-1} +\frac{2}{7}\right)(8\log L +8+6L^2)} \\ 
      & \le (Hdp)^{2\cdot 8^{L-1}(8\log L +8+6L^2)} \\ 
(1+\log L \le L) \quad
      & \le (Hdp)^{2\cdot 8^{L-1}(8L +6L^2)} \\ 
(2L\le 2^L, L^2\le 2^L) \quad 
      & \le (Hdp)^{2\cdot 8^{L-1}\cdot 10\cdot 2^L}  \le (Hdp)^{4\cdot 16^{L}} \qedhere
    \end{align*}
\end{proof}

It remains to show that the task cannot be solved by an $(L,2^{3L^2},\cdots,2^{3L^2})$-hybrid Transformer. 
Indeed, we prove the following stronger result. 

\begin{theorem}
\label{thm:main-cc}
No deterministic $(L,a_1,\cdots,a_L)$-hybrid communication protocol solves $L$-$\SeqComp$ under the following assumptions. 
\begin{enumerate}
    \item $a_1\le 1$, 
    \item For all $\ell\in[L-1]$, we have
\begin{align}\label{eq:hybrid-parameter} Hd(d+1)p(a_1+\cdots+a_{\ell+1}) 
\le \sqrt{K}(x_0\cdots x_{\ell-1})\cdot (n_1\cdots n_{L-1}).
\end{align}
\end{enumerate}
\end{theorem}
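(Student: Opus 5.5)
We argue by contradiction, following the indistinguishable-decomposition strategy of \cite{chen2025theoretical} sketched in Section~\ref{sec:hybrid-lower-short}. Suppose a deterministic $(L,a_1,\dots,a_L)$-hybrid protocol with $a_1\le 1$ and satisfying \eqref{eq:hybrid-parameter} solves $L$-$\SeqComp$. By induction on $\ell\in[2:L]$ we build an indistinguishable decomposition $(R_{\ge\ell},Z_{<\ell})$, with quantitative invariants
\[
|R_{\ge \ell}|\ \ge\ |A_L|\cdots|A_\ell|\cdot \Delta_\ell^{-1}
\qquad\text{and}\qquad
|\mathcal{I}_{\ell-1}(Z_{<\ell})|\ \ge\ \Theta_{\ell-1}.
\]
Here $\Delta_\ell$ and $\Theta_\ell$ are as in \eqref{eq:def-delta} and \eqref{eq:def-theta}.

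The base case $\ell=2$ proceeds as in Step~1 and Step~2 of the sketch. Set $Z_0=[x_0]$. By pigeonhole, fix the epoch-1 message from player $1$ to player $-1$; this loses a factor $2^{2Hdp\, n_1\cdots n_{L-1}}$ in $|A_1|$. Then extract $Z_1$ with $|\mathcal{I}_1|\ge\Theta_1$ via Lemma~\ref{lem:initial}. This uses the fact that a function $z_1$ restricted to $Z_0$ has many possible images, while only few functions are lost by pigeonholing.

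Next, fix in turn the soft and linear transcripts from players $[2:L]$ to players $-1,0,1$ over the first two epochs. At each stage take a maximal consistency set. The hypothesis $a_1\le 1$ ensures that epoch~1 contributes at most one linear round. For each fixed stage we count the possible patterns. A soft transcript to player $i$ costs $2Hdp\,m_{(i)}$ bits per input tuple, multiplied by the number of inputs in $Z_{<i}$. A linear transcript costs $Hd(d+1)p$ bits per round per input tuple of $Z_{\le i}$. Summing these, the total exponent is at most $4\sqrt{K}\,(x_0)(n_1\cdots n_{L-1})=\log\Delta_2$. This uses \eqref{eq:hybrid-parameter} with $\ell=1$ for the linear part and $K\ge(HdpL)^8$ to absorb the constants and the factor $L$.

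For the inductive step $\ell\to\ell+1$ we follow the three substeps of the sketch.

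\textbf{Step~1 (rectangle).} Apply Lemma~\ref{lem:joint-set} to $R_{\ge\ell}$ to obtain $S_1^{(\ell)}\times S_2^{(\ell)}$, and set $Z_\ell=S_2^{(\ell)}$. The lemma should be invoked in the form where $|S_1^{(\ell)}|\ge |A_L|\cdots|A_{\ell+1}|\cdot\Delta_\ell^{-2}$ and $|\mathcal{I}_\ell(Z_{\le \ell})|\ge\Theta_\ell$. This is a density-increment/counting argument: a set of functions of density $\Delta_\ell^{-1}$ in $A_\ell$ must realize many values on the $|\mathcal{I}_{\ell-1}|\cdot n_\ell$ relevant points.

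\textbf{Step~2.1 (old transcripts).} Keep the transcripts to players $[-1:\ell-1]$ from the first $\ell$ epochs as given by induction; consistency holds by Lemma~\ref{lem:step1}.

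\textbf{Step~2.2 (epoch $\ell+1$ to lower players).} By Lemma~\ref{lem:key-observation}, these transcripts do not depend on $z_\ell\in Z_\ell$, since players $[-1:\ell-1]$ cannot distinguish elements of $Z_\ell$. Hence their number is at most $2^{2Hdp\cdot L\cdot |Z_{-1}|\,|Z_0|\cdots|Z_{\ell-1}|\cdots}$, and we pigeonhole.

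\textbf{Step~2.3 (transcripts to player $\ell$).} Fix the soft transcripts, then the linear transcripts received by player $\ell$ over all epochs $1,\dots,\ell+1$, ranging over $z_\ell\in Z_\ell$ and the lower inputs. The linear transcript count is $Hd(d+1)p(a_1+\cdots+a_{\ell+1})$ bits per input tuple. By \eqref{eq:hybrid-parameter}, this is at most $\sqrt K(x_0\cdots x_{\ell-1})(n_1\cdots n_{L-1})$ bits per tuple of $Z_{<\ell}$, times $|Z_\ell|$-type factors.

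Finally, once $\ell=L$ is reached, apply Lemma~\ref{lemma:id-suffices}. Since $|R_{\ge L}|\ge 2$ is forced by $|A_L|\Delta_L^{-1}\ge 2$, and $|\mathcal{I}_{L-1}|\ge\Theta_{L-1}\ge 1$ with two elements of $R_{\ge L}$ disagreeing on some point of $\mathcal{I}_{L-1}$, player $-1$ must err on some input. This contradiction proves the theorem.

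\textbf{Main obstacle.} The key difficulty is the bookkeeping showing that the total loss in Steps~2.2–2.3 stays within $\Delta_{\ell+1}/\Delta_\ell^{2}$, i.e.\ the exponents fit within $4\sqrt K(x_0\cdots x_{\ell-1})(n_1\cdots n_{L-1})$. Two facts are needed here. First, $|Z_j|$ must be controlled: we would choose $Z_j$ to be functions fixed outside a domain of size about $x_j$ times the relevant range, so that the number of tuples is bounded by $x_0\cdots x_{\ell-1}$-type quantities. Second, $\Theta_\ell$ must survive the shrinking by $\Delta_\ell^{2}$, which needs $\Delta_\ell$ tiny compared with $|A_\ell|$ relative to the $N_{\ell-1}$ scale. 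This is exactly where the growth rates $x_\ell=K^{8^{L-\ell-1}}$ and $n_\ell=K^{4\cdot 8^{L-\ell-1}}$ are designed to give slack. We would first try to verify this at $\ell=2$ and then check that each inequality degrades by at most the factor $8^{-L}$ built into $\Theta_\ell$.
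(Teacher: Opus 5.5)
Your skeleton matches the paper: rectangle via Lemma~\ref{lem:joint-set}, reuse of the old transcripts, pigeonholing the epoch-$(\ell+1)$ transcripts to players $[-1:\ell-1]$ via Lemma~\ref{lem:key-observation}, then fixing what player $\ell$ receives. The gap is in the one step that is new relative to \cite{chen2025theoretical}: paying for the linear transcripts. You account for it incorrectly, and with your accounting the hypothesis \eqref{eq:hybrid-parameter} does not close the induction.

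You charge $Hd(d+1)p(a_1+\cdots+a_{\ell+1})$ bits per input tuple of the lower players, ``times $|Z_\ell|$-type factors''. After the soft-transcript steps, the remaining budget is $\log\Delta_{\ell+1}-3\sqrt K(x_0\cdots x_{\ell-1})(n_1\cdots n_{L-1})=\sqrt K(x_0\cdots x_{\ell-1})(n_1\cdots n_{L-1})$. This is exactly the right-hand side of \eqref{eq:hybrid-parameter}. With even one extra factor $x_\ell\ge K$, let alone $|Z_{-1}|x_0\cdots x_\ell$, the bound you can derive from \eqref{eq:hybrid-parameter} overshoots $\log\Delta_{\ell+1}$.

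The missing observation is structural. In a linear round, messages only move from player $r+1$ to player $r$, and the forgetful higher players never absorb anything from lower ones. So $\Sigma^{(\ell'),m}_{\ell+1}$ is a function of $z_{\ge \ell+1}$ alone. Hence only one string $\Xi_{\ell+1}=(\Xi^{(\ell'),m}_{\ell+1})_{\ell'\in[\ell+1],\,m\in[0,a_{\ell'}-1]}$ has to be fixed. Its total cost is $Hd(d+1)p(a_1+\cdots+a_{\ell+1})$ bits, independent of $Z_{\le \ell}$. The linear transcripts $\Sigma_r$ with $r\le\ell$ are never fixed directly. Their constancy follows from the fixed soft transcripts and $\Xi_{\ell+1}$ by the same downward induction as in the proof of Lemma~\ref{lem:key-observation}. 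This is why \eqref{eq:hybrid-parameter} has exactly its form, and it yields $3\sqrt K+\sqrt K=4\sqrt K$ in the exponent. The same mistake appears in your base case (``per round per input tuple of $Z_{\le i}$''); there only $\Xi_2$ needs fixing, once.

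In the base case you also do not use $a_1\le 1$ for what it is needed for. Counted per tuple of $Z_{-1}\times Z_0\times Z_1$, the epoch-$2$ transcripts to player $-1$ would cost $2HdpL\,x_0x_1(n_1\cdots n_{L-1})$ bits, which exceeds $\log\Delta_2$ since $x_1\ge K$. The paper invokes $a_1\le 1$, together with the pigeonholed epoch-$1$ message $\Pi^{(1)}_{1,-1}$, precisely to argue that player $-1$'s state after epoch $1$ does not depend on $z_1\in Z_1$. These transcripts are then fixed per $(z_0,z_{-1})$ only. Saying that $a_1\le 1$ ``ensures at most one linear round'' does not supply this.

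Smaller inaccuracies:
\begin{itemize}
\item Lemma~\ref{lem:joint-set} gives $|S^{(\ell)}_1|\ge |A_L|\cdots|A_{\ell+1}|/\Delta_\ell^{2x_\ell}$, not $/\Delta_\ell^{2}$, since one intersects $x_\ell$ fibres. This is still absorbed because $x_{\ell-1}=x_\ell^{8}$.
\item $|Z_j|=x_j$ holds by construction ($Z_0=[x_0]$, Lemmas~\ref{lem:initial} and~\ref{lem:joint-set}), so no re-choice of the $Z_j$ is needed.
\item At the end, $|R_{\ge L}|\ge 2$ does not by itself produce two elements differing on a queried point. What is needed is that $|A_L|/\Delta_L$ exceeds the number of $z_L$ with a prescribed restriction to the queried set $[n_{L-1}]\times\mathcal{I}_{L-1}(Z_{<L})$, which is what Lemma~\ref{lemma:id-suffices} packages.
\end{itemize}
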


We now show that our $(L,1,2^{3L^2},\cdots,2^{3L^2})$ satisfies the assumptions of Theorem \ref{thm:main-cc}. 
Consequently, Theorem \ref{thm:main-cc} implies Theorem \ref{thm:main}, since an $(L-1,2^{3L^2},\cdots,2^{3L^2})$-hybrid Transformer can be viewed as an $(L,1,2^{3L^2},\cdots,2^{3L^2})$-Transformer with trivial MLP layer $x_i^{(\ell)} = g(x_i^{(\ell-1)},y_i^{(\ell)}):=x_i^{(\ell-1)}$ for the first full attention layer as well as its following linear attention layer. 

\begin{lemma}\label{lem:parameter-good}
   The assumptions in \eqref{eq:hybrid-parameter} are satisfied with $a_1=1, a_2=\cdots = a_L=2^{3L^2}$. 
\end{lemma}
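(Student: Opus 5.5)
We need to verify that for every $\ell\in[L-1]$,
\[
Hd(d+1)p\,(a_1+\cdots+a_{\ell+1}) \le \sqrt{K}\,(x_0\cdots x_{\ell-1})\,(n_1\cdots n_{L-1})
\]
when $a_1=1$ and $a_2=\cdots=a_L=2^{3L^2}$. The plan is to bound the left-hand side crudely and uniformly in $\ell$. Then we lower-bound the right-hand side by its smallest instance, which is $\ell=1$, and absorb everything into the single factor $\sqrt{K}$.

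\textbf{Left-hand side.} Since $d\ge 1$, we have $d+1\le 2d$, so $Hd(d+1)p\le 2Hd^2p\le 2(Hdp)^2$, using $H,p\ge 1$. The sum satisfies $a_1+\cdots+a_{\ell+1}\le 1+L\cdot 2^{3L^2}\le 2L\cdot 2^{3L^2}$. Hence
\[
\mathrm{LHS}\le 4L\,(Hdp)^2\, 2^{3L^2},
\]
uniformly over $\ell$.

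\textbf{Right-hand side.} Every factor $x_i$ and $n_i$ is a positive power of $K$, and $K\ge 1$, so each factor is at least $1$. The right-hand side is therefore nondecreasing in $\ell$ and at least $\sqrt{K}\,x_0 \ge \sqrt{K}$. Moreover
\[
\sqrt{K}=(HdpL)^4\cdot 8^{L^2}=(HdpL)^4\, 2^{3L^2}.
\]

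\textbf{Comparison.} It remains to check $4L(Hdp)^2 2^{3L^2}\le (HdpL)^4 2^{3L^2}$, that is, $4L\le (Hdp)^2L^4$. Using the standing assumption $Hdp\ge 2$, we get $(Hdp)^2L^4\ge 4L^4\ge 4L$, which holds for $L\ge1$. This argument did not even use the extra factors $x_0 n_1\cdots n_{L-1}$.

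The only delicate point is the exponent bookkeeping: confirming that $\sqrt{K}$ exactly contains the factor $2^{3L^2}$, since $8^{2L^2}$ under a square root gives $8^{L^2}=2^{3L^2}$. This is precisely why the parameter $K$ was chosen with the factor $8^{2L^2}$, so no real obstacle is expected.
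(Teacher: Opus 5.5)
Your proof is correct and follows essentially the same route as the paper: bound the left-hand side by a small polynomial in $Hdp$ and $L$ times $2^{3L^2}$, then use that the right-hand side is at least $\sqrt{K}=(HdpL)^4 8^{L^2}$. One difference in your favour: your bound $Hd(d+1)p\le 2(Hdp)^2$ is more careful than the paper's intermediate bound $2HdpL\,2^{3L^2}$, which silently loses a factor of $d$; this is harmless there, since the final comparison has ample slack.
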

\begin{proof}[Proof of Lemma \ref{lem:parameter-good}]
    For any $\ell \in [L-1]$, we have 
    \[Hd(d+1)p(a_1+a_2+\cdots+a_{\ell+1}) \le 2Hdp L 2^{3L^2}\le 8^{L^2}(HdpL)^4,\]
    while for the right-hand side, we have
    \[\sqrt{K}(x_0\cdots x_{\ell-1})\cdot (n_1\cdots n_{L-1}) \ge \sqrt{K} = 8^{L^2}(HdpL)^4\ge Hd(d+1)p(a_1+a_2+\cdots+a_{\ell+1}). \qedhere\] 
\end{proof}

\textbf{Notation.} For notational convenience, we use $z_{-1}$ and $w$ interchangeably to denote player $-1$'s input. In the following, we elaborate on several key definitions that will be crucial to our proof.

\begin{itemize}
    \item (Soft transcript $\Pi^{(\ell)}_{j,i}$) For any $i \in [-1: L-1]$, $j\in [i+1: L]$, $\ell\in [L]$, recall that $\Pi_{j, i}^{(\ell)}$ denotes the soft transcript sent from player $j$ to player $i$ at the $\ell$-th epoch of communication. 
    Its value is determined by the inputs of players $[i: L]$ (i.e., $z_{L}, \ldots, z_{i}$) and is independent of the the inputs of players $[-1:i-1]$ (i.e., $z_{i-1}, \ldots, z_{-1}$). 
        
    For any fixed inputs $\wt{z}_{L} \in [N_{L-1}]^{N_{L-1}}, \ldots, \wt{z}_{i}\in [N_{i-1}]^{N_{i-1}}$, let $\Pi_{j, i}^{(\ell)}(\wt{z}_{L}, \ldots, \wt{z}_{i})$ denote the soft transcript when player $t$ receives input $z_t = \wt{z}_{t}$ ($t\in [i:L]$). 

    \item (Linear transcript $\Sigma_{i+1}^{(\ell),m}$) For any $i\in[-1,L-1]$, $l\in[L]$, and $m\in [0,a_\ell-1]$, recall $\Sigma_{i+1}^{(\ell),m}$ is the linear transcript sent from player $i+1$ to player $i$ in the $(m+1)$-th linear round of the $\ell$-th epoch of communication. 
    Its value is determined by the inputs of players $[i+1: L]$ (i.e., $z_{L}, \ldots, z_{i+1}$) and is independent of the the inputs of players $[-1:i]$ (i.e., $z_{i}, \ldots, z_{-1}$). 

    For any fixed inputs $\wt{z}_{L} \in [N_{L-1}]^{N_{L-1}}, \ldots, \wt{z}_{i}\in [N_{i-1}]^{N_{i-1}}$, let $\Sigma_{i+1}^{(\ell),m}(\wt{z}_{L}, \ldots, \wt{z}_{i})$ denote the transcript when player $t$ receives input $z_t = \wt{z}_{t}$ ($t\in [i:L]$). 

    \item (The partial composition value)
For any $\ell \in [0: L]$, the value of $i_{\ell}$ is determined by $w, z_0, \ldots, z_{\ell}$. We write $i_{\ell}(\wt{w}, \wt{z}_0, \ldots, \wt{z}_{\ell})$ to denote its value when $w = \wt{w}, z_0 = \wt{z}_0, \ldots, z_{\ell} = \wt{z}_{\ell}$.
\end{itemize}

\textbf{Indistinguishable decomposition.} Our key concept for the proof is \emph{indistinguishable decomposition} introduced in \cite{chen2025theoretical}. 
A indistinguishable decomposition is formed by two sets $R_{\ge \ell}$ and $Z_{<\ell}$, where $R_{\ge \ell}$ is a set of input assignments to players $[\ell:L]$ and $Z_{<\ell}$ is a set of input assignments to players $[-1:\ell-1]$). 
The key property is that for any fixed input $z_{<\ell} \in Z_{<\ell}$ for the first $\ell$ players, all assignments to $R_{\ge \ell}$ are indistinguishable to players $[-1:\ell-1]$ on inputs $z_{< \ell}$ after $\ell$ epochs, because they produce identical communication transcripts. 
Formally, we adapt the definition in \cite{chen2025theoretical} as follows. 

\begin{definition}[Indistinguishable decomposition]
    Let $\ell \in [2:L]$, an indistinguishable decomposition is a pair of sets 
    $R_{\ge \ell} \subseteq A_L \times A_{L-1} \times \cdots \times A_{\ell}$ 
    and 
    $Z_{< \ell} = Z_{-1} \times \cdots \times Z_{\ell-1}$ with $Z_{-1}= A_{-1}, Z_0 \subseteq A_0, \cdots , Z_{\ell-1}\subseteq A_{\ell-1}$, 
    such that for every $\wt{z}_{<\ell} \in Z_{< \ell}$, and for every $\wt{\alpha}_{\ge \ell}, \wt{\beta}_{\ge \ell} \in R_{\ge \ell}$, it satisfies:
    \[
        \Pi_{j,i}^{(\ell')}(\wt{z}_{<\ell},\wt{\alpha}_{\ge \ell}) = 
        \Pi_{j,i}^{(\ell')}(\wt{z}_{<\ell},\wt{\beta}_{\ge \ell})
    \]
    for every $j \in [\ell:L]$, $i \in [-1:\ell-1]$, and $\ell' \in [\ell]$, and 
    \[
        \Sigma_{i+1}^{(\ell'),m}(\wt{z}_{<\ell},\wt{\alpha}_{\ge \ell}) = 
        \Sigma_{i+1}^{(\ell'),m}(\wt{z}_{<\ell},\wt{\beta}_{\ge \ell})
    \]
    for every $i \in [-1:\ell-1]$, $\ell' \in [\ell]$, and $m\in[0,a_{\ell'}-1]$. 
\end{definition}

The utility of an indistinguishable decomposition becomes clear when $\ell = L$. 
In this case, for every input assignment from $Z_{<L}$ to players $[-1:L-1]$, player $-1$ 
(the final output player) observes identical communication transcripts after $L$ epochs (i.e., at the end of the protocol) regardless of which input $\vec{z}_L \in R_{\geq L}$ is assigned to player $L$.  
Consequently, for every $\wt{z}_{<L} \in Z_{< L}$, the output of the protocol $L$-$\SeqComp(\wt{z}_{<L}, \wt{z}_L)$ must be the same for every $\wt{z}_L \in R_{\ge L}$. 
We will carefully define the set $R_{\ge \ell}$ and $Z_{< \ell}$ so that satisfying this requirement leads to a contradiction, thereby establishing the desired lower bound. 

For a subset $Z_{<\ell}$, we define the set $\mathcal{I}_{\ell-1}(Z_{<\ell})$ of reachable partial composition values after the $(\ell-1)$-th epoch as 
\begin{align}
\mathcal{I}_{\ell-1}(Z_{<\ell}):= \{i_{\ell-1}(\wt{z}_{-1}, \wt{z}_0, \ldots, \wt{z}_{\ell-1}) 
| (\wt{z}_{-1}, \wt{z}_0, \dotsc, \wt{z}_{\ell-1}) \in Z_{<\ell} \}. \notag
\end{align}
In words, $\mathcal{I}_{\ell-1}(Z_{<\ell})$ is the set of all possible values for the intermediate composition when the inputs to players $[-1:\ell-1]$ are restricted to $Z_{<\ell}$.

The following lemma, as in \cite{chen2025theoretical}, shows that the desired lower bound follows from a good enough indistinguishable configuration for $\ell = L$. 
\begin{lemma}\label{lemma:id-suffices}
    An $L$-epoch hybrid communication protocol does not solve $L$-$\SeqComp$ if there is an indistinguishable decomposition $R_{\ge L}$ and $Z_{< L}$ with $|R_{\ge L}| \ge |A_L|/ \Delta_L$ and $|\mathcal{I}_{L-1}(Z_{<L})| \ge \Theta_{L-1}$. 
\end{lemma}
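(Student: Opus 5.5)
We argue by contradiction. Suppose a deterministic protocol solves $L$-$\SeqComp$, and let $(R_{\ge L}, Z_{<L})$ be an indistinguishable decomposition with the stated size bounds.

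\textbf{Step 1: the output is independent of $z_L$.} Fix any $\wt{z}_{<L}\in Z_{<L}$. By the definition of indistinguishable decomposition (with $\ell=L$), every message player $-1$ receives during the $L$ epochs is the same for all $\wt{z}_L\in R_{\ge L}$. This covers the soft transcripts $\Pi^{(\ell')}_{L,-1}$ and the linear transcripts $\Sigma^{(\ell'),m}_{0}$. The linear transcript $\Sigma_0$ itself depends only on $z_{\ge 0}$, and those are fixed or indistinguishable.

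Player $-1$'s information $X^{(L)}_{-1}$ is a function of $\wt{z}_{-1}$ together with these transcripts. Hence the protocol's output is some $o(\wt{z}_{<L})$ that does not depend on $\wt{z}_L\in R_{\ge L}$. Correctness then forces
\[
\wt{z}_L(w_{L-1}, i_{L-1}(\wt{z}_{<L})) = o(\wt{z}_{<L})
\]
for all $\wt{z}_L\in R_{\ge L}$.

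\textbf{Step 2: agreement on a large set of inputs.} Let $\mathcal{J}$ be the set of pairs $(w_{L-1}, i_{L-1}(\wt{z}_{<L}))$ as $\wt{z}_{<L}$ ranges over $Z_{<L}$. These pairs are inputs in $[N_{L-1}]$ to $z_L$. By Step 1, all functions in $R_{\ge L}$ agree on every point of $\mathcal{J}$. Here $i_{L-1}$ already encodes $w_{L-1}$, or at least the set of inputs it reaches has size at least $|\mathcal{I}_{L-1}(Z_{<L})|$. The latter holds because for each value of $i_{L-1}$ some $w$ realizes it, so the queried points number at least $|\mathcal{I}_{L-1}(Z_{<L})|\ge \Theta_{L-1}$. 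Therefore
\[
|R_{\ge L}|\le N_{L-1}^{N_{L-1}-\Theta_{L-1}},
\]
since the functions agree on $\Theta_{L-1}$ fixed coordinates.

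\textbf{Step 3: the numerical contradiction (main obstacle).} Combining with $|R_{\ge L}|\ge |A_L|/\Delta_L = N_{L-1}^{N_{L-1}}/\Delta_L$ gives
\[
N_{L-1}^{\Theta_{L-1}}\le \Delta_L, \quad\text{i.e.}\quad \Theta_{L-1}\log N_{L-1}\le 4\sqrt{K}(x_0\cdots x_{L-2})(n_1\cdots n_{L-1}).
\]
We must show this fails. Since $\Theta_{L-1}=8^{-L(L-1)}(x_0\cdots x_{L-1})(n_1\cdots n_{L-2})$, the inequality reduces to comparing $8^{-L(L-1)} x_{L-1}$ with $4\sqrt K\, n_{L-1}$. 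With $x_{L-1}=K$ and $n_{L-1}=K^4$, this naive comparison goes the wrong way, so the counting must be refined.

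The subtle point is that the queried coordinates are pairs $(w_{L-1}, i_{L-1})$, and $Z_{-1}=A_{-1}$ ranges over all $w$. Each reachable $i_{L-1}$ thus pairs with all $n_{L-1}$ values of $w_{L-1}$, provided $i_{L-1}$ does not depend on $w_{L-1}$, which holds since $w_{L-1}$ is used only by $z_L$. The number of forced coordinates is therefore at least $n_{L-1}\cdot|\mathcal{I}_{L-1}|\ge n_{L-1}\Theta_{L-1}$. The inequality becomes
\[
8^{-L(L-1)}K\log N_{L-1}\le 4\sqrt K,
\]
which is false because $\sqrt K\ge 8^{L^2}$. Verifying this independence and the product structure of the count is the step I would check most carefully.
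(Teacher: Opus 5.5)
Your proof is correct and follows the same route as the paper, which only sketches this lemma and defers to Chen et al. In both, indistinguishability makes player $-1$'s output independent of $z_L \in R_{\ge L}$; correctness then forces all functions in $R_{\ge L}$ to agree on every queried point $(w_{L-1}, i_{L-1})$, and counting contradicts $|R_{\ge L}| \ge |A_L|/\Delta_L$. Your Step~3 refinement is exactly the needed ingredient. Since $Z_{-1}=A_{-1}$ and $i_{L-1}$ does not depend on $w_{L-1}$, the forced set is all of $[n_{L-1}]\times\mathcal{I}_{L-1}(Z_{<L})$. The bare count $\Theta_{L-1}$ from Step~2 is indeed too weak, and your aside that $i_{L-1}$ ``encodes'' $w_{L-1}$ is false and should be dropped. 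The resulting inequality $8^{-L(L-1)}\sqrt{K}\log N_{L-1}\le 4$ then fails because $\sqrt{K}\ge 8^{L^2}$ gives $8^{-L(L-1)}\sqrt{K}\ge 8^{L}>4$.
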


The proof of \Cref{thm:main-cc} is then completed by constructing the required decomposition via an inductive argument, which we will present in the next subsection. 
This construction is formalized in the following lemma. 

\begin{lemma}\label{lemma:id-exists}
    For every $(L, a_1, \dots, a_L)$-hybrid communication protocol under assumptions of \eqref{eq:hybrid-parameter}, there is an indistinguishable decomposition $R_{\ge L}$ and $Z_{< L}$ satisfying the requirements of~\Cref{lemma:id-suffices}.
\end{lemma}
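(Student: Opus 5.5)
We build the decomposition by induction on $\ell\in[2:L]$, following the template of \cite{chen2025theoretical}. The invariant maintained at stage $\ell$ is an indistinguishable decomposition $(R_{\ge\ell},Z_{<\ell})$ with
\[
|R_{\ge\ell}|\ \ge\ |A_L|\cdots|A_\ell|/\Delta_\ell ,\qquad |\mathcal{I}_{\ell-1}(Z_{<\ell})|\ \ge\ \Theta_{\ell-1},
\]
with all transcripts from players $[\ell:L]$ into $[-1:\ell-1]$ fixed for the first $\ell$ epochs. At $\ell=L$ this is exactly the hypothesis of \Cref{lemma:id-suffices}.

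\textbf{Base case $\ell=2$.} Set $Z_0=[x_0]$. Fix the first-epoch soft transcript from player $1$ to player $-1$ by pigeonhole; this costs a factor $2^{2Hdp\,|Z_{-1}|}$. Inside the resulting consistency set, pick $Z_1$ so that $z_1$ maps $Z_0$ onto many distinct values, giving $|\mathcal{I}_1|\ge\Theta_1$. This is a counting argument: a random function restricted to a set of density $2^{-2Hdp(n_1\cdots n_{L-1})}$ still has a large image on each block $\{w_1\}\times Z_0$.

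Next, fix in turn the soft transcripts from $[2:L]$ to players $-1,0,1$ for epochs $1,2$, together with the linear transcripts $\Sigma_2^{(\ell'),m}$. The linear transcripts number at most $(a_1+a_2)\,|Z_{\le 1}|$ values of $Hd(d+1)p$ bits each. Each pigeonhole step loses at most
\[
2^{2Hdp\,L\,|Z_{\le 1}|\cdot(\text{token count})}\quad\text{and}\quad 2^{Hd(d+1)p(a_1+a_2)|Z_{\le1}|}.
\]
By \eqref{eq:hybrid-parameter} with $\ell=1$, the linear loss is at most $2^{\sqrt K x_0(n_1\cdots n_{L-1})}$. Since $K$ dominates $Hdp\,L$, the soft loss is also within $\Delta_2=2^{4\sqrt K(n_1\cdots n_{L-1})}$.

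\textbf{Inductive step $\ell\to\ell+1$.} First, apply a rectangle-extraction lemma (the analogue of Lemma 3.3 of \cite{chen2025theoretical}) to $R_{\ge\ell}$. This yields $S_1\times S_2\subseteq R_{\ge\ell}$ with $|S_1|$ of density roughly $1/\Delta_\ell^{O(1)}$ in $A_L\times\cdots\times A_{\ell+1}$. At the same time, $S_2\subseteq A_\ell$ must be rich enough that choosing $Z_\ell\subseteq S_2$ gives $|\mathcal{I}_\ell(Z_{\le\ell})|\ge\Theta_\ell$, i.e.\ the image grows by a factor about $x_\ell n_\ell\cdot 8^{-L}$. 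The proof goes by averaging over coordinates and applying a density-increment / large-image argument to functions $[N_{\ell-1}]\to[N_{\ell-1}]$ restricted to $\{w_\ell\}\times\mathcal{I}_{\ell-1}$.

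Second, observe the key independence fact. Because $Z_\ell$ is indistinguishable to players $[-1:\ell-1]$ through epoch $\ell$, their epoch-$(\ell+1)$ states do not depend on $z_\ell$. Hence fixing soft transcripts from $[\ell+1:L]$ to $[-1:\ell-1]$ ranges over patterns indexed only by $Z_{<\ell}$, and pigeonhole costs $2^{2HdpL\,|Z_{<\ell}|\cdot(\text{tokens})}$.

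Third, fix the soft transcripts to player $\ell$ over all $\tilde z_\ell\in Z_\ell$ and epochs $\le\ell+1$. Then fix the linear transcripts $\Sigma_{\ell+1}^{(\ell'),m}$, costing $2^{Hd(d+1)p(a_1+\cdots+a_{\ell+1})|Z_{\le\ell}|}$. By \eqref{eq:hybrid-parameter}, combined with $|Z_{\le\ell}|\lesssim (x_0\cdots x_{\ell-1})(n_1\cdots n_{L-1})$ modulo the choice of $|Z_\ell|\le x_\ell$, this stays within $\Delta_{\ell+1}/\Delta_\ell^{O(1)}$. Only transcripts into player $\ell$ need fixing, because messages to players below $\ell$ pass through the chain $\ell\to\ell-1\to\cdots$. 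Those lower messages are functions of already-fixed data once $\Sigma_{\ell+1}$ and the earlier-stage transcripts are fixed, so consistency is inherited from the induction hypothesis.

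\textbf{Main obstacle.} I expect the hard part to be the rectangle/large-image step. We must simultaneously keep $|S_1|$ large and get $|\mathcal{I}_\ell|\ge\Theta_\ell$ out of a set $R_{\ge\ell}$ that is only dense up to $\Delta_\ell$. The parameter bookkeeping then has to show that $\Delta_\ell^{O(1)}$ times the new pigeonhole losses is still at most $\Delta_{\ell+1}$. This relies on $x_{\ell-1}\gg$ the constant blowups, which holds since each $x$ is a power of $K$.

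My first attempt would be to import Chen et al.'s argument verbatim: split $A_{\ell}$ by the values on $\{w_\ell\}\times\mathcal{I}_{\ell-1}$, take the heavy fibre, and then check that the extra linear-transcript term fits inside the $\sqrt K$ slack built into $\Delta$.
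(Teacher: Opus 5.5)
You follow the paper's architecture: induction on $\ell$, the rectangle lemma imported from \cite{chen2025theoretical} (Lemma~\ref{lem:joint-set}), the independence of the epoch-$(\ell+1)$ soft transcripts into $[-1:\ell-1]$ from $z_\ell\in Z_\ell$ (Lemma~\ref{lem:key-observation}), then fixing transcripts into player $\ell$ and finally $\Sigma_{\ell+1}$. The gap is in the cost of the linear transcripts, which is the one ingredient new relative to \cite{chen2025theoretical}.

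\textbf{The linear-transcript count does not fit the budget.} You index $\Sigma_{\ell+1}^{(\ell'),m}$ by $Z_{\le\ell}$ and charge $2^{Hd(d+1)p(a_1+\cdots+a_{\ell+1})|Z_{\le\ell}|}$. You then assert that \eqref{eq:hybrid-parameter} keeps this within budget, but it does not.
\begin{itemize}
\item The hypothesis only gives $Hd(d+1)p(a_1+\cdots+a_{\ell+1})\le\sqrt K(x_0\cdots x_{\ell-1})(n_1\cdots n_{L-1})$. That is already a quarter of $\log_2\Delta_{\ell+1}$.
\item Meanwhile $|Z_{\le\ell}|=x_0\cdots x_\ell\,(n_1\cdots n_{L-1})$. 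Even with prefactor $1$, your exponent exceeds $\log_2\Delta_{\ell+1}$ by a factor $x_\ell/(4\sqrt K)\ge\sqrt K/4$.
\item So no reallocation of the budget rescues it.
\end{itemize}
The base case has the same inconsistency. You count $(a_1+a_2)|Z_{\le1}|$ linear values but quote a loss of $2^{\sqrt K x_0(n_1\cdots n_{L-1})}$, which is correct only without the factor $|Z_{\le1}|$.

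\textbf{The missing observation.} Players are forgetful, so each state $X_r$ depends only on $z_r,\dots,z_L$. Hence $\Sigma_{\ell+1}^{(\ell'),m}$ is a function of $z_L,\dots,z_{\ell+1}$ alone. You therefore fix a \emph{single} value $\Xi_{\ell+1}^{(\ell'),m}$ per pair $(\ell',m)$, at cost $2^{Hd(d+1)p(a_1+\cdots+a_{\ell+1})}\le 2^{\sqrt K(x_0\cdots x_{\ell-1})(n_1\cdots n_{L-1})}$. This is exactly the fourth $\sqrt K(x_0\cdots x_{\ell-1})(n_1\cdots n_{L-1})$ unit of $\log_2\Delta_{\ell+1}$; the other three are spent in Lemmas~\ref{lem:size1} and~\ref{lem:size2}. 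The budget is tight, not a $\Delta_{\ell+1}/\Delta_\ell^{O(1)}$ slack. Relatedly, the rectangle step loses $\Delta_\ell^{2x_\ell}$, not $\Delta_\ell^{O(1)}$. It fits because $x_{\ell-1}=x_\ell^8$, not because the blow-up is constant.

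\textbf{The base-case soft-transcript count fails for a similar reason.} You charge $2^{2HdpL|Z_{\le1}|\cdot(\text{token count})}$ and argue it fits in $\Delta_2$ because $K$ dominates $HdpL$.
\begin{itemize}
\item In fact $\log_2\Delta_2=4\sqrt K\,x_0(n_1\cdots n_{L-1})$; you dropped the $x_0$.
\item The epoch-$2$ transcripts into player $-1$ alone, if indexed by all of $Z_{\le1}$, cost $2HdpL\,x_0x_1(n_1\cdots n_{L-1})$ bits. Since $x_1\ge K\gg\sqrt K$, this does not fit.
\item In the paper these transcripts are independent of $z_1\in Z_1$, so they are indexed by $Z_0\times Z_{-1}$ only.
\end{itemize}
That independence is the real purpose of first fixing $\Pi^{(1)}_{1,-1}$ and taking $Z_1\subseteq S$. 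It is also where the paper invokes the hypothesis on $a_1$, which your proposal never uses. The independence further requires that no $z_1$-information reaches player $-1$ through the epoch-$1$ linear messages $\Sigma_0^{(1),m}$; the trivial first layers in the reduction are meant to ensure this. The transcripts into players $0$ and $1$ are indexed by $Z_1\times Z_0$ and by $Z_1$ (with $m$ tokens). They fit because $x_1\le n_1\cdots n_{L-1}$ and $m\,x_1\le x_0(n_1\cdots n_{L-1})$.

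\textbf{A small misreading.} $i_1=z_1(i_0)$ has no $w$-coordinate. So Lemma~\ref{lem:initial} concerns the joint image of $Z_0$ under all of $Z_1$, not blocks $\{w_1\}\times Z_0$.
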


The remainder of this section is devoted to proving Lemma \ref{lemma:id-exists}. 
We establish a more general inductive claim in \Cref{lem:main} below. 
The case $\ell = L$ directly implies \Cref{lemma:id-exists}.

\begin{lemma}[Main Lemma]
\label{lem:main}
For any $\ell \in [2: L]$, we can construct 
\begin{itemize}
\item a pair of sets $(R_{\geq \ell}, Z_{<\ell})$, where $R_{\geq \ell} \subseteq A_{L}\times A_{L-1}\times \cdots \times A_{\ell}$ and $Z_{< \ell} = Z_{-1}\times Z_{0}\times \cdots \times Z_{\ell-1}$, with $Z_{-1} = [n_1\cdots n_{L-1}]$, $Z_0 \subseteq A_0$, $Z_i\subseteq A_i$, and $|Z_i| = x_i$ for $i\in [0:\ell-1]$. 
\item the soft transcript from players $[\ell: L]$ to $[-1:\ell-1]$ for the first $\ell$ epochs, when the players $[-1:\ell-1]$ take input from $Z_{<\ell}$.
i.e., 
\begin{align*}
\Lambda^{(\ell)} := &~  \left(\Lambda_{j, i}^{(\ell, \ell')}\right)_{j \in [\ell: L], i \in [-1:\ell-1], \ell' \in [\ell]}
\end{align*}
where
\begin{align*}
\Lambda_{j, i}^{(\ell, \ell')} := \left(\Lambda_{j, i}^{(\ell, \ell')}(\wt{z}_{\ell-1}, \ldots, \wt{z}_{i})\right)_{\wt{z}_{\ell-1} \in Z_{\ell-1}, \ldots, \wt{z}_{i} \in Z_i} \quad
\text{and} \quad \Lambda_{j, i}^{(\ell, \ell')}(\wt{z}_{\ell-1}, \ldots, \wt{z}_{i}) \in \mathsf{domain}(\Pi_{j, i}^{(\ell')})
\end{align*}
\item the linear transcript from player $\ell$ to $\ell - 1$ for the first $\ell$ epochs, when the players $[-1:\ell-1]$ take input from $Z_{<\ell}$. 
i.e., 
\begin{align*}
\Xi^{(\ell)} :=   \left(\Xi_{\ell}^{(\ell'),m}\right)_{\ell' \in [\ell],m\in [0,a_{\ell'}-1]} \quad \text{where} \quad
\Xi_{\ell}^{(\ell'),m}\in \mathsf{domain}(\Sigma_{\ell}^{(\ell'),m})
\end{align*}
\end{itemize}
such that we have the following properties: 
\begin{itemize}
\item ({\bf Consistency of soft transcripts}) 
\begin{align*}
&~\Pi_{j, i}^{(\ell')}(\wt{z}_{L}, \ldots, \wt{z}_{i}) =  \Lambda_{j, i}^{(\ell, \ell')}(\wt{z}_{\ell-1}, \ldots, \wt{z}_{i}) \\
&~ \forall j \in [\ell: L], i \in [-1:\ell-1], \ell' \in [\ell], \wt{z}_{\geq \ell} \in R_{\geq L}, \wt{z}_{\ell-1}\in Z_{\ell-1}, \ldots \wt{z}_{i}\in Z_i, 
\end{align*}
\item ({\bf Consistency of linear transcripts}) 
\[
\Sigma_{\ell}^{(\ell'),m}(\wt{z}_{L}, \ldots, \wt{z}_{\ell}) =  \Xi_{\ell}^{(\ell'),m}, \forall \ell' \in [\ell], m\in[0,a_{\ell'}-1] , \wt{z}_{\geq \ell} \in R_{\geq L}, 
\]
\item 
$|R_{\geq \ell}| \geq |A_{L}|  \cdots |A_{\ell}| / \Delta_{\ell}$ 
and 
$|\mathcal{I}_{\ell-1}(Z_{<\ell})| \geq \Theta_{\ell-1}$. 
\end{itemize}
\end{lemma}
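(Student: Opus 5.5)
We prove the Main Lemma by induction on $\ell$, following the two-stage template sketched in Section~\ref{sec:hybrid-lower-short}: a base case $\ell=2$ and an inductive step $\ell\to\ell+1$. Each stage alternates between \emph{choosing the next input set} $Z_{\ell}$ (to keep $|\mathcal{I}_\ell|$ large) and \emph{fixing transcripts} by pigeonhole (to keep $R_{\ge\ell+1}$ large). The sizes are tracked through $\Delta_\ell$ and $\Theta_\ell$. The basic accounting fact is this: once all transcripts to players $[-1:\ell-1]$ in earlier epochs are fixed on $Z_{<\ell}$, the next epoch's messages to these players are functions of $(\wt z_{<\ell})$ only, independently of $z_{\ell}\in Z_\ell$. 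We record this as the analogue of Lemma~\ref{lem:key-observation}.

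\textbf{Base case $\ell=2$.} Set $Z_{-1}=A_{-1}$ and $Z_0=[x_0]$. The first-epoch soft transcript from player $1$ to player $-1$ depends only on $(z_1,w)$ and has at most $2^{2Hdp\,n_1\cdots n_{L-1}}$ values. By pigeonhole we keep a large set $S\subseteq A_1$ on which it is constant. Inside $S$ we then choose $Z_1$ with $|Z_1|=x_1$ so that $i_1$ takes at least $\Theta_1$ values. This is the content of Lemma~\ref{lem:initial}, a counting argument: a random or greedy subset of a set of density $2^{-O(Hdp\,n_1\cdots)}$ still maps the $x_0$ points of $Z_0$ to many distinct values in $[N_0]$, since few functions collapse $[x_0]$ into a small image.

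Next we fix, for players $j\in[2:L]$, the soft transcripts to players $-1,0,1$ in epochs $1,2$ on all of $Z_{<2}$, together with the linear transcripts $\Sigma_2^{(\ell'),m}$. Since $a_1\le1$, the only linear rounds in epochs $1,2$ number $a_1+a_2$. Each fixing costs a factor at most $2^{(\text{bits})\cdot|Z_{<2}\text{-dependence}|}$. Summed, this is $2^{O(L\,Hdp\,x_0x_1 n_1\cdots n_{L-1})}\cdot 2^{Hd(d+1)p(a_1+a_2)}$, which by \eqref{eq:hybrid-parameter} is at most $\Delta_2$. The key point is that the linear transcripts contribute only an additive exponent, not one multiplied by $|Z_{<2}|$, because $\Sigma_2$ depends only on $z_{\ge2}$.

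\textbf{Inductive step.} Given $(R_{\ge\ell},Z_{<\ell},\Lambda^{(\ell)},\Xi^{(\ell)})$, first apply Lemma~\ref{lem:joint-set} to extract a rectangle $S_1\times S_2\subseteq R_{\ge\ell}$. We set $Z_\ell\subseteq S_2$ with $|Z_\ell|=x_\ell$ and $|\mathcal{I}_\ell(Z_{\le\ell})|\ge\Theta_\ell$; the latter uses $|\mathcal{I}_{\ell-1}|\ge\Theta_{\ell-1}$ and the density of $S_2$ to grow the image by a factor of about $x_\ell n_\ell\,8^{-L}$. The main obstacle is here: the rectangle extraction must lose at most a $\sqrt{\Delta_\ell}$-type factor on $S_1$ while leaving $S_2$ dense enough in $A_\ell$. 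I would follow \cite{chen2025theoretical}'s averaging argument, which considers the fibers of $R_{\ge\ell}$ over $A_\ell$ and keeps those of size at least $|A_L\cdots A_{\ell+1}|/\Delta_\ell^{2}$, unchanged except for the new $\Delta_\ell$.

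We then fix transcripts in three layers, each by pigeonhole over $S_1$. First, the transcripts to $[-1:\ell-1]$ in epochs $\le\ell$ are inherited, and consistency holds by Lemma~\ref{lem:step1}. Second, the $(\ell+1)$-th epoch soft transcripts to $[-1:\ell-1]$ are independent of $z_\ell$, so the number of patterns is $2^{2Hdp\,L\,|Z_{<\ell}|\cdot\max m_{(i)}}$. Third, all soft transcripts to player $\ell$ in epochs $\le\ell+1$ over $\wt z_\ell\in Z_\ell$ are fixed, and the linear transcripts $\Sigma_{\ell+1}^{(\ell'),m}$ contribute $2^{Hd(d+1)p(a_1+\cdots+a_{\ell+1})}$ patterns. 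The final step is to check that the total loss, multiplied by the inherited loss, is at most $\Delta_{\ell+1}=2^{4\sqrt K x_0\cdots x_{\ell-1}n_1\cdots n_{L-1}}$. The soft part is $O(L\,Hdp\,x_0\cdots x_\ell\,n_1\cdots n_{L-1})\le\sqrt K\,x_0\cdots x_{\ell-1}\cdots$, since $x_\ell\le K^{1/2}/(HdpL)$ is false in general. I therefore expect to need the exponent bookkeeping with $x_\ell=K^{8^{L-\ell-1}}$ versus $\Delta$ built on $x_0\cdots x_{\ell-1}$, handled exactly as in \cite{chen2025theoretical}. The linear part is then bounded by \eqref{eq:hybrid-parameter}. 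Taking $R_{\ge\ell+1}$ to be the final consistency set completes the induction.
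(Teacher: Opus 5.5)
Your skeleton matches the paper's: pigeonhole on $\Pi^{(1)}_{1,-1}$ plus Lemma~\ref{lem:initial} for the base case, and Lemma~\ref{lem:joint-set}, Lemma~\ref{lem:step1}, Lemma~\ref{lem:key-observation} and greedy pigeonholing for the step. However, the size bound $|R_{\ge\ell}|\ge|A_L|\cdots|A_\ell|/\Delta_\ell$, which is the substantive part of the lemma, does not go through as you wrote it.

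\textbf{Base case.} You count the soft transcripts over all of $Z_{<2}$ and get exponent $O(LHdp\,x_0x_1n_1\cdots n_{L-1})$. Since $x_1=K^{8^{L-2}}\ge K\gg\sqrt K$, this far exceeds $\log_2\Delta_2=4\sqrt K\,x_0n_1\cdots n_{L-1}$, and \eqref{eq:hybrid-parameter} only controls the linear part. What is missing is the reason $Z_1$ is chosen inside $S$. On $Z_1$, the states $X^{(1)}_{-1}$ and $X^{(1)}_j$ ($j\ge2$) depend on $z_1$ only through the already-fixed $\Pi^{(1)}_{1,-1}$. Hence the epoch-$2$ transcripts to player $-1$ are functions of $(z_0,z_{-1})$ alone and cost only $2HdpL\,x_0n_1\cdots n_{L-1}$ bits. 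The other receivers cost:
\begin{itemize}
\item player $0$ (one token): $4HdpL\,x_0x_1$ bits;
\item player $1$ (messages of length $2Hdp\,m$, but depending on $z_1$ alone): $4HdpL\,mx_1$ bits.
\end{itemize}
The parameters give $x_1\le n_1$ and $mx_1\le x_0n_1\cdots n_{L-1}$, so everything fits in $\Delta_2$. This independence is where the paper invokes the hypothesis on $a_1$, not for counting rounds. Strictly, it needs epoch $1$ to carry no nontrivial linear round: otherwise $\Sigma^{(1),0}_0$ relays $z_1$-information to player $-1$ through player $0$. This is why the reduction makes the first linear layer trivial.

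\textbf{Inductive step.} It has the same defect. Your count $2^{2HdpL|Z_{<\ell}|\max_i m_{(i)}}$ carries the huge factor $\max_i m_{(i)}$, and your final soft exponent carries a factor $x_\ell$. You notice that it does not close and defer to bookkeeping ``as in Chen et al.'', but the fix is structural: apply the independence fact you stated at the outset receiver by receiver.
\begin{itemize}
\item By Lemma~\ref{lem:key-observation}, the epoch-$(\ell+1)$ message to player $i\in[-1:\ell-1]$ has length $2Hdp\,m_{(i)}$ but depends only on $(z_{\ell-1},\ldots,z_i)$. Its hybrid proof needs the fixed linear transcripts $\Xi^{(\ell)}$ to handle the relay $\ell\to\ell-1$. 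So it costs $2HdpL\,m_{(i)}\prod_{k=i}^{\ell-1}|Z_k|$ bits, with no $x_\ell$. Since $m\le x_0\cdots x_{i-1}n_i\cdots n_{L-1}$, each such term is at most the $i=-1$ term $2HdpL\,x_0\cdots x_{\ell-1}n_1\cdots n_{L-1}$.
\item Messages to player $\ell$ depend only on $z_\ell$ and cost $2Hdp(\ell+1)L\,N_{\ell-1}x_\ell$ bits. This is below the same budget because $N_{\ell-1}x_\ell\le x_0\cdots x_{\ell-1}n_1\cdots n_{L-1}$.
\item Lemma~\ref{lem:joint-set} neither loses a $\sqrt{\Delta_\ell}$-type factor nor keeps $S_2$ dense. $S_2$ has exactly $x_\ell$ elements, and $S_1$ loses $\Delta_\ell^{2x_\ell}=2^{8\sqrt K\,x_\ell x_0\cdots x_{\ell-2}n_1\cdots n_{L-1}}$. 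This is affordable only because $x_{\ell-1}=x_\ell^8\ge 8x_\ell$.
\end{itemize}
These contributions total at most $3\sqrt K\,x_0\cdots x_{\ell-1}n_1\cdots n_{L-1}$. Adding the linear cost bounded by \eqref{eq:hybrid-parameter} gives at most $\log_2\Delta_{\ell+1}$. As a minor index slip, the image grows by a factor $8^{-L}x_\ell n_{\ell-1}$, not $8^{-L}x_\ell n_\ell$, since $i_\ell=z_\ell(w_{\ell-1},i_{\ell-1})$.
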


\subsection{The Initial step} 
\label{sec:initial}
We first prove Lemma \ref{lem:main} for the base case $\ell = 2$. 

\textbf{Step 1: Choosing \texorpdfstring{$Z_{0}, Z_{1}$}{Z0, Z1}. }
We set $Z_0 = [x_0]$. The next step is to select the set $Z_1 \subseteq A_1$.
Consider all possible first epoch messages from the player $1$ to the player $-1$, denoted by 
\[
\Psi_{1,-1}^{(1)} = \left(\Psi_{1,-1}^{(1)}(\wt{z}_{-1})\right)_{\wt{z}_{-1} \in Z_{-1}} \quad \text{where}  \quad \Psi_{1,-1}^{(1)}(\wt{z}_{-1}) \in \{0,1\}^{2Hdp}.
\]
The total number of of distinct such message patterns $\Psi_{1,-1}^{(1)}$ is $2^{2Hdp \cdot |Z_{-1}|} = 2^{2Hdp (n_1\cdots n_{L-1})}$. 
By the pigeonhole principle, there exists a message pattern  $\wt{\Psi}_{1,-1}^{(1)} \in \{0,1\}^{2Hdp \cdot (n_1\cdots n_{L-1})}$ such that 
\begin{align}\label{eq:initial1}
S := \{\wt{z}_{1} \in A_1: \Pi_{1, -1}^{(1)}(\wt{z}_1, \wt{z}_{-1}) = \Psi_{1,-1}^{(1)}(\wt{z}_{-1}) \, \,  \forall \wt{z}_{-1} \in Z_{-1}\}  \subseteq A_1 
\end{align}
satisfies
$|S| \geq |A_1| \cdot 2^{-2Hdp \cdot (n_1\cdots n_{L-1})}$. 
Note the first epoch message depends only on $\wt{z}_1$ and $\wt{z}_{-1}$, hence we denote it as $\Pi_{1, -1}^{(1)}(\wt{z}_1, \wt{z}_{-1})$. 
The following lemma, as in \cite{chen2025theoretical}, allows us to select a suitable subset $Z_1 \subseteq S$.

\begin{lemma}[\cite{chen2025theoretical}, Lemma 4.6]
\label{lem:initial}
There exists a subset $Z_1 \subseteq S$ with $|Z_1| = x_1$ such that   
\begin{align}
|\{ \wt{z}_1(i_0): \wt{z}_1 \in Z_1, i_0 \in Z_0 \}| \geq 8^{-L} x_0 x_1 = \Theta_1. \label{eq:initial-requirement}
\end{align}
\end{lemma}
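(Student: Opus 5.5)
We will find $Z_1$ by the probabilistic method: $Z_1$ is a uniformly random $x_1$-subset of $S$, and we show that with positive probability the image set $\{\wt{z}_1(i_0): \wt{z}_1 \in Z_1, i_0 \in Z_0\}$ has size at least $8^{-L}x_0x_1$. The underlying point is that $S$ is a very dense subset of the function space $A_1 = [N_0]^{N_0}$. Its density is
\[
|S|/|A_1| \ge 2^{-2Hdp\cdot(n_1\cdots n_{L-1})},
\]
and the parameter choice makes this deficiency tiny compared with the entropy of a function restricted to the $x_0$ points of $Z_0$. Indeed, for $x_1$ functions the relevant entropy is about $x_0x_1\log N_0$ bits. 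Since $x_0 = K^{8^{L-1}}$, $n_1\cdots n_{L-1} = K^{4(8^{L-2}+\cdots+1)} \le K^{\frac{4}{7}8^{L-1}}$ and $K\ge (Hdp)^8$, we have $2Hdp\,(n_1\cdots n_{L-1}) \ll x_0$. So $S$ behaves almost like all of $A_1$ when we restrict to $Z_0=[x_0]$.

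The first step is to project $S$ onto the coordinates in $Z_0$. Let $S|_{Z_0}\subseteq [N_0]^{x_0}$ be the set of restrictions. Each restriction has at most $N_0^{N_0-x_0}$ extensions, so
\[
|S|_{Z_0}| \ge N_0^{x_0}\, 2^{-2Hdp(n_1\cdots n_{L-1})}.
\]
Hence $\log_2 |S|_{Z_0}| \ge x_0\log N_0 - o(x_0)$.

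The second step counts bad restrictions. A tuple $(v_1,\dots,v_{x_0})$ is \emph{bad} if it takes at most $x_0/2$ distinct values. The number of bad tuples is at most
\[
\binom{N_0}{x_0/2}(x_0/2)^{x_0} \le N_0^{x_0/2}x_0^{x_0}.
\]
Because $N_0 = m\,n_1 \ge x_0^{8}$, this is far smaller than $|S|_{Z_0}|$. So most restrictions in $S|_{Z_0}$, and hence most $\wt{z}_1\in S$, are good, each contributing $\ge x_0/2$ distinct values.

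The third step handles collisions between different functions. We greedily choose $\wt{z}_1^{(1)},\dots,\wt{z}_1^{(x_1)}\in S$. At stage $t$, let $U$ be the union of the images so far, so $|U|\le t x_0 \le x_0x_1$. We need a $\wt{z}_1\in S$ whose image on $Z_0$ contains at least $x_0/4$ points outside $U$. The number of tuples in $[N_0]^{x_0}$ with more than $3x_0/4$ entries in $U$ is at most
\[
2^{x_0}(x_0x_1)^{3x_0/4}N_0^{x_0/4}.
\]
Since $x_0x_1 \le K^{2\cdot 8^{L-1}}$ and $N_0 = m\,n_1 \ge K^{8^{L-1}+\cdots}$ is much larger, this count is at most $N_0^{x_0(1-c)}$ for a constant $c>0$. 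That is negligible against $|S|_{Z_0}|$, even after the $2^{-o(x_0)}$ loss. A good choice therefore always exists. Combining it with the good-tuple property from the second step gives at least about $x_0/4$ new values per chosen function. The chosen functions are then distinct, and the image has size at least $x_0x_1/4\ge 8^{-L}x_0x_1 = \Theta_1$.

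The main obstacle is the exponent bookkeeping in the third step. We must verify that $\log(N_0/(x_0x_1))\cdot x_0/4$ dominates both $2Hdp\,(n_1\cdots n_{L-1})$ and the $x_0\log 2$ binomial factor. This requires checking that $N_0 = m n_1$ exceeds $(x_0x_1)^{C}$ for a suitable constant $C$. From $m = K^{\sum_\ell 8^\ell+1}$ versus $x_0x_1 = K^{8^{L-1}+8^{L-2}}$, this holds with lots of room. If needed, we can also appeal directly to Lemma 4.6 of \cite{chen2025theoretical}, whose hypothesis is exactly a density lower bound of this form on $S$.
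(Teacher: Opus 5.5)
Your overall plan is the right kind of argument: restrict $S$ to the coordinates $Z_0=[x_0]$, note that the density loss $2^{-2Hdp(n_1\cdots n_{L-1})}$ is $2^{-o(x_0)}$, and build $Z_1$ greedily by counting bad restrictions. The paper itself gives no proof here and only imports Lemma~4.6 of \cite{chen2025theoretical}. As written, however, the decisive step does not follow.

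\textbf{The combination in your third step fails.} You combine ``at most $3x_0/4$ coordinates take values in $U$'' with ``at least $x_0/2$ distinct values'' to conclude ``at least $x_0/4$ distinct values outside $U$''. This implication is false. Let $3x_0/4$ coordinates take distinct values in $U$, and let the remaining $x_0/4$ coordinates share one common value outside $U$. This tuple passes both of your tests but contributes a single new value.

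\textbf{Your second step is numerically wrong.} Here $z_1$ maps $[N_0]\to[N_0]$ with $N_0=m$, not $mn_1$. Since $m=K^{1+\sum_{\ell=0}^{L-1}8^\ell}\approx x_0^{8/7}$, we get $N_0<x_0^2$ (even $mn_1<x_0^2$), so $N_0\ge x_0^8$ fails badly. Your bound $N_0^{x_0/2}x_0^{x_0}$ on bad tuples then exceeds $N_0^{x_0}$ and proves nothing. Likewise, there is no polynomial ``room'' in the third step: $m/(x_0x_1)=K^{1+\sum_{\ell=0}^{L-3}8^\ell}$, which equals $K$ when $L=2$.

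\textbf{Repair.} Count the event you actually need; this also makes the second step unnecessary.
Order $Z_0$ and call a coordinate $i_0$ \emph{fresh} for $\wt{z}_1$ if $\wt{z}_1(i_0)\notin U$ and $\wt{z}_1(i_0)\ne\wt{z}_1(i_0')$ for all $i_0'<i_0$. Fresh coordinates yield pairwise distinct new values.
A non-fresh coordinate takes its value in a set of size at most $|U|+x_0\le 2x_0x_1\le 2m/K$, determined by the earlier coordinates. Hence the fraction of $A_1$ with fewer than $x_0/4$ fresh coordinates is at most $2^{x_0}(2x_0x_1/m)^{3x_0/4}\le 2^{x_0}(2/K)^{3x_0/4}\le 2^{-16x_0}$, using $\log_2 K\ge 6L^2\ge 24$.
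Since $2Hdp\,(n_1\cdots n_{L-1})\le 2K^{1/8}x_0^{4/7}<16x_0$, this fraction is smaller than the density of $S$ in $A_1$. So every greedy stage succeeds, the chosen functions are automatically distinct, and you end with at least $x_0x_1/4\ge\Theta_1$ values.
The only margin this uses is the factor $K$ between $m$ and $x_0x_1$, which is exactly what the parameters provide.
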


We take the subset $Z_1$ given by Lemma \ref{lem:initial}. 
The next step is to fix the transcripts from players $j \in [2:L-1]$ to players $i = -1, 0, 1$ at the first two epochs. 

\textbf{Step 2.1: Fixing the transcript to player \texorpdfstring{$-1$}{-1}. } 
We begin with the first epoch. 
Our goal is to fix $\Lambda^{(2, 1)}_{j, -1}(\wt{z}_1, \wt{z}_0, \wt{z}_{-1})$ for every $\wt{z}_1\in Z_1, \wt{z}_0 \in Z_{0}, \wt{z}_{-1}\in Z_{-1}$. The first epoch message from player $j$ to player $-1$ depends only on $\wt{z}_{-1}$ and player $j$'s input, and is independent of $\wt{z}_1, \wt{z}_0$. 
Therefore, it suffices to fix a pattern
\begin{align*}
\Phi_{j, -1}^{(1)} = \left(\Phi_{j, -1}^{(1)}(\wt{z}_{-1})\right)_{\wt{z}_{-1}\in Z_{-1}}   \quad \text{where} \quad \Phi_{j, -1}^{(1)}(\wt{z}_{-1})\in \{0,1\}^{2Hdp}.
\end{align*}
and then define 
\[
\Lambda^{(2, 1)}_{j, -1}(\wt{z}_1, \wt{z}_0, \wt{z}_{-1}) = \Phi_{j, -1}^{(1)}(\wt{z}_{-1}) \quad \forall \wt{z}_1\in Z_1, \wt{z}_0 \in Z_0, \wt{z}_{-1} \in Z_{-1}
\] 
The total number of possible such transcript patterns is at most $2^{2Hdp \cdot |Z_{-1}|} = 2^{2Hdp \cdot (n_1\cdots n_{L-1})}$.
By the pigeonhole principle, we can choose a fixed pattern $\{\Lambda^{(2, 1)}_{j, -1}\}_{j\in [2:L]}$, such that 
\begin{align*}
C_1: = \left\{
\begin{matrix}
(\wt{z}_{L},\ldots, \wt{z}_2) \in A_{L} \times \cdots \times A_2:\\
\Pi_{j, -1}^{(1)}(\wt{z}_{L}, \ldots, \wt{z}_{0}, \wt{z}_{-1}) = \Lambda^{(2, 1)}_{j, -1}(\wt{z}_1, \wt{z}_0, \wt{z}_{-1}) \,\, \forall \wt{z}_1\in Z_1, \wt{z}_0\in Z_0, \wt{z}_{-1}\in Z_{-1}, j\in [2:L] 
\end{matrix}\right\}.
\end{align*}
satisfies 
$|C_1| \geq |A_{L}|\cdots |A_2| \cdot 2^{-2Hdp \cdot (n_1\cdots n_{L-1}) \cdot L}.$ 

For the second epoch, the goal is to fix $\Lambda^{(2, 2)}_{j, -1}(\wt{z}_1, \wt{z}_0, \wt{z}_{-1})$ for every $\wt{z}_1\in Z_1, \wt{z}_0 \in Z_{0}, \wt{z}_{-1}\in Z_{-1}$. 
Crucially, this transcript depends only on the information state $X_{-1}^{(1)}$ and $X_{j}^{(1)}$, which are themselves independent of the choice of $z_1 \in Z_1$. 
This independence holds because the only component of $X_{-1}^{(1)}$ and $X_{j}^{(1)}$ that depends on $z_{1}$ is the first epoch message from player $1$ to $-1$ (recall that $a_1\le1$, \eqref{eq:hybrid-parameter}), which is fixed to $\Psi_{1, -1}^{(1)}(\wt{z}_{-1})$ \eqref{eq:initial1} for all $\wt{z}_1 \in Z_1$. 
Therefore, it suffices to fix 
\[
\Phi_{j, -1}^{(2)} := \left(\Phi_{j, -1}^{(2)}(\wt{z}_0, \wt{z}_{-1})\right)_{\wt{z}_0\in Z_0, \wt{z}_{-1} \in Z_{-1}}
\]
and then define 
$\Lambda^{(2, 2)}_{j, -1}(\wt{z}_1, \wt{z}_0, \wt{z}_{-1}) = \Phi_{j, -1}^{(2)}(\wt{z}_0, \wt{z}_{-1}), \forall \wt{z}_1\in Z_1, \wt{z}_0\in Z_0, \wt{z}_{-1} \in Z_{-1}$. 
The total number of transcripts are at most $2^{2Hdp \cdot x_0 \cdot (n_1\cdots n_{L-1})}$.
Therefore, we can choose $\{\Lambda^{(2, 2)}_{j, -1}\}_{j\in [2:L-1]}$, such that 
\begin{align*}
C_2: = \left\{
\begin{matrix}
(\wt{z}_{L},\ldots, \wt{z}_2) \in C_1:\\
\Pi_{j, -1}^{(2)}(\wt{z}_{L}, \ldots, \wt{z}_{0}, \wt{z}_{-1}) = \Lambda^{(2, 2)}_{j, -1}(\wt{z}_1, \wt{z}_0, \wt{z}_{-1}) \,\, \forall \wt{z}_1\in Z_1, \wt{z}_0\in Z_0, \wt{z}_{-1}\in Z_{-1}, j\in [2:L] 
\end{matrix}\right\}.
\end{align*}
satisfies
$|C_2| \ge |C_1| \cdot 2^{- 2Hdp \cdot x_0 (n_1\cdots n_{L-1})  \cdot L} \geq |A_{L}\cdots A_2| \cdot 2^{-4HdpL \cdot x_0 (n_1\cdots n_{L-1})}$. 

\textbf{Step 2.2: Fixing the transcript to player \texorpdfstring{$0$}{0}. } 
The total number of $\{\Lambda_{j, 0}^{(2, \ell')}\}_{j \in [2:L], \ell'\in [2]}$ is at most $2^{2Hdp \cdot x_0 x_1 \cdot 2L}$. 
We can fix its value so that 
\begin{align*}
C_3: = \left\{
\begin{matrix}
(\wt{z}_{L},\ldots, \wt{z}_2) \in C_2:\\
\Pi_{j, 0}^{(\ell')}(\wt{z}_{L}, \ldots, \wt{z}_{0}) = \Lambda^{(2, \ell')}_{j, 0}(\wt{z}_1, \wt{z}_0) \,\, \forall \wt{z}_1\in Z_1, \wt{z}_0\in Z_0, j\in [2:L], \ell'\in [2] 
\end{matrix}\right\}.
\end{align*}
satisfies
$|C_3| \geq |C_2| \cdot 2^{- 2Hdp \cdot x_0 x_1  \cdot 2L} \geq |A_{L}\cdots A_2| \cdot 2^{-6HdpL \cdot x_0 (n_1\cdots n_{L-1})}$. 

\textbf{Step 2.3: Fixing the transcript to player \texorpdfstring{$1$}{1}. }  
The total number of 
$\{\Lambda_{j, 1}^{(2, \ell')}\}_{j \in [2:L], \ell'\in [2]}$ is at most $2^{2Hdp m \cdot x_1 \cdot 2L}$, 
and the total number of 
$\{\Xi_{2}^{(\ell'),m}\}_{\ell'\in [2],m\in[0,a_{\ell'}-1]}$ is at most $2^{Hd(d+1)p(a_1+a_2)}$, 
and we can fix the value so that 
\begin{align*}
C_4: = \left\{
\begin{matrix}
(\wt{z}_{L},\ldots, \wt{z}_2) \in C_3:  \quad
\Sigma_{2}^{(\ell')}(\wt{z}_{L}, \ldots, \wt{z}_{1}) = \Xi^{(\ell')}_{2}, \forall\ell'\in [2]  \text{ and} \\
\Pi_{j, 1}^{(\ell')}(\wt{z}_{L}, \ldots, \wt{z}_{1}) = \Lambda^{(2, \ell')}_{j, 1}(\wt{z}_1) \,\, \forall \wt{z}_1\in Z_1,  j\in [2:L], \ell'\in [2], 
\end{matrix}\right\}, 
\end{align*}
satisfies the following bound
\begin{align}
C_4 \geq C_3 \cdot 2^{-2Hdp m \cdot x_1 \cdot 2L} \cdot 2^{-2Hdp} \geq &~ |A_{L}|\cdots |A_2| \cdot 2^{-8HdpL \cdot x_0 (n_1\cdots n_{L-1})-Hd(d+1)p(a_1+a_2)}\notag \\
\geq &~ |A_{L}|\cdots |A_2| \cdot 2^{-{4\sqrt{K}} x_0 (n_1\cdots n_{L-1})} = |A_{L}|\cdots| A_2| /\Delta_2 \label{eq:initial2}
\end{align}
Here, the second step follows from the choice of parameters (see Eq.~\eqref{eq:parameter1}\eqref{eq:parameter3}\eqref{eq:hybrid-parameter}) and the last step follows from the definition of $\Delta_2$ (see Eq.~\eqref{eq:def-delta}). 

Combining Lemma \ref{lem:initial} and Eq.~\eqref{eq:initial2}, we conclude the proof for the base case $\ell=2$.

\subsection{Inductive step} 
\label{sec:induction}
Suppose~\cref{lem:main} holds up to $\ell \in [2 : L-1]$. 
We prove it continues to hold for $\ell+1$. 

The key insight is that $Z_{\ell}$ is indistinguishable to players $[-1:\ell-1]$ after $\ell$ epochs, when these players take input from $Z_{< \ell}$. 
Hence, the $(\ell+1)$-th epoch transcripts to players $[-1:\ell-1]$ are independent of the choice of $z_{\ell}\in Z_{\ell}$. 

\textbf{Step 1: Choosing the set \texorpdfstring{$Z_{\ell}$}{ }. }
\label{sec:choose_z_l}
Recall that the size of $R_{\geq \ell}$ satisfies 
$|R_{\geq \ell}| \geq |A_{L}| \times  \cdots \times |A_{\ell}|/\Delta_{\ell}$. 
We would like to select a rectangular subset from $R_{\geq \ell}$. 
As in \cite{chen2025theoretical}, we have the following lemma. 
\begin{lemma}[\cite{chen2025theoretical}, Lemma 4.7]
\label{lem:joint-set}
There exists a subset $S^{(\ell)} \subseteq R_{\geq \ell}$ such that 
\begin{itemize}
\item $S^{(\ell)} = S^{(\ell)}_1 \times S^{(\ell)}_2$, where $S^{(\ell)}_1 \subseteq A_{L} \times \cdots \times A_{\ell+1}$, $S^{(\ell)}_2 \subseteq A_{\ell}$, such that  
\[
|S^{(\ell)}_1| \geq |A_{L}|\cdots |A_{\ell+1}|/\Delta_{\ell}^{2x_\ell} \quad \text{and}\quad |S^{(\ell)}_2| = x_{\ell}.
\]
\item 
$|\{i_{\ell}: i_{\ell} = \wt{z}_{\ell}(\wt{w}_{\ell-1}, \wt{i}_{\ell-1}) \text{ for some }\wt{w}_{\ell-1} \in [n_{\ell-1}],  \wt{i}_{\ell-1} \in \mathcal{I}_{\ell-1}, \wt{z}_{\ell} \in S_2^{(\ell)}\}| \geq \Theta_{\ell}$. 
\end{itemize}
\end{lemma}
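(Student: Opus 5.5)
We plan to follow the two-stage argument of \cite{chen2025theoretical}, Lemma 4.7, which is purely combinatorial: it uses only the density of $R_{\ge\ell}$ and the size of $\mathcal{I}_{\ell-1}$, not the protocol, so the hybrid structure plays no role here. First we extract a large rectangle. For each $\wt{z}_{\ge \ell+1}\in A_L\times\cdots\times A_{\ell+1}$ consider its fiber $F(\wt{z}_{\ge\ell+1}) = \{\wt{z}_\ell : (\wt{z}_{\ge\ell+1},\wt{z}_\ell)\in R_{\ge\ell}\}\subseteq A_\ell$. Since $|R_{\ge\ell}|\ge |A_L|\cdots|A_\ell|/\Delta_\ell$, an averaging (Markov) argument shows that at least a $1/(2\Delta_\ell)$ fraction of the $\wt{z}_{\ge\ell+1}$ have fibers of density at least $1/(2\Delta_\ell)$ in $A_\ell$. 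Call this heavy set $H$.

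Next, we identify the ``good'' elements of $A_\ell$. Call a $x_\ell$-tuple $(u_1,\dots,u_{x_\ell})\in A_\ell^{x_\ell}$ good if the set $\{u_t(w,i): t\in[x_\ell], w\in[n_{\ell-1}], i\in\mathcal{I}_{\ell-1}\}$ has size at least $\Theta_\ell$. Since $\mathcal{I}_{\ell-1}$ has size at least $\Theta_{\ell-1}$, the domain points $(w,i)$ number at least $n_{\ell-1}\Theta_{\ell-1}$. For a uniformly random function, the images of these points are nearly distinct, because $N_{\ell-1}$ is much larger than $x_\ell n_{\ell-1}\Theta_{\ell-1}$ (from the parameter choices \eqref{eq:parameter1}, \eqref{eq:parameter3}). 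By Chernoff/balls-in-bins bounds, all but an exponentially small fraction, say $2^{-c\,x_\ell n_{\ell-1}\Theta_{\ell-1}}$, of tuples are good. We will check that $\Theta_\ell\le 8^{-L}x_\ell n_{\ell-1}\Theta_{\ell-1}$ matches \eqref{eq:def-theta}.

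Then, for each $\wt{z}_{\ge\ell+1}\in H$, the fiber has density $\ge 1/(2\Delta_\ell)$. So the tuples drawn from $F^{x_\ell}$ have density $\ge (2\Delta_\ell)^{-x_\ell}$ in $A_\ell^{x_\ell}$, and this exceeds the bad fraction, so some good tuple of distinct elements lies inside $F(\wt{z}_{\ge\ell+1})$. We therefore count pairs $(\wt{z}_{\ge\ell+1},\text{good tuple}\subseteq F)$. Pigeonholing over the at most $|A_\ell|^{x_\ell}$ tuples is too lossy. Instead we pick a uniformly random good tuple $T$ and bound $\Pr[T\subseteq F]\ge (2\Delta_\ell)^{-x_\ell}-\text{(bad fraction)}\ge \Delta_\ell^{-2x_\ell}\cdot 2$. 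The expected number of $\wt{z}_{\ge\ell+1}\in H$ with $T\subseteq F$ is then at least $|A_L|\cdots|A_{\ell+1}|/\Delta_\ell^{2x_\ell}$, so some fixed good $T$ works. Setting $S_2^{(\ell)}=T$ and $S_1^{(\ell)}=\{\wt{z}_{\ge\ell+1}: T\subseteq F(\wt{z}_{\ge\ell+1})\}$ gives the rectangle inside $R_{\ge\ell}$.

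The main obstacle is the quantitative comparison in the third step: showing that the bad fraction $2^{-\Omega(x_\ell n_{\ell-1}\Theta_{\ell-1})}$ is negligible against $(2\Delta_\ell)^{-x_\ell}=2^{-O(x_\ell\sqrt K x_0\cdots x_{\ell-2}n_1\cdots n_{L-1})}$. This requires $n_{\ell-1}\Theta_{\ell-1}\gg \sqrt K (x_0\cdots x_{\ell-2})(n_1\cdots n_{L-1})$, which should follow because $x_{\ell-1}=K^{8^{L-\ell}}$ dominates $\sqrt K\, n_\ell\cdots n_{L-1}$ times $8^{L\ell}$. The exponents were designed for exactly this, and the hybrid linear-transcript term was already absorbed into $\Delta_\ell$ via \eqref{eq:hybrid-parameter}. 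If the concentration estimate is delicate, we would fall back on quoting \cite{chen2025theoretical}, Lemma 4.7 verbatim, since our $\Delta_\ell,\Theta_\ell,x_\ell$ satisfy their hypotheses with $\sqrt K$ in place of their constant.
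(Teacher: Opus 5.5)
Your proposal is correct and takes essentially the paper's route. The paper gives no proof of its own and imports Lemma 4.7 of \cite{chen2025theoretical}. Your argument is that proof: a random good $x_\ell$-tuple in $A_\ell$, a balls-in-bins tail bound for the image of $[n_{\ell-1}]\times\mathcal{I}_{\ell-1}$, and averaging over the fibers of $R_{\geq\ell}$. Your observation that $x_{\ell-1}$ dominates $8^{L\ell}\sqrt K\, n_\ell\cdots n_{L-1}$ (indeed $x_{\ell-1}/(n_\ell\cdots n_{L-1})\ge K^4$ for $\ell\le L-1$) is exactly the check, left implicit in the paper, that the enlarged $\Delta_\ell$ (with $4\sqrt K$ in the exponent) is still beaten by the bad-tuple fraction.

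There is one arithmetic slip to fix. Multiplying $|H|\ge |A_L|\cdots|A_{\ell+1}|/(2\Delta_\ell)$ by $\Pr[T\subseteq F]\ge 2\Delta_\ell^{-2x_\ell}$ only yields $\Delta_\ell^{-2x_\ell-1}$. Either keep the stronger bound $\Pr[T\subseteq F]\ge \tfrac12(2\Delta_\ell)^{-x_\ell}$, or drop $H$ and use Jensen, $\mathbb{E}_T|S_1(T)|\ge |A_L|\cdots|A_{\ell+1}|\,\Delta_\ell^{-x_\ell}$, then subtract the bad fraction. Either way there is ample room for $\Delta_\ell^{-2x_\ell}$.
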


With Lemma \ref{lem:joint-set} in hand, we take $Z_{\ell} = S^{(\ell)}_2$ and $S_{\geq \ell+1} = S_1^{(\ell)}$. 

Next, we are going to fix the transcript $\Lambda^{(\ell+1)}$. 
Recall that we need to fix all transcripts from players $[\ell+1: L]$ to players $[-1:\ell]$ in the first $\ell+1$ epochs, when players $[-1:\ell]$ receive input from $Z_{\leq \ell} = Z_{-1} \times Z_0 \times \cdots \times Z_{\ell}$.
We proceed in a few steps.

\textbf{Step 2.1: Fixing the transcript to players \texorpdfstring{$[-1:\ell-1]$}{ } in the first \texorpdfstring{$\ell$}{l} epochs. }
\label{sec:inductive-step1}
We simply use $\Lambda^{(\ell)}$, that is, for any $\wt{z}_{\ell}\in Z_{\ell}, \ldots, \wt{z}_{i} \in Z_i$,
\begin{align}
\Lambda_{j, i}^{(\ell+1, \ell')}(\wt{z}_{\ell}, \ldots, \wt{z}_{i}) = \Lambda_{j, i}^{(\ell, \ell')}(\wt{z}_{\ell-1}, \ldots, \wt{z}_{i}).  \quad \forall j \in [\ell+1: L], i\in [-1: \ell-1], \ell' \in [\ell]  \label{eq:fix1}
\end{align}

As in \cite{chen2025theoretical}, $S_{\geq \ell+1} \subseteq A_{L}\times \cdots \times A_{\ell+1}$ is consistent with $\Lambda^{(\ell+1)}$ up to this point. 
\begin{lemma}
\label{lem:step1}
The set $S_{\geq \ell+1}$ is consistent with $\{\Lambda^{(\ell, \ell')}_{j, i}\}_{j\in [\ell+1:L], i \in [-1:\ell-1], \ell'\in [\ell]}$. Formally, for any $\wt{z}_{\geq \ell+1} \in S_{\geq \ell+1}$ and $\wt{z}_{< \ell+1} \in Z_{< \ell+1}$, one has
\begin{align*}
\Pi_{j, i}^{(\ell')}(\wt{z}_L, \ldots, \wt{z}_i) = \Lambda_{j, i}^{(\ell+1, \ell')}(\wt{z}_\ell, \ldots, \wt{z}_i). 
\end{align*}
for any $j\in [\ell+1:L], i \in [-1:\ell-1], \ell'\in [\ell]$.
\end{lemma}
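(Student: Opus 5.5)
The claim follows almost by unwinding definitions, using the induction hypothesis (\Cref{lem:main} at level $\ell$) together with the containment $S_{\geq \ell+1}\times Z_\ell \subseteq R_{\geq \ell}$ supplied by \Cref{lem:joint-set}.

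Fix $\wt{z}_{\geq \ell+1} \in S_{\geq \ell+1}$ and $\wt{z}_{<\ell+1} = (\wt{z}_{-1},\ldots,\wt{z}_{\ell}) \in Z_{<\ell+1}$. Also fix $j \in [\ell+1:L]$, $i\in[-1:\ell-1]$ and $\ell'\in[\ell]$.

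\begin{enumerate}
\item Since $S^{(\ell)} = S^{(\ell)}_1 \times S^{(\ell)}_2 \subseteq R_{\geq \ell}$, with $S_{\geq \ell+1}=S^{(\ell)}_1$ and $Z_\ell = S^{(\ell)}_2$, the concatenation $(\wt{z}_L,\ldots,\wt{z}_{\ell+1},\wt{z}_\ell)$ lies in $R_{\geq \ell}$.
\item The remaining coordinates satisfy $\wt{z}_{\ell-1}\in Z_{\ell-1},\ldots,\wt{z}_i\in Z_i$, because $Z_{<\ell+1} = Z_{<\ell}\times Z_\ell$ and the sets $Z_{-1},\dots,Z_{\ell-1}$ are unchanged from level $\ell$.
\item Since $j\in[\ell+1:L]\subseteq[\ell:L]$, $i\in[-1:\ell-1]$ and $\ell'\in[\ell]$, the consistency of soft transcripts in \Cref{lem:main} at level $\ell$ applies to this tuple. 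It gives
\[
\Pi_{j,i}^{(\ell')}(\wt{z}_L,\ldots,\wt{z}_i)=\Lambda_{j,i}^{(\ell,\ell')}(\wt{z}_{\ell-1},\ldots,\wt{z}_i).
\]
(The statement of \Cref{lem:main} writes $\wt{z}_{\geq \ell}\in R_{\geq L}$; this is evidently a typo for $R_{\geq \ell}$.)
\item By the definition \eqref{eq:fix1}, the right-hand side equals $\Lambda_{j,i}^{(\ell+1,\ell')}(\wt{z}_\ell,\ldots,\wt{z}_i)$, which is the claimed identity.
\end{enumerate}

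The only point needing care is checking that the transcript $\Pi_{j,i}^{(\ell')}$ depends only on $z_L,\ldots,z_i$, and that the quantifier range of the induction hypothesis covers every pair $(j,i)$ in question. Both follow from the notation section and from the containment $S^{(\ell)}\subseteq R_{\geq\ell}$. The argument never uses that the transcripts are independent of $\wt{z}_\ell$; only the later step for epoch $\ell+1$ needs that. I therefore expect no real obstacle beyond careful index bookkeeping.
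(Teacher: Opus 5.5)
Your proof is correct and is the argument the paper intends; the paper states the lemma without proof, deferring to \cite{chen2025theoretical}. The containment $S_{\geq \ell+1}\times Z_\ell \subseteq R_{\geq \ell}$ from Lemma~\ref{lem:joint-set}, the level-$\ell$ soft-transcript consistency in Lemma~\ref{lem:main}, and the definition \eqref{eq:fix1} combine immediately, and you are right that $R_{\geq L}$ in the statement of Lemma~\ref{lem:main} should read $R_{\geq \ell}$.
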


\textbf{Step 2.2: Fixing the transcript to players \texorpdfstring{$[-1:\ell-1]$}{ } at the \texorpdfstring{$(\ell+1)$}{ }-th epoch. }  
\label{sec:inductive-step2} 
Our key insight is that $Z_{\ell}$ is indistinguishable to players $[-1:\ell-1]$ when they take input from $Z_{\leq \ell-1}$. Hence, their transcripts are independent of the choice of $z_{\ell}\in Z_{\ell}$.
We consider
\begin{align*}
\Phi^{(\ell+1)} =  \left(\Phi_{j, i}^{(\ell+1)}\right)_{j \in [\ell+1:L], i\in [-1:\ell-1]}
\end{align*}
where 
\begin{align*}
\Phi_{j, i}^{(\ell+1)} = \left(\Phi_{j, i}^{(\ell+1)}(\wt{z}_{\ell-1}, \ldots, \wt{z}_i)\right)_{\wt{z}_{\ell-1} \in Z_{\ell} \ldots, \wt{z}_i \in Z_{i}} \quad \text{and} \quad \Phi_{j, i}^{(\ell+1)}(\wt{z}_{\ell-1}, \ldots, \wt{z}_i) \in \mathsf{domain}(\Pi_{j, i}^{(\ell+1)})
\end{align*}
Comparing $\Lambda_{j,i}^{(\ell+1, \ell+1)}$ with $\Phi_{j, i}^{(\ell+1)}$, we note that $\Phi_{j, i}^{(\ell+1)}$ is independent of $\wt{z}_{\ell} \in Z_{\ell}$. 
For any $\Phi^{(\ell+1)}$, define 
\begin{align}
S(\Phi^{(\ell+1)}) := \left\{
\begin{matrix}
(\wt{z}_{L}, \ldots, \wt{z}_{\ell+1}) \in S_{\geq \ell+1}: \\
\Pi_{j, i}^{(\ell+1)}(\wt{z}_{L}, \ldots, \wt{z}_{i}) = \Phi_{j, i}^{(\ell+1)}({\wt{z}_{\ell-1}, \ldots, \wt{z}_{i}})\\
\forall \wt{z}_{\ell} \in Z_\ell, \ldots, \wt{z}_{i} \in Z_i, j\in [\ell+1:L], i\in [-1:\ell-1]
\end{matrix} 
\right\} \label{eq:s-phi}
\end{align}
In words, $S(\Phi^{(\ell+1)})$ includes all $(\wt{z}_{L}, \ldots, \wt{z}_{\ell+1}) \in S_{\geq \ell+1}$ that are consistent with the transcript $\Phi^{(\ell + 1)}$. 
The proof of the following lemma differs from \cite{chen2025theoretical} because in our hybrid communication model, we must account for linear transcripts. 
\begin{lemma}
\label{lem:key-observation}
We have
\[
\bigcup_{\Phi^{(\ell+1)}} S(\Phi^{(\ell+1)}) = S_{\geq \ell+1}. 
\]
\end{lemma}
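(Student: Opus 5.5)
The inclusion $\bigcup_{\Phi^{(\ell+1)}} S(\Phi^{(\ell+1)}) \subseteq S_{\geq \ell+1}$ holds by definition \eqref{eq:s-phi}, so the real content is the reverse inclusion. Fix $\wt{z}_{\geq \ell+1} \in S_{\geq \ell+1}$. I will show that for every $j\in[\ell+1:L]$, $i\in[-1:\ell-1]$ and every $\wt{z}_{\ell-1}\in Z_{\ell-1},\ldots,\wt{z}_i\in Z_i$, the value $\Pi^{(\ell+1)}_{j,i}(\wt{z}_L,\ldots,\wt{z}_{\ell+1},\wt{z}_\ell,\wt{z}_{\ell-1},\ldots,\wt{z}_i)$ is the same for all $\wt{z}_\ell\in Z_\ell$. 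Given this, I define $\Phi^{(\ell+1)}_{j,i}(\wt{z}_{\ell-1},\ldots,\wt{z}_i)$ to be that common value. This pattern is well defined and independent of $\wt{z}_\ell$, and $\wt{z}_{\geq\ell+1}\in S(\Phi^{(\ell+1)})$ by construction.

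Two earlier facts do the work. First, $S_{\geq\ell+1}\times Z_\ell = S^{(\ell)}_1\times S^{(\ell)}_2\subseteq R_{\geq\ell}$ by Lemma \ref{lem:joint-set}. Second, $(R_{\geq \ell},Z_{<\ell})$ is an indistinguishable decomposition by the induction hypothesis (Lemma \ref{lem:main} at level $\ell$). Together they give the following: when $\wt{z}_{\geq\ell+1}$ and $\wt{z}_{<\ell}\in Z_{<\ell}$ are fixed and only $\wt{z}_\ell\in Z_\ell$ varies,
\begin{itemize}
\item every soft transcript $\Pi^{(\ell')}_{j',i'}$ with $j'\in[\ell:L]$, $i'\in[-1:\ell-1]$, $\ell'\in[\ell]$ is unchanged;
\item every linear transcript $\Sigma^{(\ell'),m}_{i'+1}$ with $i'\in[-1:\ell-1]$, $\ell'\in[\ell]$ is unchanged. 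This includes $\Sigma_\ell$, the linear transcript sent by player $\ell$ to player $\ell-1$.
\end{itemize}

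Next I check that the information states $X^{(\ell)}_i$ of the low players $i\in[-1:\ell-1]$ do not depend on $\wt{z}_\ell$. I argue by induction over the epochs $\ell'\le\ell$, and within an epoch over the order of the rounds (the soft round, then the linear rounds $m=0,\dots,a_{\ell'}-1$). The state $X_i$ is built from four ingredients:
\begin{itemize}
\item its own input $\wt{z}_i$;
\item soft transcripts from players $j'\ge\ell$, which are unchanged by the indistinguishability above;
\item soft transcripts from players $j'\in[i+1:\ell-1]$, which are functions of $X_{j'}$ and $X_i$ at the previous epoch, hence unchanged by the inductive hypothesis;
\item the linear transcript $\Sigma_{i+1}$. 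For $i=\ell-1$ this is unchanged by indistinguishability. For $i<\ell-1$ it is computed by player $i+1$ from its current state, which is unchanged by the inductive hypothesis.
\end{itemize}

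For the high players $j\in[\ell+1:L]$, I use forgetfulness. The state $X^{(\ell)}_j$ consists of $\wt{z}_j$, the soft transcripts $\Pi_{k,j}$ with $k>j$, and the linear transcripts $\Sigma_{j+1}$. By the dependence remarks in the Notation paragraph, these depend only on $\wt{z}_{\geq j}$, and hence not on $\wt{z}_\ell$. Since $\Pi^{(\ell+1)}_{j,i}$ is a deterministic function of $X^{(\ell)}_j$ and $X^{(\ell)}_i$, it follows that it is independent of $\wt{z}_\ell$, which finishes the argument.

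The main obstacle, and the place where this proof departs from \cite{chen2025theoretical}, is the linear-transcript chain. The messages $\Sigma$ travel downward from player $L$ through player $\ell$ to player $\ell-1$, so $z_\ell$ could in principle leak into the low players' states. The leak is blocked only because the Main Lemma's consistency of linear transcripts fixes $\Xi_\ell$ on all of $R_{\geq\ell}$. I will therefore check carefully that the induction over linear rounds goes through in the correct sequential order, from player $\ell-1$ down to player $-1$ within each round.
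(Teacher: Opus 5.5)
Your proposal is correct and follows essentially the same route as the paper. You fix $\wt{z}_{\geq \ell+1}\in S_{\geq\ell+1}$ and use $S_{\geq\ell+1}\times Z_\ell\subseteq R_{\geq\ell}$ together with the inductive consistency of $\Lambda^{(\ell)}$ and $\Xi_\ell$ to show that the states $X^{(\ell)}_i$ of players $i\in[-1:\ell-1]$ do not depend on $z_\ell$, while the states of players $j\ge\ell+1$ never depend on it; the only difference is bookkeeping, in that you induct forward in time (epochs, then soft and linear rounds, downward over players within a round) rather than by the paper's downward induction on the player index $r$, which makes explicit the dependence of $\Pi^{(\ell'')}_{t,r}$ on earlier-epoch states that the paper leaves implicit.
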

\begin{proof}
It suffices to prove that, for any $(\wt{z}_{L}, \ldots, \wt{z}_{\ell+1}) \in S_{\geq \ell+1}$, $\wt{z}_{\ell-1}\in Z_{\ell-1}, \ldots, \wt{z}_{i} \in Z_{i}$, $j \in [\ell+1:L], i\in  [-1: \ell-1]$, the transcript $\Pi_{j, i}^{(\ell+1)}(\wt{z}_{L}, \ldots,\wt{z}_{\ell+1}, z_{\ell},\wt{z}_{\ell-1},\ldots  \wt{z}_{i})$ is the same for every $z_{\ell} \in Z_{\ell}$.

To prove this, note that the transcript $\Pi_{j, i}^{(\ell+1)}$ is determined by the information states $X^{(\ell)}_{j}$ and $X^{(\ell)}_{i}$. 
It is clear that $X^{(\ell)}_{j}$ does not change with the choice of $z_{\ell} \in Z_{\ell}$ since $j > \ell$. It remains to prove that $X^{(\ell)}_{i}$ also does not change with $z_{\ell} \in Z_{\ell}$. We prove that all information states 
$\{X_{r}^{(\ell'),m}\}_{r \in [i:\ell-1], \ell'\in [\ell]}$ do not change with $z_{\ell}$.

Recall we have fixed the values of $(\wt{z}_{L}, \ldots, \wt{z}_{\ell+1}) \in S_{\geq \ell+1}$ and $\wt{z}_{\ell-1} \in Z_{\ell-1}, \ldots, \wt{z}_{i} \in Z_{i}$. 
We prove this by downward induction on $r$, from $r = \ell-1$ to $r = i$. 
When $r = \ell-1$, the information state $X_{\ell-1}^{(\ell')}$ is determined by $\wt{z}_{\ell-1}$, $\Pi_{t, \ell-1}^{(\ell'')}(\wt{z}_{L}, \ldots, \wt{z}_{\ell+1},z_{\ell}, \wt{z}_{\ell-1} )$, and $\Sigma_{\ell}^{(\ell''),m}(\wt{z}_{L}, \ldots, \wt{z}_{\ell+1},z_{\ell}, \wt{z}_{\ell-1})$ ($t \in [\ell: L], \ell'' \in [\ell'], m\in [0,a_{\ell''}-1]$). 

Since $(\wt{z}_L, \ldots, \wt{z}_{\ell+1},z_{\ell}) \in R_{\geq \ell}$ for every $z_{\ell}\in Z_{\ell}$, we have that $\Pi_{t, \ell-1}^{(\ell'')}(\wt{z}_{L}, \ldots, \wt{z}_{\ell+1},z_{\ell}, \wt{z}_{\ell-1} ) = \Lambda_{t, \ell-1}^{(\ell, \ell'')}(\wt{z}_{\ell-1})$ and $\Sigma_{\ell}^{(\ell''),m}(\wt{z}_{L}, \ldots, \wt{z}_{\ell+1},z_{\ell}, \wt{z}_{\ell-1}) = \Xi_{\ell}^{(\ell''),m}(\wt{z}_{\ell-1})$, which are the same for every $z_{\ell}\in Z_{\ell}$. This finishes the proof of the base case. 
Now suppose the assertion holds for $r+1$. Then, for $r$, $X_{r}^{(\ell')}$ is determined by $\wt{z}_{r}$, $\Pi_{t, r}^{(\ell'')}(\wt{z}_{L}, \ldots, \wt{z}_{\ell+1}, z_{\ell}, \wt{z}_{\ell-1}, \ldots, \wt{z}_{r})$, and $\Sigma_{r+1}^{(\ell''),m}(\wt{z}_{L}, \ldots, \wt{z}_{\ell+1}, z_{\ell}, \wt{z}_{\ell-1}, \ldots, \wt{z}_{r})$ ($t \in [r+1: L], \ell''\in [\ell'], m\in [0,a_{\ell''}-1]$). 

For $t \in [\ell: L]$, since $(\wt{z}_L, \ldots, \wt{z}_{\ell+1}, z_{\ell}) \in R_{\geq \ell}$, we have $\Pi_{t, r}^{(\ell'')}(\wt{z}_{L}, \ldots, \wt{z}_{\ell+1}, z_{\ell}, \wt{z}_{\ell-1}, \ldots, \wt{z}_{r}) = \Lambda_{t, r}^{(\ell, \ell'')}(\wt{z}_{\ell-1}, \ldots, \wt{z}_{r})$, which is the same for every $z_{\ell}\in Z_{\ell}$. 
For $t \in [r+1: \ell-1]$, we have proved that $X_{t}^{(\ell'')}$ are the same for every $z_{\ell}\in Z_{\ell}$, so does $\Pi_{t, r}^{(\ell'')}(\wt{z}_{L}, \ldots, \wt{z}_{\ell+1}, z_{\ell}, \wt{z}_{\ell-1}, \ldots, \wt{z}_{r})$. 
Moreover, $\Sigma_{r+1}^{(\ell''),m}(\wt{z}_{L}, \ldots, \wt{z}_{\ell+1}, z_{\ell}, \wt{z}_{\ell-1}, \ldots, \wt{z}_{r})$ depends only on $X_{r+1}^{(\ell''),m}$, which is the same for every $z_{\ell}\in Z_{\ell}$. 
This completes the induction and finishes the proof. 
\end{proof}

Now as in \cite{chen2025theoretical}, we obtain

\begin{lemma}
\label{lem:size1}\label{lem:step2}
There exists $\wt{\Phi}^{(\ell+1)}$ such that 
\begin{align*}
|S(\wt{\Phi}^{(\ell+1)})| \geq |A_{L}| \cdots  |A_{\ell+1}| \cdot  2^{-2\sqrt{K} \cdot (x_0\cdots x_{\ell-1})\cdot (n_1\cdots n_{L-1})}
\end{align*}
The set $T_{\geq \ell+1} = S(\wt{\Phi}^{(\ell+1)}) \subseteq S_{\geq \ell+1}$ is consistent with the transcripts $(\Lambda_{j, i}^{(\ell+1, \ell')})_{i \in [-1:\ell-1], j\in [\ell+1: L],\ell'\in [\ell+1]}$. 
Formally, for any $(\wt{z}_{L}, \ldots, \wt{z}_{\ell+1}) \in T_{\geq \ell+1}$ and $\wt{z}_{<\ell+1} \in Z_{<\ell+1}$, one has
\begin{align}
\Pi_{j, i}^{(\ell')}(\wt{z}_L, \ldots, \wt{z}_i) = \Lambda_{j, i}^{(\ell+1, \ell')}(\wt{z}_\ell, \ldots, \wt{z}_i). \label{eq:step2-1}
\end{align}
for any $j\in [\ell+1:L], i \in [-1:\ell-1], \ell'\in [\ell+1]$.
Moreover, we have
\begin{align}
|T_{\geq \ell+1}| \geq |A_{L}\times \cdots \times A_{\ell+1}| \cdot  2^{-2\sqrt{K} \cdot (x_0\cdots x_{\ell-1})\cdot (n_1\cdots n_{L-1})}.
\label{eq:step2-2}
\end{align}
\end{lemma}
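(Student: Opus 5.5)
The plan is a pigeonhole argument over all possible patterns $\Phi^{(\ell+1)}$, with Lemma \ref{lem:key-observation} as the key input. By that lemma the sets $S(\Phi^{(\ell+1)})$ cover $S_{\geq \ell+1}$. So some pattern $\wt{\Phi}^{(\ell+1)}$ satisfies
\[
|S(\wt{\Phi}^{(\ell+1)})| \geq |S_{\geq \ell+1}| / \#\{\Phi^{(\ell+1)}\}.
\]
Then I define $\Lambda^{(\ell+1,\ell+1)}_{j,i}(\wt{z}_\ell,\ldots,\wt{z}_i) := \wt{\Phi}^{(\ell+1)}_{j,i}(\wt{z}_{\ell-1},\ldots,\wt{z}_i)$, which is independent of $\wt{z}_\ell$. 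The consistency claim \eqref{eq:step2-1} then splits into two cases. For $\ell'\in[\ell]$ it follows from Lemma \ref{lem:step1}, since $T_{\geq \ell+1}\subseteq S_{\geq \ell+1}$. For $\ell'=\ell+1$ it holds by the definition \eqref{eq:s-phi} of $S(\cdot)$.

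The size bound is where the work lies. I first count the patterns. A pattern $\Phi_{j,i}^{(\ell+1)}$ assigns a message of $2Hdp\cdot m_{(i)}$ bits to each tuple $(\wt{z}_{\ell-1},\ldots,\wt{z}_i)\in Z_{\ell-1}\times\cdots\times Z_i$. Here $m_{(i)}=N_{i-1}$ for $i\ge1$, and $m_{(0)}=m_{(-1)}=1$ (with $Z_{-1}$ of size $n_1\cdots n_{L-1}$ when $i=-1$). Using $|Z_r|=x_r$, the total number of bits is
\[
\sum_{j\in[\ell+1:L]}\sum_{i\in[-1:\ell-1]} 2Hdp\, m_{(i)}\, x_i\cdots x_{\ell-1}
\]
(with the $Z_{-1}$ factor included for $i=-1$). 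This is at most $2HdpL^2\max_i m_{(i)}x_i\cdots x_{\ell-1}(n_1\cdots n_{L-1})$.

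Next I bound each term against the target $\sqrt{K}(x_0\cdots x_{\ell-1})(n_1\cdots n_{L-1})$. For $i=-1,0$ this is immediate. For $i\ge1$ the term is $N_{i-1}x_i\cdots x_{\ell-1}$, and I must compare it with $x_0\cdots x_{\ell-1}\cdot n_1\cdots n_{L-1}$. This reduces to showing $m\, n_1\cdots n_{i-1}\le x_0\cdots x_{i-1}\,n_1\cdots n_{L-1}\cdot$(slack). Since $m=K^{\sum_{\ell\le L-1}8^\ell+1}$ is much larger than $x_0=K^{8^{L-1}}$, this needs the $n_i,\ldots,n_{L-1}$ factors to compensate. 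I expect this parameter inequality to be the main obstacle, and the exponent bookkeeping may be tight. If it fails as stated, I would follow \cite{chen2025theoretical} and instead bound the count by the message lengths actually realized on $Z_{<\ell}$.

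Finally, I combine the count with Lemma \ref{lem:joint-set}, which gives $|S_{\geq \ell+1}|\ge |A_L|\cdots|A_{\ell+1}|/\Delta_\ell^{2x_\ell}$. This leaves the target bound short by the factor $\Delta_\ell^{2x_\ell}=2^{8\sqrt{K}x_0\cdots x_{\ell-1}(n_1\cdots n_{L-1})}$. That exponent is larger than the stated $2\sqrt{K}$, so the constant in the statement would have to absorb it. I would handle this by folding $\Delta_\ell^{2x_\ell}$ and the pattern count together into an exponent of order $\sqrt{K}(x_0\cdots x_{\ell-1})(n_1\cdots n_{L-1})$, using $HdpL^2\le\sqrt{K}/\text{poly}$, and adjusting the constant as needed for \eqref{eq:step2-2}.
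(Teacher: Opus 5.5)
Your skeleton is the intended argument. The paper only writes ``as in \cite{chen2025theoretical}'', and that argument is exactly your pigeonhole over $\Phi^{(\ell+1)}$: the covering comes from Lemma~\ref{lem:key-observation}, and consistency comes from Lemma~\ref{lem:step1} together with the definition of $S(\cdot)$. The gap is in the final size estimate. You write $\Delta_\ell^{2x_\ell}=2^{8\sqrt{K}x_0\cdots x_{\ell-1}(n_1\cdots n_{L-1})}$, but by \eqref{eq:def-delta} it is actually $\Delta_\ell^{2x_\ell}=2^{8\sqrt{K}\,x_\ell\,(x_0\cdots x_{\ell-2})(n_1\cdots n_{L-1})}$. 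The new factor is $x_\ell$, not $x_{\ell-1}$. Since $x_{\ell-1}=K^{8^{L-\ell}}=x_\ell^{8}\ge 8x_\ell$, this gives $\Delta_\ell^{2x_\ell}\le 2^{\sqrt{K}(x_0\cdots x_{\ell-1})(n_1\cdots n_{L-1})}$. So passing to $S_{\ge\ell+1}$ costs one $\sqrt{K}$ unit, the pattern count costs the other, and you land exactly on $2\sqrt{K}$.

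Your proposed repair, ``adjusting the constant'', does not prove the stated lemma, and it is not harmless. Lemma~\ref{lem:size2} spends a third unit and the linear transcripts a fourth (via \eqref{eq:hybrid-parameter}). Together these exactly exhaust $\log_2\Delta_{\ell+1}=4\sqrt{K}(x_0\cdots x_{\ell-1})(n_1\cdots n_{L-1})$. Under your arithmetic, a constant $c$ in $\Delta_\ell$ would reappear as more than $2c$ in $\Delta_{\ell+1}$, so no fixed constant could close the induction. That should have signalled that the computation was off.

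The parameter inequality you flagged as the main obstacle, and left open with a vague fallback, holds with a large margin. For $i\ge 1$,
\[
\log_K\left(x_0\cdots x_{i-1}\,n_i\cdots n_{L-1}\right)-\log_K m=\frac{3\cdot 8^{L-i}-10}{7}\ge 26,
\]
because $i\le \ell-1\le L-2$. Hence each term $2Hdp\,N_{i-1}x_i\cdots x_{\ell-1}$ is at most $2Hdp\,K^{-26}(x_0\cdots x_{\ell-1})(n_1\cdots n_{L-1})$. The $i=-1$ terms are the largest, equal to $2Hdp(x_0\cdots x_{\ell-1})(n_1\cdots n_{L-1})$. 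There are at most $L^2$ pairs $(j,i)$, so the total number of bits is at most $2HdpL^2(x_0\cdots x_{\ell-1})(n_1\cdots n_{L-1})$. Since $2HdpL^2\le\sqrt{K}=8^{L^2}(HdpL)^4$, the number of patterns is at most $2^{\sqrt{K}(x_0\cdots x_{\ell-1})(n_1\cdots n_{L-1})}$, which is the bound the pigeonhole step needs.
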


\textbf{Step 2.3: Fixing the transcript to player \texorpdfstring{$\ell$}{l}. } 
\label{sec:inductive-step3} 
This follows a greedy selection strategy. 
Let 
\begin{align*}
\Psi = \left(\Psi_{j, \ell}^{(\ell')}(\wt{z}_{\ell})\right)_{j\in [\ell+1:L], \ell' \in [\ell+1], \wt{z}_{\ell} \in Z_{\ell}}
\quad
\text{where}
\quad 
\Psi_{j, \ell}^{(\ell')}(\wt{z}_{\ell}) \in \mathsf{domain}(\Pi_{j, \ell}^{(\ell')})
\end{align*}
Define 
\begin{align}
T(\Psi):= \left\{
\begin{matrix}
(\wt{z}_{L}, \ldots, \wt{z}_{\ell+1}) \in T_{\geq \ell+1}: \\
\Pi_{j, \ell}^{(\ell')}(\wt{z}_{L}, \ldots, \wt{z}_{\ell}) = \Psi_{j, \ell}^{(\ell')}(\wt{z}_{\ell}) \quad \forall \wt{z}_{\ell} \in Z_\ell, \ell'\in [\ell+1], j \in [\ell+1:L]
\end{matrix} 
\right\} \label{eq:t-psi}
\end{align}

As in \cite{chen2025theoretical}, we can upper bound the number of different $\Psi$ and use the pigeonhole principle to obtain the following lemma. 
\begin{lemma}
\label{lem:size2}
The total number of $\Psi$ is at most $2^{\sqrt{K} \cdot (x_0\cdots x_{\ell-1})\cdot (n_1\cdots n_{L-1})}$. 
Hence, there exists $\wt{\Psi}$ such that $|T(\wt{\Psi})| \geq |A_{L}|  \cdots |A_{\ell+1}|2^{- 3\sqrt{K} \cdot (x_0\cdots x_{\ell-1})\cdot (n_1\cdots n_{L-1})}$.
\end{lemma}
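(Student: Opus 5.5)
The plan is to count the patterns $\Psi$ and then apply the pigeonhole principle, combining with Lemma \ref{lem:size1}. First I bound the number of distinct $\Psi$. A pattern $\Psi$ assigns a value in $\mathsf{domain}(\Pi_{j,\ell}^{(\ell')})$ to each triple $(j,\ell',\wt{z}_\ell)$ with $j\in[\ell+1:L]$, $\ell'\in[\ell+1]$ and $\wt{z}_\ell\in Z_\ell$. Each soft transcript to player $\ell$ has length $2Hdp\cdot m_{(\ell)} = 2Hdp\, N_{\ell-1}$ bits. There are at most $L$ choices of $j$, at most $L$ choices of $\ell'$, and $|Z_\ell| = x_\ell$ choices of $\wt{z}_\ell$. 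Hence the number of $\Psi$ is at most
\[
2^{2Hdp L^2 \cdot N_{\ell-1}\, x_\ell}.
\]

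Next I compare the exponent with $\sqrt{K}\,(x_0\cdots x_{\ell-1})(n_1\cdots n_{L-1})$. This is the only step needing care. Write $N_{\ell-1} = m\, n_1\cdots n_{\ell-1}$, so the exponent equals $2HdpL^2\, m\, x_\ell\,(n_1\cdots n_{\ell-1})$, and I need
\[
2HdpL^2\, m\, x_\ell \le \sqrt{K}\,(x_0\cdots x_{\ell-1})(n_\ell\cdots n_{L-1}).
\]
Everything is a power of $K$ except the factor $2HdpL^2$, which is at most $K^{1/8}$ by the definition $K=(HdpL)^8 8^{2L^2}$. So it suffices to compare exponents of $K$. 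The left side is $\tfrac18+\sum_{t=0}^{L-1}8^t+1+8^{L-\ell-1}$. The right side is $\tfrac12+\sum_{t=0}^{\ell-1}8^{L-t-1}+\sum_{t=\ell}^{L-1}4\cdot 8^{L-t-1}$.

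Reindexing, $\sum_{t=0}^{L-1}8^t=\sum_{t=0}^{L-1}8^{L-t-1}$. Cancelling the terms with $t<\ell$ leaves the requirement
\[
\tfrac18+1+8^{L-\ell-1}+\sum_{t=\ell}^{L-1}8^{L-t-1}\le \tfrac12+4\sum_{t=\ell}^{L-1}8^{L-t-1}.
\]
This is equivalent to $\tfrac58+8^{L-\ell-1}\le 3\sum_{t=\ell}^{L-1}8^{L-t-1}$. It holds because the $t=\ell$ term alone contributes $3\cdot 8^{L-\ell-1}\ge 8^{L-\ell-1}+2$, using $\ell\le L-1$. I expect this exponent bookkeeping, particularly checking the direction of the $m$ and $n_\ell$ powers, to be the main obstacle. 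If the crude count were too lossy, I would instead note that only the soft transcripts are indexed by $\wt z_\ell$, which is what the count already uses.

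Finally, I apply the pigeonhole principle. The sets $T(\Psi)$ partition $T_{\ge\ell+1}$, since each element of $T_{\ge\ell+1}$ determines its own transcripts and therefore lies in exactly one $T(\Psi)$. Some $\wt\Psi$ therefore satisfies $|T(\wt\Psi)|\ge |T_{\ge\ell+1}|\cdot 2^{-\sqrt K (x_0\cdots x_{\ell-1})(n_1\cdots n_{L-1})}$. Substituting \eqref{eq:step2-2} gives the claimed bound with exponent $3\sqrt K(\cdots)$.
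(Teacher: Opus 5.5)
Your proposal is correct and takes the same route the paper intends; the paper simply defers to \cite{chen2025theoretical}. You bound the number of soft-transcript patterns to player $\ell$ by $2^{2HdpL^2 N_{\ell-1}x_\ell}$, compare powers of $K$ to show this is at most $2^{\sqrt{K}(x_0\cdots x_{\ell-1})(n_1\cdots n_{L-1})}$, and apply pigeonhole over the partition $\{T(\Psi)\}$ of $T_{\ge \ell+1}$ together with \eqref{eq:step2-2}. Your explicit exponent bookkeeping, which the paper omits, checks out.
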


Given Lemma \ref{lem:size2}, we fix the transcripts from players $j \in [\ell+1: L]$ to players $\ell$ during the first $\ell+1$ epochs using $\wt{\Psi}$. In particular, we take
\begin{align}
\Lambda_{j, \ell}^{(\ell+1, \ell')}(\wt{z}_{\ell}) = \wt{\Psi}_{j, \ell}^{(\ell')}(\wt{z}_{\ell}),  \quad  \forall j\in [\ell+1:L], \ell'\in [\ell+1], \wt{z}_{i}\in Z_{i}. \label{eq:fix3}
\end{align}

Next, we fix $\left(\Xi_{\ell+1}^{\ell'}\right)_{\ell'\in[\ell+1]}$. The total number of choices is $2^{2Hdp(a_1+a_2+\cdots+a_{\ell+1})}$, so there exists a choice for which the size of its consistency set is at least 
\[
|A_{L}|  \cdots |A_{\ell+1}|\cdot 2^{- 3\sqrt{K} \cdot (x_0\cdots x_{\ell-1})\cdot (n_1\cdots n_{L-1}) - Hd(d+1)p(a_1+a_2+\cdots+a_{\ell+1})} \ge |A_{L}|  \cdots |A_{\ell+1}|/\Delta_{\ell+1}
\]
by \eqref{eq:hybrid-parameter}. 
We can then take $R_{\geq\ell+1}$ to be this consistency set, and this completes the induction step. 

\section{Upper bound for full attention}\label{sec:upper-bound}


This section provides constructive proofs for the upper bounds stated in Theorems \ref{thm:main-func-eva}, \ref{thm:main-perm-com}, \ref{thm:main-func-eva-log}, and \ref{thm:main-perm-com-log}. Specifically, we design Transformer decoders equipped with full attention that successfully solve the respective tasks. Central to our design is a retrieval head mechanism, adapted from \cite{chen2025theoretical}. We note that an alternative, more implicit construction leveraging nearly orthogonal vectors \cite{bhattamishra2024separations} could similarly be employed.


\begin{construction}{Retrieval task}
\textbf{Input.} For the first $n$ input tokens $x_1,\cdots,x_n$, each token $x_i$ is composed of vectors $a_i \in \{0,1\}^D$ and $b_{i} \in \{0,1\}^{D}$, and the last token $x_{n+1}$ contains a query vector $a$. 

\textbf{Task.} Find the position $i\in[n]$ such that $a_i = a$, and return the corresponding value $b_i$. 

\textbf{Output.} If a unique $i$ satisfies $a_i = a$, output $b_i$. Otherwise, the output can be arbitrary.
\end{construction}

We now detail the implementation of this retrieval operation using a single attention head. 

\begin{proof}[Implementation of the retrieval head]
    We set the value projection $V$ to be $b_i$, and the key projection $K$ for position $i \in [n]$ to be $\log^2(n) \cdot (a_i, \vec{1}-a_i)$ for position $i\in[n]$, 
where $\vec{1} \in \{0,1\}^D$ denotes the all-ones vector of length $D$, and $\vec{1} - a_i$ denotes element-wise subtraction;  
the query projection $Q$ at position $n+1$ is taken to be $\log^2(n)\cdot (a, \vec{1}-a)$. 
The attention score (before softmax) satisfies
\[
\langle Qx_{n+1}^{(\ell)}, Kx_{i}^{(\ell)}\rangle = \left\{
\begin{matrix}
    \log^2 (n)D  & a_i = a\\
    \leq \log^2(n) D - \log^2(n) & a_i \neq a .
\end{matrix}
\right.
\]
Hence, if there is exactly one position $i\in [n]$ that satisfies $a_i = a$, then the attention probabilities satisfy 
\[\alpha_{n+1,i} \ge \frac{\exp(\log^2(n))}{\exp(\log^2(n))+n-1} \ge 1-\frac{n}{n^{\log(n)}},\]
which is indistinguishable from $1$ under the assumption of precision $p=\Theta(\log(n))$, and 
\[\alpha_{n+1,j} \le \frac{1}{n^{\log(n)}},\]
for all $j\neq i$, which is indistinguishable from $0$ under the assumption of precision $p=\Theta(\log(n))$. 
We conclude that, under $p = \Theta(\log n)$-bit precision, the attention head will attend exclusively to position $i$ and retrieve the value $b_i$. 
\end{proof}

Below, we explain how this mechanism enables the solution for $\eva$ and $\PerCom$.


\textbf{Construction for $\eva$.} 
The function evaluation task can be formulated as a retrieval task. Here, the last token of the input represents the query $a=x\in[n]$. For each preceding position $i \in [n]$.(whose token encodes the value $f(i)$), we set $a_i = i$ and $b_i = f(i)$.
The objective is to identify the unique position $i$ satisfying $i = x$ and output $f(i)$. Applying the retrieval head from the previous subsection directly yields $f(x)$. 

\textbf{Construction for $\PerCom$.} 
$\PerCom$ can be viewed as $n$ retrieval tasks. 
The final $n$ tokens of the input represent the elements $\tau(1),\cdots,\tau(n)$. 
Each of these tokens provides a query $a$ for the retrieval operation at its respective position. 
For every preceding token $j \in [n]$, the $j$-th token corresponds to the $\sigma(j)$, and it sets $a_j = j$ and $b_j = \sigma(j)$. 
To calculate $\sigma(\tau(i))$, the model performs the retrieval task to find the unique position $j\in[n]$ such that $j = \tau(i)$ and then outputs the value $\sigma(j)$. 
Therefore, the retrieval head implementation accomplishes $\PerCom$. 


\section{Some Missing Proof}
\label{apd:proofs}

\begin{proof}[Proof of Lemma \ref{lem-linear-rnn}]
    Suppose that we have a linear attention head of dimension $d$ and precision $p$, with query, key, and value matrices $Q$, $K$, and $V$. 
    Let 
$S_{i} = S_{i-1} + Vx_{i} \otimes \varphi(Kx_{i}), Z_i = Z_{i-1} + \varphi(Kx_i)$ 
with $S_{0} = 0$ and $ Z_{0} = 0$, we have 
$$y_i = \frac{\varphi(Qx_{i})^{\top} S_{i}}{\varphi(Qx_{i})^{\top} Z_i}. $$
Let $\mathrm{h}_0 = (S_0,Z_0)\in\R^{2d}$, $g(x,\mathrm{h}=(S,Z)) = (S + Vx \otimes \varphi(Kx),Z + \varphi(Kx))$ and 
$$f(x,\mathrm{h}=(S,Z)) = \frac{\varphi(Qx)^{\top} S}{\varphi(Qx)^{\top} Z}$$ 
where both $g$ and $f$ are independent of the time step $i$, one easily verifies that the corresponding RNN computes the same output as the given linear attention layer. 
\end{proof}

\end{document}